\newtheorem{theorem}{Theorem}
\newtheorem{corollary}{Corollary}
\newtheorem{lemma}{Lemma}
\newtheorem{assumption}{Assumption}
\newtheorem{definition}{Definition}
\newtheorem{remark}{Remark}
\DeclareMathOperator*{\argmax}{arg\,max}
\DeclarePairedDelimiter{\ceil}{\lceil}{\rceil}
\newcommand{\etal}{\textit{et al.}}
\newcommand{\bH}{{\bf H}}
\newcommand{\bX}{{\bf X}}
\newcommand{\bV}{{\bf V}}
\newcommand{\bI}{{\bf I}}
\newcommand{\bK}{{\bf K}}
\newcommand{\bG}{{\bf G}}
\newcommand{\bN}{{\bf N}}
\newcommand{\bA}{{\bf A}}
\newcommand{\bB}{{\bf B}}
\newcommand{\bS}{{\bf \Sigma}}
\newcommand{\x}{{\bf x}}
\newcommand{\bu}{{\bf u}}
\newcommand{\bz}{{\bf z}}
\newcommand{\bh}{{\bf h}}
\newcommand{\by}{{\bf y}}
\newcommand{\bn}{{\bf n}}
\newcommand{\bk}{{\bf k}}
\newcommand{\bomega}{{\bm \omega}}
\newcommand{\balpha}{{\bm \alpha}}
\newcommand{\bvarepsilon}{{\bm \varepsilon}}
\newcommand{\bupsilon}{{\bm \upsilon}}
\newcommand{\bGamma}{{\bf \Gamma}}
\newcommand{\btheta}{{\bm \theta}}
\newcommand{\bbP}{\mathbb P}
\newcommand{\bbR}{\mathbb R}
\newcommand{\bbE}{\mathbb E}
\newcommand{\cX}{\mathcal X}
\newcommand{\cH}{\mathcal H}
\newcommand{\cR}{\mathcal R}
\newcommand{\cB}{\mathcal B}
\newcommand{\cD}{\mathcal D}
\newcommand{\cO}{\mathcal O}
\newcommand{\cI}{\mathcal I}
\newcommand{\cF}{\mathcal F}
\newcommand{\cN}{\mathcal N}
\newcommand{\cY}{\mathcal Y}
\newcommand{\cS}{\mathcal S}
\newcommand{\cA}{\mathcal A}
\newcommand{\cT}{\mathcal T}
\newcommand{\cZ}{\mathcal Z}
\newcommand{\fR}{\mathfrak R}
\def\doappendix{1}
\begin{document}

%
\runningtitle{No-Regret Algorithms for Private Gaussian Process Bandit Optimization}

%

\twocolumn[

\aistatstitle{No-Regret Algorithms for \\ Private Gaussian Process Bandit Optimization}

\aistatsauthor{Abhimanyu Dubey}

\aistatsaddress{Media Lab and Institute for Data, Systems and Society \\ Massachusetts Institute of Technology \\ \texttt{dubeya@mit.edu}} ]

\begin{abstract}
The widespread proliferation of data-driven
decision-making has ushered in a recent interest in the design of privacy-preserving algorithms. In this paper, we consider the ubiquitous problem of gaussian process (GP) bandit optimization from the lens of privacy-preserving statistics. We propose a solution for differentially private GP bandit optimization that combines a uniform kernel approximator with random perturbations, providing a generic framework to create differentially-private (DP) Gaussian process bandit algorithms. For two specific DP settings - joint and local differential privacy, we provide algorithms based on efficient quadrature Fourier feature approximators, that are computationally efficient and provably no-regret for popular stationary kernel functions. Our algorithms maintain differential privacy throughout the optimization procedure and critically do not rely explicitly on the sample path for prediction, making the parameters straightforward to release as well.
\end{abstract}
\section{Introduction}
Gaussian Process (GP) bandit optimization~\citep{srinivas2010gaussian} is a sequential decision problem that has a variety of human-centered applications, e.g., clinical drug trials~\citep{costabal2019machine, park2013bayesian, peterson2017personalized}, personalized shopping recommendations~\citep{rohde2018recogym, zhou2019leverage}, news feed ranking~\citep{agarwal2018online, letham2019bayesian, vanchinathan2014explore}. It is increasingly becoming desirable that algorithms interacting with such data maintain the privacy of the individuals whose information is used~\citep{cummings2018role}. 

GP bandit optimization involves learning a function $f$ via repeated interaction in rounds. At any round $t = 1, 2, ...$, the learner is presented with a \textit{decision set} $\cD_t \subset \bbR^d$ from which it must select an action $\x_t$ and obtain a random reward $y_t = f(\x_t) + \varepsilon_t$. The algorithm selects actions in order to minimize regret $\cR(T) = \sum_{t} [\max_{\x \in \cD_t} f(\x) - f(\x_t)]$. Algorithm design is focused on minimizing \textit{pseudoregret} $\bbE[\cR(T)]$. In deployment settings, each round corresponds to selecting a random user $i_t$. The decision set $\cD_t$ is a representation of the user' behavior and $y_t$ refers to the users response to $\x_t$. In this case, privacy refers to privacy with respect to both $(\cD_t, y_t)$~\citep{shariff2018differentially}.

Provably no-regret algorithms with differential privacy have been proposed for multi-armed bandits~\citep{tossou2015algorithms, mishra2015nearly}, linear contextual bandits~\citep{shariff2018differentially, agarwal2017price} and tabular RL~\citep{vietri2020private}. For GP optimization, however, the problem is more challenging. Most applications assume $f$ to lie in a (potentially) infinite-dimensional reproducing kernel Hilbert space (RKHS), and standard techniques for introducing privacy are inapplicable due to the~\textit{curse of dimensionality}~\citep{liu2017dimension, meeds2014gps}: the posterior mean and variance for these methods require storing the sample path $(\x_t, y_t)_t$, and are $\Omega(t)$ to evaluate. Moreover, as the learnt function itself is dependent on the sample path (containing sensitive data), privatized release of the function is also a challenge~\citep{smith2016differentially}. In this paper, we propose algorithms that guarantee differential privacy with respect to continual observation during optimization, and also the private release of learnt parameters.

\textbf{Contributions}. First, we propose a generic framework (and regret bound) for GP bandits that utilizes a finite-dimensional $\epsilon-$uniform approximation of infinte-dimensional kernels and integrates random perturbations to the GP posterior, allowing for various no-regret private GP algorithms based on the kernel approximation method and privacy guarantee required.

Next, In the joint differentially private (JDP) setting (Defn.~\ref{def:jdp}), we propose a novel \texttt{GP-UCB} algorithm (Alg.~\ref{alg:main_algo}) for stationary kernels admitting a decomposable Fourier transform (Assumption \ref{ass:decomposability}) that satisfies $(\alpha, \beta)$-JDP while obtaining $\widetilde\cO(\sqrt{T\gamma_T/\alpha})$\footnote{$\gamma_T$ is the \textit{maximum information gain}, see Definition~\ref{def:mig}.} pseudoregret. This bound matches (up to logarithmic factors) the lower bound for isotropic kernels~\citep{scarlett2017lower}, and admits an identical dependence on $\alpha$ as linear bandits~\citep{shariff2018differentially}. Thirdly, inspired by the recent interest in locally DP methods~\citep{bebensee2019local}, we present a stronger variant of JDP dubbed locally-joint differential privacy (Defn.~\ref{def:ldp}) for sequential decision-making that imposes constraints on  each user's data separately. We propose an algorithm that achieves $(\alpha, \beta)-$local JDP with $\widetilde\cO(T^{\sfrac{3}{4}}\sqrt{\gamma_T/\alpha})$ pseudoregret. We conjecture that the constraints from local JDP necessitate the $\cO(T^{\sfrac{1}{4}})$ departure from typical near-optimal regret (Remark~\ref{rem:suboptimal_ldp_regret}).

Our approach can be coarsely summarized with two steps - we first project $f$ from its (infinite-dimesional) RKHS into a finite-dimensional approximating RKHS, following which, we directly perturb the posterior mean and variance of the resulting GP (in the approximating space) to ensure privacy without the curse of dimensionality, providing \textit{provably no-regret} algorithms for private GP bandit optimization. Our approach additionally avoids the parameter release problem~\citep{smith2016differentially, kusner2015differentially} since we do not explicitly store the sample path for prediction, and rely instead on cumulative sums (Remark~\ref{rem:param_release}).

\textbf{Organization}. We first discuss crucial related work and introduce necessary notation and preliminaries, subsequent to which we introduce our general framework for \texttt{GP-UCB} using noisy approximate features. We discuss quadrature Fourier features and present our algorithm and its associated regret bounds. Next, we discuss the two models of privacy studied, and present privacy mechanisms. We defer proofs to the appendix, and present concise proof sketches in the main paper. 

\section{Related Work}
\textbf{Gaussian Process Bandits}. Gaussian Processes~\citep{williams2006gaussian} have been widely used for the bandit optimization of unknown functions in an RKHS. The seminal work of \citet{srinivas2010gaussian} introduced the nonparameteric \texttt{GP-UCB} algorithm, that introduced contextual-bandit style confidence bounds for optimisation in infinite-dimensional RKHSes. A variant of the \textit{expected improvement} decision rule~\citep{movckus1975bayesian} was proposed via the \texttt{GP-EI} algorithm~\citep{snoek2012practical}. By a stronger martingale analysis,~\citet{chowdhury2017kernelized} achieve the \texttt{IGP-UCB} algorithm, that improves \texttt{GP-UCB} regret by a factor of $\cO(\ln^{\sfrac{3}{2}}T)$. For a family of isotropic squared-exponential $d$-dimensional kernels, \citet{scarlett2017lower} establish lower bounds on the achievable regret of $\Omega(\sqrt{T(\log T)^{d+2}})$, which matches (ignoring polylogarithmic factors) the $\widetilde\cO(\sqrt{T})$ rate achieved by \texttt{IGP-UCB} and \texttt{GP-UCB}. Our work relies on the research in approximate methods for kernel approximation, which has seen a lot of recent interest. The seminal work of ~\citet{rahimi2008random} proposed random Fourier features (RFF) by a Monte-Carlo approximation of the Fourier basis, with additional work establishing finite-sample convergence rates~\citep{avron2017random}. We propose a noisy variant of the more efficient quadrature Fourier features (QFF)~\citep{munkhoeva2018quadrature} that have been previously employed in GP optimization with success~\citep{mutny2018efficient}. An alternative approach based on sampling fewer points from the algorithm's history based on matrix sketching has been proposed in \citet{calandriello2019gaussian}.

\textbf{Differentially-Private Bandit Learning}. Differentially private (DP) methods for bandit optimisation have received significant attention recently. For the multi-armed bandit case, UCB and Thompson sampling algorithms have been proposed for pure-DP~\citep{mishra2015nearly}, with subsequent improvements~\citep{tossou2015algorithms}. For the contextual linear bandit,~\citet{shariff2018differentially} introduce an algorithm that utilizes matrix perturbations that our work effectively generalizes to infinite-dimensional stationary GPs. Note that this algorithm is inapplicable for general GPs as it assumes that the features are finite-dimensional. See~\citet{basu2020differential} for a summary of regret bounds for private multi-armed bandits. For Gaussian process bandits and Bayesian Optimisation (BO), \citet{kusner2015differentially} consider the problem of \textit{releasing} GP parameters \textit{after} optimization under differential privacy constraints, by analysing the sensitivity of the final parameters. Our work handles a more challenging setting, where parameters must be private \textit{throughout} the optimisation process. An application of DP to the Gaussian process regression problem was studied in the work of \citet{smith2016differentially}, however with no regret guarantees.
\section{Preliminaries}
\textbf{Notation}. We denote vectors by lowercase solid characters, i.e., $\x$ and matrices by uppercase solid characters $\bX$. We denote the $\bS-$ellipsoid norm of a vector $\x$ as $\lVert x \rVert_\bS = \sqrt{\x^\top\bS\x}$, that a symmetric matrix $\bA$ is PSD by $\bA \succcurlyeq {\bf 0}$, and the L\"owner ordering of symmetric PSD matrices by $\bA \succcurlyeq \bB$, which implies $\bA - \bB \succcurlyeq {\bf 0}$. 

\textbf{GP Bandit Optimization}. We consider the problem of sequential reward maximization under a fixed but unknown reward function $f: \cD \rightarrow \bbR$ over a (potentially infinite) set of actions (arms) $\cD \subset \bbR^d$. The problem proceeds in rounds $t = 1, 2, ..., T$ where, in each round, the objective is to select an action $\x_t \in \cD_t$ and obtain a reward $y_t = f(\x_t) + \varepsilon_t$ such that the cumulative reward $\sum_{t \in [T]} y_t$ is maximized depending on the history $(\x_\tau, y_\tau)_{\tau < t}$, and $\varepsilon_t$ is sampled from a zero-mean sub-Gaussian distribution with parameter $\lambda$. Gaussian Process (GP) modeling proposes to use a Gaussian likelihood model for observations and a GP prior for the uncertainty over $f$. A Gaussian Process (GP) over $\cD$, denoted by $\text{GP}(\mu(\cdot), k(\cdot, \cdot))$ is a collection of random variables $(f(\x))_{\x \in \cD}$ such that every finite subset of variables $(f(\x_\tau))_{\tau=1}^t$ is jointly Gaussian with mean $\bbE[f(\x_\tau)] = \mu(\x_\tau)$ and covariance $\bbE[(f(\x_\tau) - \mu(\x_\tau))(f(\x_{\tau'}) - \mu(\x_{\tau'}))] = k(\x_\tau, \x_{\tau'}),\ \tau,\tau' \in [t]$ where $k(\cdot, \cdot)$ is the kernel function associated with the reproducing kernel Hilbert space (RKHS) $\cH_k(\cD)$ in which we assume $f$ has norm at most $B$, i.e., $\lVert f\rVert_k \leq B$. We use an initial prior distribution $\text{GP}(0, \rho^2 k(\cdot,\cdot))$ for some $\rho > 0$. Consequently it is also assumed that the noise samples $\varepsilon_t$ are drawn from $\cN(0, \lambda\rho^2)$\footnote{The algorithm only requires $\varepsilon_t$ to be $\lambda$-sub-Gaussian, i.e., the \textit{agnostic} setting \citep{srinivas2010gaussian}.}. We then obtain that the observed samples $\by_t = (y_\tau)_{\tau < t}$ and $f(\x)$ are jointly Gaussian given $\bX_t = (\x_\tau)_{\tau < t}$,
\begin{equation}
    \begin{bmatrix}
    f(\x) \\
    \by_t 
    \end{bmatrix}\sim\cN\left({\bf 0},
    \begin{bmatrix}
    \rho^2 k(\x, \x) & \rho^2 \bk_t(\x)^\top \\
    \rho^2 \bk_t(\x) & \rho^2(\bK_t + \lambda\bI)
    \end{bmatrix}
    \right).
\end{equation}
Where $\bK_t = (k(\x_\tau, \x_{\tau'}))_{\tau, \tau'}^{t, t}$ is the matrix of kernel evaluations at time $t$, and $\bk_t(\x) = [k(\x_1, \x), ..., k(\x_t, \x)]^\top$ is the vector of kernel evaluations of any input $\x$. Conditioned on $(\x_\tau, y_\tau)_{\tau < t}$, the posterior mean and variance of $f$ is given as,
\begin{align}
\label{eqn:original_update}
    \mu_t(\x) &= k_t(\x)^\top(\bK_t + \lambda \bI)^{-1}\by_t,\\
    \sigma^2_t(\x) &= \left(k(\x, \x) - \bk_t(\x)^\top(\bK_t + \lambda \bI)^{-1}\bk_t(\x)\right).
\end{align}
The kernel $k(\cdot, \cdot)$ additionally admits a representation in terms of its feature space $\Phi$ such that $k(\x, \x') = \Phi(\x)^\top\Phi(\x)$, where $\Phi : \bbR^d \rightarrow \bbR^m$ is the feature embedding. This provides an alternative representation of the posterior mean and variance,
\begin{align} 
\label{eqn:update}
    \mu_t(\x) &= (\bS_t + \lambda\bI)^{-1}\Phi(\bX)^\top\by_t,\\
    \sigma^2_t(\x) &= \rho^2\Phi(\x)^\top(\bS_t + \lambda\bI)^{-1}\Phi(\x), \text{ for}
\end{align}
\begin{equation*}
    \bS_t = \Phi(\bX_t)^\top\Phi(\bX_t), \Phi(\bX_t) = [\Phi(\x_1)^\top, ..., \Phi(\x_{t-1})^\top]^\top.
\end{equation*}
$\Phi$ can potentially be infinite-dimensional (e.g., for squared-exponential $k$), and hence this representation is not applicable to many popular kernel families. The regret achieved by existing algorithms depends on the \textit{maximum information gain}, a quantity that depends on the covariance structure of the feature space.

\begin{definition}[Information Gain~\citep{srinivas2010gaussian}]
\label{def:mig}
For $y_t = f(\x_t) + \varepsilon_t$, let $A \subset \mathcal X$ be a finite subset such that $|A| = T$. Let $\by_A = {\bf f}_A + \bm\varepsilon_A$ where ${\bf f}_A = (f(\bm x_i))_{\bm x_i \in A}$ and $\bm\varepsilon_A \sim \mathcal N(0, \rho^2)$. The information gain is $\gamma_T \triangleq \max_{A \subset \mathcal X : |A|=T} H(\by_A) - H(\by_A | f)$, where $H(\cdot)$ is the entropy of a random variable. For linear $k$, $\gamma_T = \mathcal O(d\log T)$. For RBF $k$, $\gamma_T = \mathcal O((\log T)^{d+1})$. For Mat\'ern $k$ with $\nu > 1$, $\gamma_T = \mathcal O(T^{\frac{d(d+1)}{2\nu + d(d+1)}}(\log T))$.
\end{definition}

\textbf{Differential Privacy} (DP). Differential Privacy~\citep{dwork2014algorithmic} is a cryptographically secure framework to introduce privacy, widely prevalent in machine learning. Let algorithm $\cA: \cX \rightarrow \cY$, let $X, X' \in \cX$ operate on samples from $\cX$ producing outputs in $\cY$. An algorithm is $(\alpha, \beta)-$differentially private if for any two inputs $X, X' \in \cX$ that differ in only one entry and any $\cS \subset \cY$, 
\begin{align}
\label{eqn:dp}
    \bbP(\cA(X) \in \cS) \leq e^{\alpha}\bbP(\cA(X') \in \cS) + \beta.
\end{align}
In the continual observation setting of sequential decision-making, this would imply that the algorithm be private with respect to all values $(\x_\tau, y_\tau)_{\tau=1}^T$ at each $t \in [T]$. However, as demonstrated in \citet{shariff2018differentially}, any algorithm DP with respect to $(\x_t, y_t)$ at the instance $t$ provably incurs $\Omega(T)$ regret. Therefore we adopt the notion of \textit{joint} differential privacy, which does not require privacy with respect to the inputs $(\x_t, y_t)$ at each instant $t \in [T]$ (Section~\ref{sec:privacy_jdp}). We additionally consider the stronger notion of \textit{locally} joint DP, which additionally requires that the algorithm cannot access $(\x_\tau, y_\tau)_{\tau < t}$ directly (Section~\ref{sec:privacy_ldp}).

\section{Noisy Proximal Features \& \texttt{GP-UCB}}
The primary challenge in creating differentially-private algorithms for \textit{bandit estimation} in arbitrary RKHSes is the curse of dimensionality - the two central quantities $\mu_t$ and $\sigma^2_t$ both require the point-wise kernel evaluations $(\bk_t(\x))$ and the kernel Gram matrix $(\bK_t)$ at all times, potentially requiring $\cO(\sqrt{t})$ noise in order to preserve privacy. In this paper, we tackle this hurdle by optimizing $f$ under a surrogate RKHS $\cF_m$ that of finite dimension $m$ instead of the original (potentially infinite-dimensional) RKHS $\cH_k$. To ensure a reasonable bound on the regret, we require that $\cF_m$ approximates $\cH_k$ closely, as formalized below.

\begin{definition}[Uniform Approximation]
Let $k : \cD \times \cD \rightarrow \bbR$ be a stationary kernel with associated RKHS $\cH_k$, and $\Phi : \cD \rightarrow \bbR^m$. Then $\Phi$ $\epsilon$-uniformly approximates $k$ iff $\sup_{\x, \x' \in \cD} |k(\x, \x') - \Phi(\x)^\top\Phi(\x)| \leq \epsilon$. The corresponding \textbf{approximating space} defined by $\Phi$ is given by $\cF_m(\Phi) \triangleq \left\{f(\cdot) = \btheta^\top\Phi(\cdot) | \btheta \in \bbR^m \right\}$.
\end{definition}
Therefore, if $\cF_m$ (resp. $\Phi$) can approximate $\cH_k$ without many features, one can devise an \textit{approximate} Gaussian process algorithm directly using $\Phi$.
\begin{align}
    \bG_t = \Phi(\bX_t)^\top\Phi(\bX_t) + \lambda\bI, \bu_t = \bG_t^{-1}\Phi(\bX_t)^\top\by_t.
\end{align}
These parameters allow us to obtain the posterior mean $\mu_t(\x) = \bu_t^\top\Phi(\x)$ and variance $\sigma^2_t(\x) = \rho^2\lVert \Phi(\x) \rVert_{\bG_t^{-1}}^2$. However, these parameters are obviously not differentially private with respect to the sequences $(\bX_t, \by_t)$. An efficient way to achieve privacy is to ensure that at each instant $t$, $(\bG_t, \bu_t)$ are differentially-private with respect to the sequence $(\x_\tau, y_\tau)_{\tau < t}$~\citep{shariff2018differentially}. This can be achieved by carefully perturbing $(\bG_t, \bu_t)$ with random noise $(\bH_t, \bh_t)$ to create differentially-private parameters. While the exact form of $\bH_t, \bh_t$ will be specified by the nature of privacy (see Section~\ref{sec:privacy}), we can represent a variety of noise models by spectral bounds, summarized by the following abstraction.
\begin{definition}[Spectral Bounds on Noise]
\label{def:accurate}
For a sequence of perturbations $(\bH_t)_{t=1}^T$ and $(\bh_t)_{t=1}^T$, the bounds $0 < \lambda_{\min} \leq \lambda_{\max}$ are $(\zeta/2T)-$accurate if with probability at least $1-\zeta/2T$, for each $t$ in $[T]$:
\begin{align*}
    \lVert \bH_t \rVert \leq \lambda_{\max}, \lVert \bH_t^{-1} \rVert \leq 1/\lambda_{\min}, \lVert \bh_t \rVert_{\bH_t^{-1}} \leq \kappa.
\end{align*}
\end{definition}
Let us use the shorthand $\bG_t = \bS_t + \lambda\bI$, where $\bS_t = \Phi(\bX_t)^\top\Phi(\bX_t)$. The perturbed $\bS_t$ and $\bu_t$ are given as $\widetilde{\bS}_t = \bS_t + \bH_t, \widetilde{\bu}_t = \bu_t + \bh_t$ for any sequence ($\bH_t, \bh_t$).
\subsection{\texttt{GP-UCB} with Noisy Proximal Features}
Our algorithm is built on the \texttt{GP-UCB} algorithm~\citep{srinivas2010gaussian} that constructs a confidence ellipsoid around the posterior $\widetilde\mu_t$ such that the function $f$ lies within the confidence ellipsoid with high probability. The key observation, is that we do not need to optimize for $f$ directly. Given an $\epsilon$-uniformly approximating feature $\Phi$ (resp. $\cF_m$), then the following result guarantees the existence of a function close to $f$ in $\cF_m$.
\begin{lemma}[Existence of Proximal Space~(Lemma 4 of \citet{mutny2018efficient})]
\label{lem:existence_proximal_space}
Let $k$ be a kernel defining the RKHS $\cH_k$ and $f \in \cH_k$, such that the spectral characteristic function is bounded by $B$. Assuming that the defining points of $f$ come from the set $\cD$, let $\cF_m$ be an approximating space with a mapping $\Phi$ such that this mapping is an $\epsilon$-approximation to the kernel $k$. Then there exists $\widehat\mu \in \cF_m$ (with corresponding feature $\widehat\btheta$ such that $\widehat\mu(\x) = \langle \widehat\btheta, \Phi(\x) \rangle$), such that $\sup_{\x \in \cD} \left| \widehat\mu(\x) - f(\x) \right| \leq B\epsilon$. 
\end{lemma}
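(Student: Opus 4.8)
The plan is to construct the approximant $\widehat\mu$ directly from a kernel expansion of $f$ and then control the pointwise error term-by-term, using only the $\epsilon$-uniform approximation hypothesis. Since $f \in \cH_k$ with its defining points in $\cD$, I would first write $f$ through its representing coefficients, $f(\cdot) = \sum_i \alpha_i k(\bz_i, \cdot)$ with $\bz_i \in \cD$, where the hypothesis that the spectral characteristic function is bounded by $B$ supplies the total-mass control $\sum_i |\alpha_i| \le B$ on these coefficients. The natural candidate in $\cF_m$ is obtained by replacing each kernel section $k(\bz_i, \cdot)$ by its feature-space surrogate $\Phi(\bz_i)^\top \Phi(\cdot)$; that is, I set $\widehat\btheta = \sum_i \alpha_i \Phi(\bz_i) \in \bbR^m$ and $\widehat\mu(\x) = \langle \widehat\btheta, \Phi(\x)\rangle$. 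By construction $\widehat\mu \in \cF_m$, so existence is immediate and only the error bound remains.

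For the error, I would subtract the two representations pointwise,
\[
\widehat\mu(\x) - f(\x) = \sum_i \alpha_i\left(\Phi(\bz_i)^\top\Phi(\x) - k(\bz_i,\x)\right),
\]
and observe that since both $\bz_i$ and $\x$ lie in $\cD$, each bracket is at most $\epsilon$ in magnitude by the $\epsilon$-uniform approximation property of $\Phi$. The triangle inequality then gives $|\widehat\mu(\x) - f(\x)| \le \epsilon \sum_i |\alpha_i| \le B\epsilon$, and since this estimate is uniform in the choice of $\x \in \cD$, taking the supremum yields the claimed $\sup_{\x \in \cD}|\widehat\mu(\x) - f(\x)| \le B\epsilon$. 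The core of the argument is thus this one-line bound.

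The only genuinely delicate step is the representation itself: a general $f \in \cH_k$ need not be a finite combination of kernel sections, so the sum above must be read either as a convergent series or as an integral against the representing (spectral) measure of $f$. I would handle this by a limiting argument: approximate $f$ in RKHS norm by finite combinations $f_n$, for which the bound holds verbatim, and pass to the limit using that RKHS convergence implies uniform convergence whenever $k$ is bounded (reproducing property plus Cauchy--Schwarz, noting $k(\x,\x)$ is constant for stationary $k$). Since $\|\Phi(\cdot)\|$ is likewise bounded on $\cD$ and $\sum_i|\alpha_i| \le B < \infty$, the series defining $\widehat\btheta$ converges absolutely, so both $f_n \to f$ and $\widehat\mu_n \to \widehat\mu$ uniformly and the $B\epsilon$ estimate is preserved in the limit; in the integral formulation the same telescoping goes through with $\sum_i|\alpha_i|$ replaced by the total-variation mass controlled by the spectral characteristic function, the interchange of limit and integral being justified by dominated convergence. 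This interchange is where all the analytic care concentrates, whereas everything else reduces to the elementary triangle-inequality estimate.
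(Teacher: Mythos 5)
Your construction is precisely the one the paper has in mind: this lemma is imported verbatim from Mutny and Krause (their Lemma~4) and is not actually proved anywhere in the paper; the appendix restatement only adds the explicit candidate $\widehat\btheta = \sum_{j\in\cI}\alpha_j\Phi(\x_j)$ for $f = \sum_{j\in\cI}\alpha_j k(\x_j,\cdot)$, which is exactly what you chose. The telescoping estimate and your care with the limiting argument for infinite index sets are both fine.

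The gap is in the coefficient bound. You assert that the hypothesis $\sup_\omega|\balpha(\omega)|\le B$, with $\balpha(\omega)=\sum_{j}\alpha_j e^{i\omega^\top\x_j}$, ``supplies the total-mass control $\sum_j|\alpha_j|\le B$.'' That implication runs the wrong way: the triangle inequality gives $\sup_\omega|\balpha(\omega)|\le\sum_j|\alpha_j|$, and the gap can be genuine --- for points on a lattice with oscillating coefficients (Rudin--Shapiro-type sign patterns) the sup of the exponential sum is of order $\sqrt{|\cI|}$ while the $\ell^1$ mass is $|\cI|$. The sup-norm hypothesis only hands you $|\balpha(0)|=\bigl|\sum_j\alpha_j\bigr|\le B$, which is what the paper's own norm bound for $\widehat\btheta$ (Lemma~\ref{lem:norm_bound_proximal_function}) exploits; it controls $\sum_j|\alpha_j|$ only when the $\alpha_j$ share a sign. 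So your one-line estimate correctly proves $|\widehat\mu(\x)-f(\x)|\le\epsilon\sum_j|\alpha_j|$, but that is not the stated $B\epsilon$ under the stated hypothesis. To extract the constant $B$ as defined one should instead pass to the Fourier domain: write $f(\x)=\int p(\omega)\,\mathrm{Re}\bigl[\balpha(\omega)e^{-i\omega^\top\x}\bigr]\,d\omega$ and $\widehat\mu(\x)$ as the corresponding quadrature sum, so that $\balpha$ is pulled out in sup norm against the discrepancy between $p$ and the quadrature measure, rather than bounding each kernel difference $k(\x_j,\cdot)-\Phi(\x_j)^\top\Phi(\cdot)$ separately. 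Alternatively, your argument is complete as written if the hypothesis is read as an $\ell^1$ bound on the representing coefficients.
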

Lemma~\ref{lem:existence_proximal_space} implies that there exists a fixed point $\widehat\mu \in \cF_m$ such that $\sup_{\x \in \cD} \left| \widehat\mu(\x) - f(\x) \right| \leq B\epsilon$. This implies that the regret incurred at any instant $t$ when optimizing for $f$ is at most $B\epsilon$ larger than the regret obtained when optimizing for $\widehat\mu$. We therefore optimize directly in the surrogate space $\cF_m$ to learn $\widehat\mu$. \texttt{GP-UCB} with noisy approximate features selects, for a sequence $(\beta_t)_{t=1}^T$, the action $\x_t \in \cD_t$ determined as:
\begin{equation}
\resizebox{0.65\linewidth}{!}{$
    \x_t = \argmax_{\x \in \cD_t} \widetilde\mu_t(\x) + \beta^{1/2}_t\widetilde\sigma_t(\x).
    $
}
\end{equation}
The sequence $(\beta_t)_{t=1}^T$ is chosen such that $\widetilde\mu_t(\x)$ is close to $\widehat\mu(\x)$ with high probability. To accomplish this, we present the central result as follows.
\begin{theorem}[$\beta_t$ concentration]
\label{thm:beta}
Let $\lambda_{\min}, \lambda_{\max}$ and $\kappa$ be $(\zeta/2T)$
-accurate and regularizers $\bH_t \succcurlyeq 0 \ \forall t \in [T]$ are PSD. Let $\widehat\mu$ be a function in the RKHS $\cF_m$ that $\epsilon$-approximates $f \in \cH_k$ (Lemma~\ref{lem:existence_proximal_space}). Then, with probability at least $1-\zeta/2$, for any $\x \in \cD$ we have for each $t \in [T]$ simultaneously,
\begin{multline*}
    \left|\widehat\mu(\x) - \widetilde\mu_t(\x)\right| \leq \widetilde\sigma_t(\x)\Bigg(B\sqrt{\frac{\lambda_{\max}}{\rho^2} + 1} \\+ \frac{tB\epsilon}{\rho\sqrt{\lambda_{\min}}}  + \frac{\kappa}{\rho}+\sqrt{\log\det\left(\frac{\widetilde{\bS}_t + \lambda\bI}{\lambda+\lambda_{\min}}\right)+2\ln\frac{2}{\zeta}} \Bigg).
\end{multline*}
The sequence $(\beta_t^{\sfrac{1}{2}})_{t=1}^T$ is chosen as the multiplicative factor of $\widetilde\sigma_t(\x)$, i.e., $\left|\widehat\mu(\x) - \widetilde\mu_t(\x)\right| \leq \beta_t^{\sfrac{1}{2}}\widetilde\sigma_t(\x)$.
\end{theorem}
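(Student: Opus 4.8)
The plan is to run the self-normalized confidence-ellipsoid argument of Abbasi-Yadkori et al. (in its GP/RKHS form), but carried out entirely in the finite-dimensional surrogate space $\cF_m$ and made robust to the two perturbations $\bH_t,\bh_t$ and to the $\epsilon$-approximation gap. Writing $\widehat\mu(\x)=\Phi(\x)^\top\widehat\btheta$ and $\widetilde\mu_t(\x)=\Phi(\x)^\top\widetilde{\bu}_t$ with $\widetilde{\bu}_t=\widetilde{\bG}_t^{-1}(\Phi(\bX_t)^\top\by_t+\bh_t)$ and $\widetilde{\bG}_t=\widetilde{\bS}_t+\lambda\bI=\bS_t+\bH_t+\lambda\bI$, I would first apply Cauchy--Schwarz in the $\widetilde{\bG}_t$-geometry,
\[
\left|\widehat\mu(\x)-\widetilde\mu_t(\x)\right|=\left|\Phi(\x)^\top(\widehat\btheta-\widetilde{\bu}_t)\right|\le\lVert\Phi(\x)\rVert_{\widetilde{\bG}_t^{-1}}\,\lVert\widehat\btheta-\widetilde{\bu}_t\rVert_{\widetilde{\bG}_t},
\]
and note $\lVert\Phi(\x)\rVert_{\widetilde{\bG}_t^{-1}}=\widetilde\sigma_t(\x)/\rho$ by definition of $\widetilde\sigma_t$. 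This extracts the multiplier $\widetilde\sigma_t(\x)$ and reduces the theorem to bounding $\rho^{-1}\lVert\widehat\btheta-\widetilde{\bu}_t\rVert_{\widetilde{\bG}_t}$ by the four-term bracket.

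For the decomposition I would invoke Lemma~\ref{lem:existence_proximal_space} to write $y_\tau=\Phi(\x_\tau)^\top\widehat\btheta+\delta_\tau+\varepsilon_\tau$ with $|\delta_\tau|\le B\epsilon$, i.e.\ $\by_t=\Phi(\bX_t)\widehat\btheta+\bm{\delta}_t+\bepsilon_t$. Substituting into $\widetilde{\bu}_t$ and using $\bS_t=\widetilde{\bG}_t-\lambda\bI-\bH_t$ gives
\[
\widehat\btheta-\widetilde{\bu}_t=\widetilde{\bG}_t^{-1}\big((\lambda\bI+\bH_t)\widehat\btheta-\Phi(\bX_t)^\top\bm{\delta}_t-\Phi(\bX_t)^\top\bepsilon_t-\bh_t\big).
\]
Applying the identity $\lVert\widetilde{\bG}_t^{-1}\bm b\rVert_{\widetilde{\bG}_t}=\lVert\bm b\rVert_{\widetilde{\bG}_t^{-1}}$ and the triangle inequality splits the bracket into a regularization term $\lVert(\lambda\bI+\bH_t)\widehat\btheta\rVert_{\widetilde{\bG}_t^{-1}}$, an approximation term $\lVert\Phi(\bX_t)^\top\bm{\delta}_t\rVert_{\widetilde{\bG}_t^{-1}}$, a noise term $\lVert\Phi(\bX_t)^\top\bepsilon_t\rVert_{\widetilde{\bG}_t^{-1}}$, and a perturbation term $\lVert\bh_t\rVert_{\widetilde{\bG}_t^{-1}}$, matching the four summands.

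Three of these I would route to the accuracy bounds purely by L\"owner monotonicity, which is exactly why $\bH_t\succcurlyeq0$ is assumed. Since $\widetilde{\bG}_t\succcurlyeq\lambda\bI+\bH_t$, $\bH_t\preccurlyeq\lambda_{\max}\bI$, and $\lVert\widehat\btheta\rVert\le B$ (inherited from the proximal construction of Lemma~\ref{lem:existence_proximal_space}), the regularization term is at most $\lVert\widehat\btheta\rVert_{\lambda\bI+\bH_t}\le B\sqrt{\lambda+\lambda_{\max}}$, which after dividing by $\rho$ and taking $\lambda=\rho^2$ gives $B\sqrt{\lambda_{\max}/\rho^2+1}$. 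Since $\widetilde{\bG}_t\succcurlyeq(\lambda+\lambda_{\min})\bI$ and the (normalized) features satisfy $\lVert\Phi(\cdot)\rVert\le1$, the approximation term is bounded by $\sum_{\tau<t}|\delta_\tau|\,\lVert\Phi(\x_\tau)\rVert_{\widetilde{\bG}_t^{-1}}\le tB\epsilon/\sqrt{\lambda_{\min}}$. Finally $\widetilde{\bG}_t\succcurlyeq\bH_t$ yields $\lVert\bh_t\rVert_{\widetilde{\bG}_t^{-1}}\le\lVert\bh_t\rVert_{\bH_t^{-1}}\le\kappa$. All three hold on the $(\zeta/2T)$-accurate event, which by a union bound over the $T$ rounds holds simultaneously for all $t$.

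The main obstacle is the noise term $\lVert\Phi(\bX_t)^\top\bepsilon_t\rVert_{\widetilde{\bG}_t^{-1}}$, the only genuinely stochastic quantity. Here $\Phi(\bX_t)^\top\bepsilon_t=\sum_{\tau<t}\Phi(\x_\tau)\varepsilon_\tau$ is a martingale, since the actions $\x_\tau$ are predictable and $\varepsilon_\tau$ is conditionally $\rho$-sub-Gaussian. The delicate point is threading the perturbation through while keeping the clean $\log\det$ form: I would use $\bH_t\succcurlyeq\lambda_{\min}\bI$ to get $\widetilde{\bG}_t\succcurlyeq\bS_t+(\lambda+\lambda_{\min})\bI$, which lets me pass to $\lVert\cdot\rVert_{(\bS_t+(\lambda+\lambda_{\min})\bI)^{-1}}$ and then apply the \emph{anytime} self-normalized inequality (the method-of-mixtures supermartingale argument) with regularizer $\lambda+\lambda_{\min}$ and confidence $\zeta/2$, giving $\rho\sqrt{\log\det\!\big((\bS_t+(\lambda+\lambda_{\min})\bI)/(\lambda+\lambda_{\min})\big)+2\ln(2/\zeta)}$ uniformly over $t$. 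A last L\"owner step $\bS_t+(\lambda+\lambda_{\min})\bI\preccurlyeq\widetilde{\bS}_t+\lambda\bI$ upgrades the determinant to $\det\!\big((\widetilde{\bS}_t+\lambda\bI)/(\lambda+\lambda_{\min})\big)$, matching the stated term. Dividing by $\rho$, summing the four contributions, and taking the intersection of the accuracy event and the self-normalized event yields the bound with probability at least $1-\zeta/2$, simultaneously over all $t\in[T]$. I expect the self-normalized step to be the technical crux, as it must be made uniform in $t$ and compatible with the inflated-regularizer comparison above, whereas the other three terms are deterministic spectral manipulations.
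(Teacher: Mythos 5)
Your proposal is correct and follows essentially the same route as the paper's proof: the same Cauchy--Schwarz step in the $\widetilde{\bG}_t$-geometry to extract $\widetilde\sigma_t(\x)/\rho$, the same four-way decomposition into regularization, approximation-gap, martingale-noise, and $\bh_t$ terms, the same L\"owner comparisons against $\lambda\bI+\bH_t$, $(\lambda+\lambda_{\min})\bI$, and $\bH_t$, and the same inflated-regularizer application of the Abbasi-Yadkori self-normalized bound followed by the determinant upgrade to $\widetilde{\bS}_t+\lambda\bI$. The only cosmetic difference is that you split all four terms at once while the paper first peels off $\bh_t$ and then separates the remaining norm into its ideal-observation and approximation-error parts.
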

\begin{algorithm}[t]
\caption{\textsc{Approximate} \texttt{GP-UCB}}
\small
\label{alg:main_algo}
\begin{algorithmic} 
\STATE \textbf{Input}: $m, \Phi$ that $\epsilon$-uniformly approximates $k$.
\STATE \textsc{Privatizer }\textbf{Initialize}: $\bS_1 = {\bf 0}, \bu_1 = {\bm 0}$.
\FOR{round $t=1, 2, ..., T$}
\STATE \underline{\textsc{Server}}:
\STATE Receive $\cD_t$ from environment.
\STATE Receive $\widetilde{\bS}_t = \bS_t + \bH_t, \widetilde\bu_t = \bu_t + \bh_t \leftarrow$\textsc{Privatizer}.
\STATE Set $\bV_t \leftarrow \widetilde{\bS}_t + \lambda\bI, \widetilde\btheta_t \leftarrow \bV_t^{-1}\widetilde\bu_t$.
\STATE Compute $\beta_t$ based on Theorem~\ref{thm:beta}.
\STATE Select $\x_t \leftarrow \argmax_{\x \in \cD_t} \langle\widetilde\btheta_t, \Phi(\x)\rangle + \beta_t\lVert \Phi(\x) \rVert_{\bV_t^{-1}}$.
\STATE Play arm $\x_t$ and obtain $y_t$.
\STATE Send $(\Phi(\x_t), y_t) \rightarrow $\textsc{Privatizer}.
\STATE \underline{\textsc{Privatizer:}}
\STATE \textbf{Sending parameters}:
\STATE Obtain $\bH_t, \bh_t$ based on Section~\ref{sec:privacy}.
\STATE Send $\widetilde{\bS}_t = \bS_t + \bH_t, \widetilde\bu_t = \bu_t + \bh_t \rightarrow$\textsc{Server}
\STATE \textbf{Updating parameters}:
\STATE Receive $\x_t, y_t \leftarrow$\textsc{Server}.
\STATE Securely update $\bS_{t+1} \leftarrow \bS_t + \Phi(\x)\Phi(\x)^\top$ (Sec.~\ref{sec:privacy}).
\STATE Securely update $\bu_{t+1} \leftarrow \bu_t + y_i\Phi(\x)$ (Section~\ref{sec:privacy}).
\ENDFOR
\end{algorithmic}
\end{algorithm}
The complete algorithm is summarized in Algorithm~\ref{alg:main_algo}, and prooof is presented in the appendix. Note that we describe the algorithm abstractly for any $\epsilon$-uniformly approximating feature $\Phi$ with dimensionality $m$, and Theorem~\ref{thm:beta} (and the regret bound) hold for any such feature approximation that also satisfies $\sup_{\x \in \cD}\lVert \Phi(\x) \rVert \leq 1$. The algorithm is described in two separate entities, the \textsc{Server} and the \textsc{Privatizer}, where the privatizer entity has access to the raw rewards and contexts, and the server only obtains privatized versions of the statistics. We now present specific $\Phi$ such that we obtain an efficient algorithm.
\begin{remark}[Parameter Release]
\label{rem:param_release}
$\widetilde\mu$ can be determined entirely only with the parameters $\widetilde\bS_t, \widetilde\bu_t$ (Equation~\ref{eqn:update}). If the noise variables $\bH_t, \bh_t$ are constructed such that the resulting parameters satisfy privacy constraints (see next section), these parameters are by design differentially private and hence $\widetilde\mu$ can be released without using the sample path $(\x_t, y_t)_{t\leq T}$.
\end{remark}
\subsection{Noisy Quadrature Fourier Features}
Bochners' theorem~\citep{bochner1933monotone} states that there exists an integral form for stationary $k$, where the integrand is a product of identical features of the inputs:
\begin{equation}
    k(\x-\by) = \int_{\Omega}\begin{pmatrix}
    \sin(\bomega^\top\x) \\
    \cos(\bomega^\top\x)
    \end{pmatrix}^\top \begin{pmatrix}
    \sin(\bomega^\top\by) \\
    \cos(\bomega^\top\by)
    \end{pmatrix}p(\bomega).
\end{equation}
When the above integral is approximated by a Monte-Carlo average, we obtain the powerful Random Fourier Features (RFF,~\citep{rahimi2008random}) approximation. Random Fourier features, while approximating a variety of kernels, are not efficient since $\epsilon_{\text{RFF}}  = \cO(m^{-1/2})$, requiring prohibitively many features $m$ for our purpose. We consider Quadrature Fourier Features (QFF,~\citet{dao2017gaussian}), a stronger approximation that is motivated by numerical integration, and allows $\epsilon$ to decay \textit{exponentially} in $m$. To define QFF, we require that the kernel be Fourier decomposable.
\begin{assumption}[Decomposability of $k$]
Let $k$ be a stationary kernel defined on $\bbR^d \times \bbR^d$ and $k(\x, \by) \leq 1\ \forall \ \x, \by \in \bbR^d$ with a Fourier transform that decomposes product-wise, i.e., $p(\bomega) = \pi_{j=1}^d p_j(\bomega_j)$\footnote{This is satisfied for commonly-used kernels, e.g., squared exponential. Mat\'ern kernels are decomposable when $d=1$. For $d>1$,~\citet{mutny2018efficient} present a modified Mat\'ern kernel that can be used a surrogate.}.
\label{ass:decomposability}
\end{assumption}
\begin{definition}[Quadrature Fourier Features]
Let $\cD = [0, 1]^d$, and $\x, \by \in \cD$. Fix $m = (\bar{m})^d$ for some $\bar{m} > 1$, and let $p(\bomega) = \exp\left(-\sum_{i=1}^d \frac{\omega_i^2\nu_i^2}{2}\right)$ be the Fourier transform of $k$. The QFF features $\Phi(\x)$ is defined as:
\begin{align*}
    \Phi(\x)_i = \begin{cases}
    \sqrt{\Pi_{j=1}^d \sfrac{1}{\nu_i}Q(\omega_{i,j})\cos(\bomega_i^\top\x) }& \text{if } i\leq \frac{m}{2}\\
    \sqrt{\Pi_{j=1}^d \sfrac{1}{\nu_i}Q(\omega_{m-i,j})\sin(\bomega_{m-i}^\top\x) }              & \text{o.w.}
\end{cases}
\end{align*}
$\Phi$ is hence of dimensionality $2m$, and $Q(\omega_{i, j}) = \frac{2^{m-1/2} m!\sqrt{\pi}}{\nu_j m^2 H_{m-1}(\omega_{i, j})}$ and $H_t$ is the $t^{th}$ Hermite polynomial. The set $(\bomega_i)_{i=1}^m$ is the Cartesian product of $\{\bar{\omega}_j\}_{j=1}^{\bar{m}}$, where each element $\bar{\omega}_i \in \bbR$ and is a zero of the $i^{th}$ Hermite polynomial. See~\citet{hildebrand1987introduction} for details.
\end{definition}
\begin{theorem}[QFF Error~\citep{mutny2018efficient}]
\label{thm:qff_approx_error}
Let $\Phi(\cdot), m$ and $\bar{m}$ be as defined above, $\cD = [0,1]^d$ and $\nu=\min_i \nu_i$. Then,
\begin{equation*}
\resizebox{\linewidth}{!}{$
    \underset{\x, \by \in \cD}{\sup} |k(\x, \by) - \Phi(\x)^\top\Phi(\by)| \leq d2^{d-1}\sqrt{\frac{\pi}{2}}\frac{1}{\bar{m}^{\bar{m}}}\left(\frac{e}{4\nu^2}\right)^{\bar{m}}.$
    }
\end{equation*}
\end{theorem}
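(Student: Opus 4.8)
The plan is to reduce the $d$-dimensional approximation error to a product of one-dimensional Gauss--Hermite quadrature errors, bound each factor with the classical quadrature remainder formula, and recombine the coordinates with a hybrid (telescoping) argument. First I would invoke Bochner's theorem together with the decomposability hypothesis (Assumption~\ref{ass:decomposability}), which lets me write the stationary kernel as a product of one-dimensional integrals against Gaussian weights, schematically $k(\x-\by) = \prod_{j=1}^d \int \cos(\omega_j(x_j - y_j)) p_j(\omega_j)\, d\omega_j$ with $p_j(\omega_j) \propto \exp(-\omega_j^2 \nu_j^2/2)$. Because the QFF nodes $\bomega_i$ form the Cartesian product of the per-coordinate Hermite zeros and the weights $Q(\cdot)$ factor accordingly, the inner product $\Phi(\x)^\top\Phi(\by)$ is exactly the product over $j$ of the $\bar m$-node quadrature rule applied to each one-dimensional integral (the $\sin/\cos$ split reassembling into $\cos$ of the coordinate difference). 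It therefore suffices to control each one-dimensional error and then propagate the error through the product.

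For the one-dimensional error I would change variables by $u = \omega_j \nu_j/\sqrt 2$, which turns $p_j$ into the standard Gauss--Hermite weight $e^{-u^2}$ and identifies the nodes (zeros of the $\bar m$-th Hermite polynomial) and weights $Q$ with the standard $\bar m$-point rule. This rule is exact for polynomials of degree $\le 2\bar m - 1$, so the classical remainder equals $\frac{\bar m!\sqrt\pi}{2^{\bar m}(2\bar m)!}\, g^{(2\bar m)}(\xi)$ for some $\xi$, where $g(u) = \cos(\sqrt 2\, u (x_j - y_j)/\nu_j)$. The quantitative heart of the argument is the derivative bound $|g^{(2\bar m)}| \le (\sqrt 2\, |x_j - y_j|/\nu_j)^{2\bar m} = (2(x_j-y_j)^2/\nu_j^2)^{\bar m} \le (2/\nu^2)^{\bar m}$, where the last inequality uses $\cD = [0,1]^d$ so that $|x_j - y_j| \le 1$ and $\nu = \min_i \nu_i$ absorbs the per-coordinate scales.

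Next I would simplify the factorial constant. Stirling's approximation applied to $\bar m!/(2\bar m)!$, combined with the explicit normalization folded into the weights $Q(\cdot)$, turns $\frac{\bar m!\sqrt\pi}{2^{\bar m}(2\bar m)!}(2/\nu^2)^{\bar m}$ into the one-dimensional bound $\epsilon_1 \triangleq \sqrt{\tfrac\pi2}\, \frac{1}{\bar m^{\bar m}}\left(\frac{e}{4\nu^2}\right)^{\bar m}$; concretely the $e^{\bar m}/(2^{\bar m}4^{\bar m})$ factors produced by Stirling combine with the $(2/\nu^2)^{\bar m}$ from the derivative bound to give exactly the stated $(e/4\nu^2)^{\bar m}$, and the super-exponential $\bar m^{-\bar m}$ decay is what drives the exponential-in-$m$ accuracy advertised over RFF.

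Finally, to lift the per-coordinate estimate to the full product I would use the telescoping identity $\prod_j a_j - \prod_j b_j = \sum_{j} \big(\prod_{i<j} b_i\big)(a_j - b_j)\big(\prod_{i>j}a_i\big)$, taking $a_j$ to be the true one-dimensional kernels and $b_j$ their quadrature approximations. Each true factor satisfies $|a_j| \le 1$ and each approximate factor obeys $|b_j| \le 1 + \epsilon_1 \le 2$, so every one of the $d$ summands is bounded by $2^{d-1}\epsilon_1$, and summing gives the claimed $d\,2^{d-1}\epsilon_1$. I expect this hybrid bookkeeping --- uniformly bounding the intermediate approximate factors and carefully tracking the normalization constant through the Stirling step --- to be the main (though essentially technical) obstacle, since the Gauss--Hermite remainder formula and the derivative bound are classical once the change of variables is in place.
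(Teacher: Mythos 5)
This theorem is not proved in the paper at all: it is imported verbatim from \citet{mutny2018efficient} (note the citation in the theorem header), so there is no in-paper proof to compare against. Your reconstruction is, however, essentially the argument given in that reference, and it is sound: the decomposability assumption reduces the kernel to a product of one-dimensional integrals, the tensor-product Gauss--Hermite rule factors into one-dimensional quadratures, the classical remainder $\frac{\bar m!\sqrt\pi}{2^{\bar m}(2\bar m)!}g^{(2\bar m)}(\xi)$ combined with the derivative bound $(2/\nu^2)^{\bar m}$ and Stirling yields the per-coordinate error $\sqrt{\pi/2}\,\bar m^{-\bar m}(e/4\nu^2)^{\bar m}$, and the telescoping identity over coordinates produces exactly the $d\,2^{d-1}$ prefactor in the statement. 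Two small points you elide: (i) the factorization of the tensor quadrature applied to $\cos(\bomega^\top(\x-\by))$ into a product of one-dimensional cosine quadratures requires the symmetry of the Hermite nodes and weights about zero (so that the one-dimensional sine sums vanish); and (ii) the Stirling step must use the two-sided form $n!\asymp\sqrt{2\pi n}(n/e)^n$ so that the $\sqrt{\bar m}$ factors cancel between $\bar m!$ and $(2\bar m)!$ --- a cruder bound leaves a spurious $\sqrt{\bar m}$ in the constant. Neither is a gap in the idea, only in the bookkeeping.
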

\begin{remark}
Theorem~\ref{thm:qff_approx_error} implies that the error $\epsilon$ decays exponentially in $m$ when $m > \nu^{-2}$. \citet{mutny2018efficient} evaluate this phase transition in detail, where a break is observed in simulations. For any known kernel $k$ however, we can simply select $m > \nu^{-2}$ to ensure decay. Moreover, for additive kernels, it can be demonstrated that the dependence is exponential in the effective dimension, which can be much less than $d$.
\end{remark}

By adding appropriate $(\bH_t, \bh_t)$ to maintain privacy, we obtain \textit{noisy} quadrature Fourier features (NQFF).
\begin{definition}[Noisy Quadrature Fourier Features (NQFF)]
Let $\Phi : \bbR^{d} \rightarrow \bbR^{m}$ be an $\epsilon-$approximation QFF to the stationary kernel $k$, and $(\bH_t, \bh_t)_{t=1}^T$ be a sequence of perturbations. Then, at any instant $t$, we can define the noisy QFF as $\widetilde{\Phi}(\bX_t) = \begin{bmatrix}
    \Phi(\bX_t) & {\bm 0} \\
    {\bm 0} & \bGamma_t 
\end{bmatrix}$, where $\bGamma_t^\top \bGamma_t = \bH_t$ (i.e., eigendecomposition of $\bH_t$).
\end{definition}
\subsection{Regret Analysis}
We first present the regret bound for \texttt{GP-UCB} with generic $\epsilon$-uniformly approximating features $\Phi$ with dimensionality $m$. Note that this bound is applicable to any approximation technique that satisfies $\sup_{\x \in \cD}\lVert \Phi(\x) \rVert \leq 1$, and suitable $\lambda_{\min}, \lambda_{\max}$ and $\kappa$.
\begin{theorem}[Regret Bound]
\label{thm:main_regret_bound}
Let $k$ be a stationary kernel with the associated RKHS $\cH_k$, and $\cF_m$ be an RKHS with feature $\Phi(\cdot)$ of dimensionality $m$, that $\epsilon$-uniformly approximates every $f \in \cH_k$ when $\lVert f \rVert \leq B$. Furthermore, assume $\lambda_{\min}, \lambda_{\max}$ and $\kappa$ such that they are $(\zeta/2T)$
-accurate and all regularizers $\bH_t \succcurlyeq 0 \ \forall t \in [T]$ are PSD. Then for $(\beta_t)_{t=1}^T$ chosen by Theorem~\ref{thm:beta}, \texttt{GP-UCB} with noisy proximal features obtains the following cumulative regret with probability at least $1-\zeta$:
\begin{align*}
    \fR(T) \leq 2\sqrt{T\beta_T\gamma_T} + \frac{2T^3\sqrt{\beta_T\epsilon}}{3\rho} + 2TB\epsilon. 
\end{align*}
Where $\gamma_T$ is the maximum information gain (Defn.~\ref{def:mig}).
\end{theorem}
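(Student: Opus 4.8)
The plan is to follow the standard \texttt{GP-UCB} regret decomposition, but carried out against the proximal function $\widehat\mu \in \cF_m$ rather than $f$ itself, so that every quantity lives in the finite-dimensional surrogate space. First I would fix a round $t$, write the instantaneous regret as $r_t = f(\x_t^{*}) - f(\x_t)$ with $\x_t^{*} = \argmax_{\x \in \cD_t} f(\x)$, and use Lemma~\ref{lem:existence_proximal_space} to replace $f$ by $\widehat\mu$ at the two points, paying $B\epsilon$ each, so that $r_t \leq \widehat\mu(\x_t^{*}) - \widehat\mu(\x_t) + 2B\epsilon$. Then I would chain the three ingredients that define the algorithm: (i) the upper confidence bound of Theorem~\ref{thm:beta}, $\widehat\mu(\x_t^{*}) \leq \widetilde\mu_t(\x_t^{*}) + \beta_t^{1/2}\widetilde\sigma_t(\x_t^{*})$; (ii) optimality of $\x_t$ for the acquisition function, $\widetilde\mu_t(\x_t^{*}) + \beta_t^{1/2}\widetilde\sigma_t(\x_t^{*}) \leq \widetilde\mu_t(\x_t) + \beta_t^{1/2}\widetilde\sigma_t(\x_t)$; and (iii) the matching lower bound $\widetilde\mu_t(\x_t) \leq \widehat\mu(\x_t) + \beta_t^{1/2}\widetilde\sigma_t(\x_t)$. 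Chaining these yields the per-round bound $r_t \leq 2\beta_t^{1/2}\widetilde\sigma_t(\x_t) + 2B\epsilon$, valid simultaneously for all $t$ on the probability-$(1-\zeta)$ event of Theorem~\ref{thm:beta}.

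Summing over $t$ produces the $2TB\epsilon$ term immediately. For the remaining $2\sum_t \beta_t^{1/2}\widetilde\sigma_t(\x_t)$ I would split $\beta_t^{1/2}$ into its $t$-independent contribution (all terms of Theorem~\ref{thm:beta} except the approximation term; these are monotone in $t$ and hence bounded by $\beta_T^{1/2}$) and the approximation-driven term $\frac{tB\epsilon}{\rho\sqrt{\lambda_{\min}}}$ that grows linearly in $t$. The first part is handled by Cauchy--Schwarz, $\sum_t \widetilde\sigma_t(\x_t) \leq \sqrt{T \sum_t \widetilde\sigma_t^2(\x_t)}$, followed by an information-gain bound $\sum_t \widetilde\sigma_t^2(\x_t) \leq C\gamma_T$, producing the leading $2\sqrt{T\beta_T\gamma_T}$ term once the constant $C$ is folded into the determinant-to-$\gamma_T$ transfer. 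The second part, after bounding $\widetilde\sigma_t(\x_t)$ and summing the resulting quadratic series $\sum_{t=1}^T t^2 \leq T^3/3$, contributes the middle term $\frac{2T^3\sqrt{\beta_T\epsilon}}{3\rho}$, with the factor $\sqrt{\beta_T\epsilon}$ arising from pairing the $\epsilon$-scale growth of the confidence width against $\beta_T^{1/2}$.

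The crux, and the step I expect to be the main obstacle, is the information-gain bound $\sum_t \widetilde\sigma_t^2(\x_t) \leq C\gamma_T$, because $\widetilde\sigma_t$ is a \emph{perturbed} posterior variance in the \emph{approximating} space $\cF_m$, whereas $\gamma_T$ (Definition~\ref{def:mig}) is defined through the \emph{true} kernel $k$. I would proceed in two moves. Since $\bH_t \succcurlyeq 0$, the perturbation only inflates the Gram matrix, $\bV_t = \bS_t + \bH_t + \lambda\bI \succcurlyeq \bG_t := \bS_t + \lambda\bI$, so $\widetilde\sigma_t(\x) \leq \sigma_t(\x)$ and it suffices to control $\sum_t \sigma_t^2(\x_t) = \rho^2 \sum_t \lVert \Phi(\x_t) \rVert_{\bG_t^{-1}}^2$; here $\bG_t$ is a genuine running sum, so the elliptical-potential (determinant) lemma applies and bounds this by $\rho^2 \log\det(\bG_{T+1}/\lambda)$ up to a constant, using $\sup_\x \lVert \Phi(\x)\rVert \leq 1$. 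The more delicate second move is to relate $\log\det(\bG_{T+1}/\lambda)$, a log-determinant of the \emph{approximate} kernel matrix $\Phi(\bX)^\top\Phi(\bX)$, to the true maximum information gain $\gamma_T$; this is exactly where the $\epsilon$-uniform approximation $\sup_{\x,\x'}|k(\x,\x') - \Phi(\x)^\top\Phi(\x')| \leq \epsilon$ enters, through a determinant/trace perturbation argument that transfers the approximate information gain to $\gamma_T$ at the price of lower-order $\epsilon$-dependent terms. Carrying out this transfer cleanly, together with verifying the monotonicity used to replace $\beta_t$ by $\beta_T$, is the technical heart of the argument.
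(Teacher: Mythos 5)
Your first half is exactly the paper's argument: the decomposition $r_t \leq \widehat\mu(\x_t^*) - \widehat\mu(\x_t) + 2B\epsilon$ via Lemma~\ref{lem:existence_proximal_space}, followed by the two applications of Theorem~\ref{thm:beta} sandwiching the acquisition-rule inequality, giving $r_t \leq 2\beta_t^{1/2}\widetilde\sigma_t(\x_t) + 2B\epsilon$ on the good event. The divergence, and the gap, is in how you convert $\sum_t \beta_t^{1/2}\widetilde\sigma_t(\x_t)$ into the stated bound. The paper's route runs through a variance-transfer lemma (Lemma~\ref{lem:sigma_bound}): letting $\sigma_t$ denote the posterior standard deviation computed with the \emph{true} kernel $k$ (via $\bk_t$, $\bK_t$), one shows $\widetilde\sigma_t(\x) \leq \sigma_t(\x) + \frac{2t^2\sqrt{\epsilon}}{\rho}$ by a Frobenius-norm perturbation of the $t \times t$ kernel matrix with entrywise error $\epsilon$. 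This single step does both jobs at once: the $\sigma_t$ part feeds directly into Lemma 5.4 of Srinivas et al.\ to give $\sum_t \sigma_t^2(\x_t) \lesssim \gamma_T$ with $\gamma_T$ the information gain of the \emph{true} kernel (no separate determinant-transfer needed), and the additive $\frac{2t^2\sqrt{\epsilon}}{\rho}$ correction, summed via $\sum_{t=1}^T t^2 \leq T^3/3$, is precisely the origin of the middle term $\frac{2T^3\sqrt{\beta_T\epsilon}}{3\rho}$.

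Your accounting of that middle term is therefore wrong: you attribute it to the linearly growing piece $\frac{tB\epsilon}{\rho\sqrt{\lambda_{\min}}}$ of $\beta_t^{1/2}$, but that piece carries a full power of $\epsilon$ (not $\sqrt{\epsilon}$) and, paired with $\widetilde\sigma_t = O(1)$, sums to only $O(T^2\epsilon)$ --- it cannot produce $T^3\sqrt{\epsilon}$, and in the paper it is simply absorbed into $\beta_T$ by monotonicity. Meanwhile the step you yourself flag as the ``technical heart'' --- transferring $\log\det\bigl(\bI + \lambda^{-1}\Phi(\bX_T)^\top\Phi(\bX_T)\bigr)$ to the true $\gamma_T$ --- is left unexecuted, and it is exactly where the $t^2\sqrt{\epsilon}$ correction would have to reappear if your route were carried through. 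Your alternative (monotone perturbation $\widetilde\sigma_t \leq \sigma_t^{\mathrm{approx}}$, elliptical potential in $\cF_m$, then a determinant perturbation) is plausible in spirit, but as written the proposal neither completes that transfer nor correctly identifies the source of the $\epsilon$-dependent terms, so it does not establish the claimed bound.
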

\begin{proof}[Proof (Sketch)]
The first key observation is to bound the per-round regret from $f$ with the per-round regret from optimizing $\widehat\mu$. Next, we utilize standard techniques from the analysis of \texttt{GP-UCB} to bound the regret in terms of $\beta_t$ and $\widetilde\sigma_t$ (using Theorem~\ref{thm:beta} twice), and finally provide a bound on $\widetilde\sigma_t$ in terms of the true information gain $\gamma_T$. Summing over all rounds and manipulating proves the result.
\end{proof}
By replacing $\beta_T$ in the result, and manipulating terms, we can conclude that if we have $\Phi$ such that $\epsilon = \cO(\exp(-m))$ and $m = \cO(\text{polylog}(T))$, then we can obtain sublinear regret. Using the properties of QFF from earlier, we can obtain a specific bound as follows.
\begin{corollary}
\label{cor:regret_bound}
Fix $m = 2(6\log T)^d$ and let $k$ be any kernel that obeys Assumption~\ref{ass:decomposability}. Algorithm~\ref{alg:main_algo} run with $m$-dimensional NQFF and noise $\bH_t, \bh_t$ that are $\zeta/2T$-accurate with constants $\lambda_{\max}, \lambda_{\min}$ and $\kappa$ obtains with probability at least $1-\zeta$, cumulative pseudoregret:
\begin{equation*}
\resizebox{\linewidth}{!}{$
    \fR(T) = \cO\left(\sqrt{T\gamma_T}\left(\frac{B\sqrt{\lambda_{\max}}}{\rho} + \sqrt{\log\frac{1}{\zeta} + (\log T^6)^{d+1}} + \frac{\kappa}{\rho}\right)\right).
    $}
\end{equation*}
\end{corollary}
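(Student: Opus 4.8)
The plan is to specialize the generic bound of Theorem~\ref{thm:main_regret_bound} to NQFF with the prescribed feature count $m=2(6\log T)^d$, and to control each of its three summands together with the radius $\beta_T$ of Theorem~\ref{thm:beta}. Since Theorem~\ref{thm:main_regret_bound} already exhibits the factor $\sqrt{T\gamma_T}$ in its leading term $2\sqrt{T\beta_T\gamma_T}$, the corollary reduces to two tasks: showing that the approximation-error terms $\tfrac{2T^3\sqrt{\beta_T\epsilon}}{3\rho}$ and $2TB\epsilon$ are of lower order, and simplifying $\beta_T^{1/2}$ into the parenthetical expression $\tfrac{B\sqrt{\lambda_{\max}}}{\rho}+\sqrt{\log\tfrac1\zeta+(\log T^6)^{d+1}}+\tfrac\kappa\rho$.

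First I would fix $\epsilon$. The choice $m=2(6\log T)^d$ corresponds, after the sine/cosine doubling, to $\bar m = 6\log T$ quadrature nodes per coordinate, so Theorem~\ref{thm:qff_approx_error} gives $\epsilon \leq d2^{d-1}\sqrt{\pi/2}\,\big(e/(4\nu^2\bar m)\big)^{\bar m}$. Once $\bar m = 6\log T$ exceeds $e\cdot e/(4\nu^2)$ (the ``$T$ large enough'' regime past the $m>\nu^{-2}$ phase transition), the base falls below $1/e$, so $\big(e/(4\nu^2\bar m)\big)^{\bar m} \leq e^{-\bar m} = e^{-6\log T} = T^{-6}$ and hence $\epsilon = \cO(T^{-6})$. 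Feeding this into Theorem~\ref{thm:main_regret_bound}, the term $2TB\epsilon = \cO(T^{-5})$ vanishes; and since $T^3\sqrt\epsilon = \cO(1)$, the term $\tfrac{2T^3\sqrt{\beta_T\epsilon}}{3\rho}=\cO(\beta_T^{1/2}/\rho)$ is merely polylogarithmic and is dominated by $2\sqrt{T\gamma_T}\,\beta_T^{1/2}$. The same estimate kills the middle term $\tfrac{TB\epsilon}{\rho\sqrt{\lambda_{\min}}}=\cO(T^{-5})$ inside $\beta_T^{1/2}$, while $B\sqrt{\lambda_{\max}/\rho^2+1}$ and $2\ln(2/\zeta)$ reproduce the $\tfrac{B\sqrt{\lambda_{\max}}}{\rho}$ and $\log\tfrac1\zeta$ contributions up to constants.

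The one step requiring an actual inequality rather than substitution is bounding $\log\det\big((\widetilde{\bS}_T+\lambda\bI)/(\lambda+\lambda_{\min})\big)$ by $\cO((\log T^6)^{d+1})$. Here I would use the determinant--trace bound: for PSD $\bA\in\bbR^{m\times m}$, concavity of $\log$ (AM--GM on eigenvalues) gives $\log\det\bA \leq m\log(\operatorname{tr}\bA/m)$. With $\sup_\x\lVert\Phi(\x)\rVert\leq 1$ one has $\operatorname{tr}\bS_T=\sum_{\tau<T}\lVert\Phi(\x_\tau)\rVert^2\leq T$, and $(\zeta/2T)$-accuracy gives $\operatorname{tr}\bH_T\leq m\lambda_{\max}$, so
\[
\log\det\Big(\tfrac{\widetilde{\bS}_T+\lambda\bI}{\lambda+\lambda_{\min}}\Big)\ \leq\ m\log\Big(\tfrac{T+m(\lambda_{\max}+\lambda)}{m(\lambda+\lambda_{\min})}\Big)\ =\ \cO(m\log T).
\]
Substituting $m=2(6\log T)^d$ gives $m\log T = 2\cdot 6^d(\log T)^{d+1} = \tfrac13(6\log T)^{d+1} = \cO((\log T^6)^{d+1})$, since $\log T^6=6\log T$. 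Collecting the leading $2\sqrt{T\gamma_T}\,\beta_T^{1/2}$ with these simplified pieces yields the claim. I expect the genuine obstacle to be the balancing act hidden in the choice of $m$: enlarging $m$ shrinks $\epsilon$ (needed so $T^3\sqrt\epsilon=\cO(1)$) but simultaneously inflates the $\cO(m\log T)$ log-determinant; it is precisely the \emph{exponential} decay of the QFF error in Theorem~\ref{thm:qff_approx_error} that lets a polylogarithmic $m$ satisfy both demands at once, and verifying this tension is resolved is the crux.
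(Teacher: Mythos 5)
Your proposal is correct and follows essentially the same route as the paper: set $\bar m = 6\log T$ so that Theorem~\ref{thm:qff_approx_error} yields $\epsilon = \cO(T^{-6})$, observe that all $\epsilon$-dependent terms in Theorem~\ref{thm:main_regret_bound} and in $\beta_T^{1/2}$ become $o(1)$, and bound the log-determinant by $\cO(m\log T) = \cO((\log T^6)^{d+1})$. The only (immaterial) difference is that you control $\log\det$ via the trace/AM--GM bound where the paper uses Hadamard's inequality inside the proof of Theorem~\ref{thm:main_regret_bound}; both give the same $\cO(m\log T)$ estimate.
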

\begin{proof}[Proof (Sketch)]
By Theorem~\ref{thm:qff_approx_error}, we can coarsely bound the approximation error by setting $\bar{m} = \log T^6$ to obtain $\epsilon = \cO(T^{-6})$. Substituting this in Theorem~\ref{thm:main_regret_bound} provides us with the final result.
\end{proof}
\begin{remark}[Selection of $m$]
Note that the analysis presents a bound in terms of the information gain of the true kernel $k$, and hence requires $m = 2(\log T^6)^d$ features. However, an alternate technique will be to bound the information gain of $\tilde k$, which can subsequently be bound with a term of $\cO(\sqrt{m\log T})$. In this case, setting $m = 2(\log T^3)^d$ suffices for no-regret learning, however the obtained regret is (coarsely) $\cO(\sqrt{T}(\log T)^{d+1})$, which can be loose if $\gamma_T = o((\log T)^{d+1})$ (e.g., when $k$ is low-rank).
\end{remark}
\begin{remark}[Feasibility of Kernel Approximations]
\label{rem:m_rff}
The current framework requires $\epsilon=o(T^{-4})$ with $m = \cO(\text{polylog}(T))$ to obtain a no-regret algorithm. Random Fourier Features, while capable of approximating a variety of stationary kernels, decay with $\epsilon = \cO(m^{-1/2})$ which makes them infeasible. For finite-dimensional $\cH_k$, the results manifestly hold with $\epsilon = 0$. 
\end{remark}
\begin{remark}[Unknown $T$]
When $T$ is unknown, we can use a doubling scheme to calculate $m$ and $\epsilon$. To calculate $\epsilon$, we assume $T=1$ for the first round, then assume $T=2$ for the next, and then assume $T=4$ for the next 2 rounds, $T=8$ for the next 4 rounds and so on, and set $\epsilon = \cO(t^{-5})$, for instance, within each ``period'' of length $t$ between doubling of $T$ to calculate $m$. We see that the regret is at most $ \widetilde\cO(\sqrt{t})$ for this period. Since there are at most $\cO(\log T)$ such periods, and $t \leq T$, the total regret is $\cO((\log T)\sqrt{T})$.
\end{remark}
\section{\texttt{GP-UCB} with Differential Privacy}
\label{sec:privacy}
We now present the mechanism to ensure Algorithm~\ref{alg:main_algo} is differentially private. Proceeding with the standard definition of differential privacy (Equation~\ref{eqn:dp}) for the streaming setting, however, is infeasible (i.e., leading to linear regret, see Claim 13 of~\citet{shariff2018differentially}). We therefore work with a modified notion of privacy that is the standard for sequential decision-making~\citep{shariff2018differentially, vietriprivate}.
\begin{definition}[Joint Differential Privacy (JDP)]
\label{def:jdp}
Let $S = (\cD_i, y_i)_{i=1}^T$ and $S' = (\cD'_i, y'_i)_{i=1}^T$ be two sequences such that $(\cD_i, y_i) = (\cD'_i, y'_i)$ for all $i \neq t$, and $\cS_{-t} \subseteq \cD_1 \times ... \times \cD_{t-1} \times \cD_{t+1} \times ... \times \cD_{T}$ denote a sequence of actions except the $t^{th}$. An algorithm $\cA$ is $(\alpha, \beta)$-JDP under continual observation if for any $t \in [T], S, S'$, it holds that $\bbP(\cA(S) \in \cS_{-t}) \leq e^{\alpha}\bbP(\cA(S') \in \cS_{-t}) + \beta$.
\end{definition}
The only change in the JDP setting (compared to standard DP) is that the algorithm is allowed to be non-private at time $t$ with respect to $\cD_t$ (i.e., the active decision set). This is crucial as standard DP would imply that for any two actions $\x, \x' \in \cD_t, \bbP(a_t = \x ) \approx \bbP(a_t = \x')$ and the algorithm would incur linear regret.

\begin{algorithm}[t]
\caption{\textsc{Privatizer} under JDP}
\small
\label{alg:jdp_algo}
\begin{algorithmic}
\STATE \textbf{Initialize}: Binary tree $\cT$.
\FOR{round $t=1, 2, ..., T$}
\STATE \textbf{Sending parameters}:
\STATE Obtain $\widetilde{\bS}_t, \widetilde\bu_t$ by traversing $\cT$ to node $t$.
\STATE Send $\widetilde{\bS}_t, \widetilde\bu_t \rightarrow$\textsc{Server}. 
\STATE \textbf{Updating parameters}:
\STATE Receive $\x_t, y_t \leftarrow$\textsc{Server}.
\STATE Insert $[\Phi(\x_t), y_t]^\top[\Phi(\x_t), y_t]$ into $\cT$.
\STATE Update noise values $\bn$ on the inserted path $\cT$.
\ENDFOR
\end{algorithmic}
\end{algorithm}
\subsection{Approximate \texttt{GP-UCB} with JDP}
\label{sec:privacy_jdp}
Our approach involves perturbing $(\bS_t, \bu_t)$ by noise $(\bH_t, \bh_t)$ to ensure JDP, and it is summarized in Algorithm~\ref{alg:jdp_algo}. Observe that the estimates $(\widetilde\bS_t, \widetilde\bu_t)$ are noisy cumulative sums of $\bS_t = \sum_{\tau=1}^{t-1} \Phi(\x_\tau)\Phi(\x_\tau)^\top, \bu_t = \sum_{\tau=1}^{t-1} y_\tau\cdot\Phi(\x_\tau)$. This additive structure naturally suggests that we utilize a matrix variant of the tree-based mechanism~\citep{dwork2010differential, shariff2018differentially} to maintain $(\widetilde\bS_t, \widetilde\bu_t)$. We consider the matrix $\bN_t = \left[\Phi(\bX_t), \ \by_t\right]^\top\left[\Phi(\bX_t), \ \by_t\right] \in \bbR^{m+1 \times m+1}$ and compute this matrix via the tree-based mechanism. The advantage of maintaining $\bN_t$ is that $\bN_{t+1} = \bN_{t} + [\Phi(\x_t),\ y_t]^\top [\Phi(\x_t),\ y_t]$ and the top $m \times m$ submatrix of $\bN_{t}$ is $\bS_t$ and the first $m$ entries of the last column of $\bN_t$ is $\bu_t$, giving us the required estimates.

\textbf{Tree-Based Mechanism}. The tree-based mechanism~\citep{dwork2010differential} estimates the rolling sum of any series $\bn_1, \bn_2, ... $ via a binary tree. Let $P_{m+1}$ be a probability distribution over $\bbR^{m+1 \times m+1}$. A trusted entity (in our case, the \textsc{Privatizer}), maintains a binary tree $\cT$ whose $t^{th}$ leaf node stores $\bn_t = [\Phi(\x_t) \, y_t]^\top[\Phi(\x_t) \, y_t] + (\sfrac{1}{\sqrt{2}})\left(\bupsilon_t\top+\bupsilon_t\right)$, where $\bupsilon_t$ is a sample from $P_{m+1}$. Each parent node stores 
the sum of its children. Now, to compute $(\widetilde\bS_t, \widetilde\bu_t)$ we traverse $\cT$ to the $t^{th}$ leaf node, and sum the values at each node. Since the path length traversed is $1+\ceil{\log_2 T}$, we can rewrite $\widetilde\bS_t = \bS_t + \bH_t$ where $\bH_t$ is the sum of at most $n = 1+\ceil{\log_2 T}$ samples from $P_{m+1}$. We now describe selecting $P_{m+1}$ to provide a JDP guarantee.
\begin{lemma}[JDP]
\label{lem:jdp_noise}
Let $P_{m+1}$ be a composition of $(m+1)^2$ zero-mean normal variables with variance $\sigma^2_{\alpha, \beta}$. If $\sigma_{\alpha, \beta} > 16n(1 + B^2 +2\rho^2\log(8T/\beta))\ln(10/\beta)^2/\alpha^2$, then Alg.~\ref{alg:main_algo} with \textsc{Privatizer} following Alg.~\ref{alg:jdp_algo} is $(\alpha, \beta)-$jointly differentially private.
\end{lemma}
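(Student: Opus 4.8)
The plan is to prove the claim in three stages: bound the Frobenius (i.e.\ $\ell_2$) sensitivity of each leaf term stored in the tree $\cT$, calibrate the Gaussian tree mechanism so that the entire released sequence of statistics $(\widetilde\bS_t, \widetilde\bu_t)_{t=1}^T$ is $(\alpha,\beta)$-differentially private with respect to a single user's data, and finally lift this guarantee to \emph{joint} differential privacy for the emitted actions through a billboard/post-processing argument in the style of \citet{shariff2018differentially}.

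First, I would compute the per-leaf sensitivity. Each leaf stores a symmetrization of the rank-one matrix $\bn_t = [\Phi(\x_t),\, y_t]^\top[\Phi(\x_t),\, y_t]$, so changing user $t$'s pair $(\x_t, y_t)$ to $(\x_t', y_t')$ perturbs exactly one leaf by $\bn_t - \bn_t'$. Writing $\bv = [\Phi(\x_t),\, y_t] \in \bbR^{m+1}$, we have $\|\bn_t\|_F = \|\bv\|_2^2 = \|\Phi(\x_t)\|^2 + y_t^2$. Using the standing assumption $\sup_{\x}\|\Phi(\x)\| \le 1$, the RKHS bound (with $k(\x,\x)\le 1$ under Assumption~\ref{ass:decomposability}) giving $|f(\x_t)| \le B$, and a Gaussian tail bound on $\varepsilon_t$ combined with a union bound over the $T$ rounds, we obtain $y_t^2 \le B^2 + 2\rho^2\log(8T/\beta)$ simultaneously for all $t$ with probability at least $1-\beta/2$; this union bound is exactly what consumes half the $\beta$ budget and produces the $\log(8T/\beta)$ factor. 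On this event the per-leaf Frobenius sensitivity is at most $\Delta = 2\left(1 + B^2 + 2\rho^2\log(8T/\beta)\right)$ by the triangle inequality. The symmetrization $\tfrac{1}{\sqrt2}(\bupsilon_t^\top + \bupsilon_t)$ is precisely what aligns the independent Gaussian perturbation with the upper-triangular Frobenius sensitivity of the symmetric matrices $\bS_t$.

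Next, I would invoke the tree mechanism. A single user's leaf lies on a root-to-leaf path of length $n = 1 + \ceil{\log_2 T}$, so altering that user's data changes the true values stored at exactly $n$ nodes, each by at most $\Delta$ in Frobenius norm; concatenating the node outputs, the overall $\ell_2$ sensitivity of the tree is at most $\sqrt{n}\,\Delta$. Since each released statistic $(\widetilde\bS_t, \widetilde\bu_t)$ is a post-processing (a sum of at most $n$ node values) of the noised tree, it suffices to privatize the tree once. Calibrating the matrix Gaussian mechanism to this sensitivity at privacy level $(\alpha, \beta/2)$, and accounting for the $n$-fold composition along the path, yields the stated lower bound on $\sigma_{\alpha,\beta}$; together with the $\beta/2$ already spent on the reward tail bound, the full release is $(\alpha,\beta)$-DP with respect to any single user.

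Finally, for the JDP lift, the action $\x_t$ chosen in Algorithm~\ref{alg:main_algo} is a deterministic function of the public decision set $\cD_t$ and the privatized statistics $(\widetilde\bS_t, \widetilde\bu_t)$, which depend only on rounds $\tau < t$. Consequently the collection of all actions except the $t$-th, $\cS_{-t}$, depends on user $t$'s data solely through the differentially private statistics, so by the billboard lemma (post-processing of a DP signal) the induced map inherits the $(\alpha,\beta)$ guarantee, which is exactly Definition~\ref{def:jdp}. The main obstacle I anticipate is the sensitivity step: bounding the unbounded reward $y_t$ with high probability while correctly splitting the $\beta$ budget and union-bounding over rounds to recover the $\log(8T/\beta)$ term, and then propagating the symmetrized-matrix Frobenius sensitivity cleanly through the $n$-fold tree composition so that the final calibration matches the claimed constant without double-counting the path length.
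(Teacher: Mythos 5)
Your proposal is correct and follows essentially the same route as the paper: a high-probability bound $y_t^2 \le B^2 + 2\rho^2\log(8T/\beta)$ that spends part of the $\beta$ budget, a per-leaf sensitivity of order $1 + B^2 + 2\rho^2\log(8T/\beta)$ using $\lVert \Phi(\x)\rVert \le 1$, a $\sqrt{n}$ factor from the $n = 1+\lceil\log_2 T\rceil$ tree nodes a single user's datum touches, and a billboard/post-processing lift to joint DP. The only (immaterial) difference is that you fold the $n$ affected nodes into one concatenated Gaussian query of $\ell_2$ sensitivity $\sqrt{n}\,\Delta$, whereas the paper splits the privacy budget per node via zero-concentrated-DP composition, requiring each node to be $(\alpha/\sqrt{8n\ln(2/\beta)}, \beta/2)$-DP; both arguments produce the stated calibration of $\sigma_{\alpha,\beta}$ up to constants.
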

\begin{proof}[Proof (Sketch)]
First note that since $y_t$ is sub-Gaussian with mean at most $B$ (since $\lVert f \rVert_k \leq B$), we can apply a standard Chernoff bound to ensure that with probability at least $1-\beta/4$, for each $(y_\tau)_{\tau \in [T]}$ simultaneously, $|y_t|^2 \leq B^2 + 2\rho^2\log(8T/\beta)$. Using this bound we can ensure that each datum has a bounded $L_2$-norm of $1 + B^2 + 2\rho^2\log(4T/\beta)$ (since $\lVert \Phi(\x) \rVert_2 \leq 1$). Based on the composition for zero-concentrated DP~\citep{bun2016concentrated}, we see that for $(\alpha, \beta)$-JDP, we require that each of the at most $n = 1+\ceil{\log_2 T}$ nodes maintains $(\alpha/\sqrt{8n\ln(2/\beta)}, \beta/2)$-DP. With the $L_2$ sensitivity result from earlier, we see that $\sigma^2_{\alpha, \beta} = 16n(1 + B^2 +2\rho^2\log(8T/\beta))\ln(10/\beta)^2/\alpha^2$ provides $(\alpha, \beta)$-JDP, finishing the proof.
\end{proof}
Recall that our regret bound (Corollary~\ref{cor:regret_bound}) scales with the parameters $\lambda_{\min}, \lambda_{\max}$ and $\kappa$. It remains to provide these quantities under the selected $P_{m+1}$ such that they are accurate (Defn.~\ref{def:accurate}), and provide final regret bounds based on the properties of $P_{m+1}$. As remarked in~\citet{shariff2018differentially}, we must shift the noise matrix to ensure that all noise samples $\bH_t$ are PSD.
\begin{lemma}[Accurate Spectrum under JDP]
\label{lem:jdp_spectrum}
For any $\zeta > 0$, when $P_{m+1}$ is selected according to Lemma~\ref{lem:jdp_noise} and $\bH_t, \bh_t$ are constructed according to Alg.~\ref{alg:jdp_algo}, the following $\lambda_{\min}, \lambda_{\max}$ and $\kappa$ are $(\zeta/2T)-$accurate:
\begin{equation*}
\resizebox{\linewidth}{!}{$
    \lambda_{\min} = \Lambda, \lambda_{\max} = 3\Lambda, \kappa = \sigma_{\alpha, \beta}\sqrt{\frac{n}{\Lambda}}\left(\sqrt{m} + \sqrt{2\ln\frac{2T}{\zeta}}\right).$
    }
\end{equation*}
Here $\Lambda = \sigma_{\alpha, \beta}\sqrt{2n}(4\sqrt{m} + 2\ln(2T/\zeta))$.
\end{lemma}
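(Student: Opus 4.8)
The plan is to first expose the exact structure of the noise $(\bH_t,\bh_t)$ produced by the tree-based mechanism and then bound its spectrum by Gaussian concentration. By construction (Alg.~\ref{alg:jdp_algo}), the $(m+1)\times(m+1)$ noise accumulated along the root-to-leaf path at round $t$ is a sum of at most $n=1+\ceil{\log_2 T}$ independent symmetrized samples $(1/\sqrt 2)(\bupsilon_\tau^\top+\bupsilon_\tau)$ with $\bupsilon_\tau\sim P_{m+1}$. Extracting the relevant blocks, the raw $m\times m$ matrix noise $\bZ_t$ (top-left block) is a symmetric Gaussian (GOE-type) matrix whose off-diagonal entries have variance $\sigma_{\alpha,\beta}^2 n$ and whose diagonal entries have variance $2\sigma_{\alpha,\beta}^2 n$, while $\bh_t$ (first $m$ entries of the last column) is a coordinate-independent Gaussian vector $\bh_t\sim\cN(\mathbf{0},\sigma_{\alpha,\beta}^2 n\,\bI)$. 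To enforce the PSD requirement of Theorem~\ref{thm:beta}, I incorporate the shift and set $\bH_t=2\Lambda\bI+\bZ_t$. The whole argument then reduces to the single event $\lVert\bZ_t\rVert\leq\Lambda$: on it, the eigenvalues of $\bH_t$ lie in $[2\Lambda-\Lambda,\,2\Lambda+\Lambda]=[\Lambda,3\Lambda]$, which simultaneously yields $\lVert\bH_t\rVert\leq 3\Lambda=\lambda_{\max}$, $\lVert\bH_t^{-1}\rVert\leq 1/\Lambda=1/\lambda_{\min}$, and $\bH_t\succcurlyeq\Lambda\bI\succ\mathbf{0}$.

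For the operator-norm bound I would use Gaussian concentration of measure rather than an $\epsilon$-net, since the $9^m$ covering factor is too lossy to reproduce the stated constants. Viewing $\lVert\bZ_t\rVert$ as a function of the underlying independent standard normals $\tilde g$, it is $L$-Lipschitz with $L=\sigma_{\alpha,\beta}\sqrt{2n}$: the operator norm is $1$-Lipschitz in Frobenius norm, and $\lVert\bZ_t\rVert_F=\sigma_{\alpha,\beta}\sqrt{2n}\,\lVert\tilde g\rVert_2$ once expressed through the free coordinates. Combining the standard non-asymptotic bound $\bbE\lVert\bZ_t\rVert\leq 4\sigma_{\alpha,\beta}\sqrt{2n}\sqrt m$ for the expected spectral norm of such a matrix with the inequality $\bbP(\lVert\bZ_t\rVert\geq\bbE\lVert\bZ_t\rVert+s)\leq\exp(-s^2/(4\sigma_{\alpha,\beta}^2 n))$, and taking the deviation $s=2\sigma_{\alpha,\beta}\sqrt{2n}\ln(2T/\zeta)$, gives $\lVert\bZ_t\rVert\leq\Lambda$ with failure probability at most $\exp(-2\ln^2(2T/\zeta))$, which is well below $\zeta/2T$ in the relevant regime. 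This is precisely the origin of the two summands $4\sqrt m$ (expectation) and $2\ln(2T/\zeta)$ (deviation) in $\Lambda$.

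For $\kappa$ I would reduce the ellipsoid norm to a Euclidean norm via the eigenvalue lower bound just established: on the event $\lVert\bZ_t\rVert\leq\Lambda$, $\lVert\bh_t\rVert_{\bH_t^{-1}}^2=\bh_t^\top\bH_t^{-1}\bh_t\leq\lVert\bH_t^{-1}\rVert\,\lVert\bh_t\rVert_2^2\leq\lVert\bh_t\rVert_2^2/\Lambda$. Since $\bh_t/(\sigma_{\alpha,\beta}\sqrt n)$ is a standard Gaussian vector in $\bbR^m$, concentration of the norm of a Gaussian vector ($1$-Lipschitz, $\bbE\lVert g\rVert_2\leq\sqrt m$) gives $\lVert\bh_t\rVert_2\leq\sigma_{\alpha,\beta}\sqrt n(\sqrt m+s')$ with failure probability $e^{-s'^2/2}$; choosing $s'=\sqrt{2\ln(2T/\zeta)}$ yields $\lVert\bh_t\rVert_{\bH_t^{-1}}\leq\sigma_{\alpha,\beta}\sqrt{n/\Lambda}(\sqrt m+\sqrt{2\ln(2T/\zeta)})=\kappa$, exactly as stated. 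A union bound over the two tail events (the $\bh_t$ event dominating) then controls the per-round failure at the $\zeta/2T$ level required by Definition~\ref{def:accurate}; the correlation between $\bH_t$ and $\bh_t$ is immaterial, since each bound is proved marginally and combined by union.

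I expect the operator-norm step to be the main obstacle. One must track the symmetrization carefully, as the factor $\sqrt 2$ separating diagonal from off-diagonal variances is what fixes the common scale $\sigma_{\alpha,\beta}\sqrt{2n}$ of both the Lipschitz constant and the quadratic form $v^\top\bZ_t v$; and one needs the expected-norm estimate to come with a constant loose enough to be absorbed into the leading coefficient $4$. The remaining pieces, namely the shift computation that converts $\lVert\bZ_t\rVert\leq\Lambda$ into the $[\Lambda,3\Lambda]$ eigenvalue window and the reduction of $\kappa$ to a Gaussian-vector norm, are routine once that bound is in hand.
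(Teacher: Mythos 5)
Your proof is correct and follows essentially the same route as the paper, which simply defers to Proposition 11 of \citet{shariff2018differentially}: a GOE-type spectral-norm concentration bound for the accumulated tree noise, the $2\Lambda\bI$ shift placing the eigenvalues of $\bH_t$ in $[\Lambda,3\Lambda]$, and the reduction $\lVert \bh_t\rVert_{\bH_t^{-1}}\le \lVert\bh_t\rVert_2/\sqrt{\lambda_{\min}}$ followed by Gaussian norm concentration. You have in effect supplied the details the paper leaves to the citation (including the correct bookkeeping of the $\sqrt{2}$ symmetrization factor); the only nitpick is that the union of your two tail events slightly exceeds $\zeta/2T$, a constant-level slack also present in the original argument.
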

\begin{proof}
This proof is identical to Proposition 11 from \citet{shariff2018differentially} with our noise model.
\end{proof}
\begin{corollary}[$(\alpha, \beta)$-JDP Regret Bound]
\label{cor:regret_bound_jdp}
Fix $m = 2(6\log T)^d$ and let $k$ be any kernel that obeys Assumption~\ref{ass:decomposability}. Algorithm~\ref{alg:main_algo} run with $m$-dimensional NQFF and noise such that it maintains $(\alpha, \beta)-$JDP obtains with probability at least $1-\zeta$, cumulative pseudoregret:
\begin{equation*}
\resizebox{\linewidth}{!}{$
    \fR(T) = \cO\left(\sqrt{T\gamma_T}\left((\ln T)^{\frac{d+2}{4}}(\frac{1}{\alpha}\log\frac{1}{\beta}\log\frac{1}{\zeta})^{\frac{1}{2}} + (\ln T)^{\frac{d+1}{2}}\right)\right).
    $}
\end{equation*}
\end{corollary}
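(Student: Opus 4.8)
The plan is to obtain the bound as a substitution into the generic quadrature-feature result of Corollary~\ref{cor:regret_bound}. That corollary already fixes $m = 2(6\log T)^d$ and absorbs the QFF approximation error ($\epsilon = \cO(T^{-6})$), so it depends on the privacy mechanism only through the $(\zeta/2T)$-accurate spectral constants, reading
\[
\fR(T) = \cO\!\left(\sqrt{T\gamma_T}\Big(\tfrac{B\sqrt{\lambda_{\max}}}{\rho} + \sqrt{\log\tfrac{1}{\zeta} + (\log T^6)^{d+1}} + \tfrac{\kappa}{\rho}\Big)\right).
\]
Since Lemma~\ref{lem:jdp_spectrum} certifies that the JDP mechanism of Algorithm~\ref{alg:jdp_algo} (after the PSD shift) is $(\zeta/2T)$-accurate with the stated $\lambda_{\min},\lambda_{\max},\kappa$, this generic bound applies verbatim; it then suffices to plug in those constants, which are themselves governed by the noise scale $\sigma_{\alpha,\beta}$ mandated for $(\alpha,\beta)$-JDP by Lemma~\ref{lem:jdp_noise}, and to collect the dominant polylogarithmic factors.

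I would first record the relevant scalings, writing $L \triangleq \log T$: the tree depth is $n = 1 + \lceil\log_2 T\rceil = \cO(L)$, the feature count is $m = 2(6L)^d$ so $\sqrt m = \cO(L^{d/2})$, and, treating the high-probability reward-range factor $1 + B^2 + 2\rho^2\log(8T/\beta)$ as an effective constant, Lemma~\ref{lem:jdp_noise} gives $\sigma_{\alpha,\beta} = \cO\big(\tfrac{1}{\alpha}\sqrt{n}\,\log\tfrac{1}{\beta}\big) = \cO\big(\tfrac{1}{\alpha}\sqrt{L}\,\log\tfrac{1}{\beta}\big)$. Substituting into $\Lambda = \sigma_{\alpha,\beta}\sqrt{2n}\,(4\sqrt m + 2\ln(2T/\zeta))$, where $\sqrt m = \cO(L^{d/2})$ drives the $\log T$-growth and the residual $\log\tfrac{1}{\zeta}$ is absorbed multiplicatively, yields $\Lambda = \cO\big(\tfrac{1}{\alpha}L^{(d+2)/2}\log\tfrac{1}{\beta}\log\tfrac{1}{\zeta}\big)$, so that $\lambda_{\max}=3\Lambda$ is of the same order.

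With these in hand I would evaluate the three summands. The middle summand reproduces $\sqrt{(\log T^6)^{d+1}} = \cO(L^{(d+1)/2})$, which is the second target term. The first summand $\tfrac{B\sqrt{\lambda_{\max}}}{\rho} = \cO(\sqrt{\Lambda})$ becomes, on taking the square root of the displayed $\Lambda$, exactly $\cO\big(L^{(d+2)/4}(\tfrac{1}{\alpha}\log\tfrac{1}{\beta}\log\tfrac{1}{\zeta})^{1/2}\big)$, the first target term. For the third summand, $\kappa = \sigma_{\alpha,\beta}\sqrt{n/\Lambda}\,(\sqrt m + \sqrt{2\ln(2T/\zeta)})$, I would use $\Lambda \propto \sigma_{\alpha,\beta}\sqrt{n}\sqrt{m}$ to cancel and obtain $\kappa = \cO(\sqrt{\sigma_{\alpha,\beta}}\,n^{1/4}m^{1/4})$ up to a $\sqrt{\log(1/\zeta)}$ factor, which collapses to the same $L^{(d+2)/4}(\tfrac{1}{\alpha}\log\tfrac{1}{\beta}\log\tfrac{1}{\zeta})^{1/2}$ order as the first summand. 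Summing the three and retaining the $\sqrt{T\gamma_T}$ prefactor gives the claimed bound.

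The main obstacle is precisely this bookkeeping for $\kappa$ and $\lambda_{\max}$: both nest additive $\sqrt m$ and $\ln(2T/\zeta)$ terms inside square roots and ratios, so one must verify that the $m$-driven (hence $d$-dependent) pieces dominate the $\log T$-growth, that the $\sqrt{n/\Lambda}$ ratio leaves exactly a $\sqrt{\sigma_{\alpha,\beta}}$ factor, and that the additive $\zeta$-terms can be absorbed multiplicatively, so that the first and third summands collapse to a single $L^{(d+2)/4}(\tfrac{1}{\alpha}\log\tfrac{1}{\beta}\log\tfrac{1}{\zeta})^{1/2}$ contribution rather than leaving a stray higher power of $\log T$ or a worse $1/\alpha$ dependence. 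The only genuine modeling choice is treating the bounded-reward factor as a constant; tracking its $\log(8T/\beta)$ explicitly would only add a $\sqrt{\log T}$ that the statement absorbs into its $\cO(\cdot)$.
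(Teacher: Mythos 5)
Your proposal is correct and follows exactly the paper's route: the paper's entire proof is the one-line remark that the bound ``follows directly by substituting the results from Lemma~\ref{lem:jdp_spectrum} into Corollary~\ref{cor:regret_bound},'' and your sketch simply carries out that substitution explicitly, with the polylogarithmic bookkeeping for $\Lambda$, $\lambda_{\max}$, and $\kappa$ working out as you describe.
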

The proof for Corollary~\ref{cor:regret_bound_jdp} follows directly by substituting the results from Lemma~\ref{lem:jdp_spectrum} into Corollary~\ref{cor:regret_bound}.
\begin{remark}[Dependence on $m$]
Since the factors $\lambda_{\min}, \lambda_{\max}$ and $\kappa$ admit a dependence of $\cO(\sqrt{m})$ on the dimensionality of $\Phi$, we require $m = o(\sqrt{T})$ features to guarantee no-regret learning under our approach. This constraint is complementary to the constraint on $m$ from kernel approximation (Remark~\ref{rem:m_rff}), and mandates that even when the approximation $\tilde k$ has small $\gamma_T$ (i.e., $\gamma_T = o(\text{polylog}(T))$, we require small $m$.
\end{remark}
\subsection{Approximate \texttt{GP-UCB} with Local JDP}
\label{sec:privacy_ldp}
In many settings, the existence of a trusted entity (e.g., \textsc{Privatizer}) is not possible. For instance, consider the task of a centralized server learning a bandit algorithm in the case when each user $t$ does not wish $(\cD_t, y_t)$ to be sent to the server at all (even to select $\x_t$). We can select $\x_t$, however, by sending the algorithm's (privatized) parameters to each user individually and collecting updated parameters after $\x_t$ has been played by the user $t$. Here, we employ an alternative definition of privacy known as local JDP.
\begin{definition}[Locally Joint Differential Privacy (Local JDP)]
\label{def:ldp}
A mechanism $g : \cX \rightarrow \cZ$ is $(\alpha, \beta)$-locally differentially private~\citep{bebensee2019local} (LDP) if for any $\x, \x' \in \cX, \bbP(g(\x) \in \cZ) \leq e^\alpha \bbP(g(\x') \in \cZ) + \beta$.
For any sequence $(\cD_t, y_t)_{t=1}^T$, an algorithm $\cA$ protects locally joint differentially privacy (Local JDP) if for any $t$, $\cA$ is locally differentially private with respect to each $(\cD_\tau, y_\tau)$ simultaneously where $\tau \neq t$.
\end{definition}
This definition combines joint differential privacy (operating globally) with local differential privacy (operating individually). It is important to note that local JDP is weaker than LDP~\citep{bebensee2019local}, since LDP would require local privacy with respect to $(\cD_t, y_t)$ as well. It is a stronger privacy guarantee than JDP, since it requires $\cA$ to be private to each user simultaneously.
\begin{lemma}[Local JDP implies JDP]
\label{lem:local_implies_jdp}
Any $(\alpha, \beta)-$local JDP algorithm $\cA$ protects $(\alpha,\beta)$-JDP for each $t \in [T]$.
\end{lemma}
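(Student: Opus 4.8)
The plan is to reduce the JDP inequality of Definition~\ref{def:jdp} directly to the local DP guarantee supplied by Definition~\ref{def:ldp}, exploiting that local JDP is by construction the stronger notion. Fix an index $t \in [T]$ and two neighboring sequences $S = (\cD_i, y_i)_{i=1}^T$ and $S' = (\cD'_i, y'_i)_{i=1}^T$ that agree on every coordinate except $i = t$, and fix any measurable event $\cS_{-t} \subseteq \cD_1 \times \cdots \times \cD_{t-1} \times \cD_{t+1} \times \cdots \times \cD_T$ on the actions at all positions other than $t$. The target inequality is $\bbP(\cA(S) \in \cS_{-t}) \leq e^\alpha \bbP(\cA(S') \in \cS_{-t}) + \beta$.

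First I would freeze all coordinates except position $t$ and view the datum $(\cD_t, y_t)$ as the single input to a derived mechanism $g_t$ whose output is the restriction of $\cA$'s transcript to the coordinates appearing in $\cS_{-t}$. In the local model this mechanism factors through a single privatized message: the contribution of user $t$ is passed through a local randomizer before it can influence any other coordinate, so every action at a position $\tau \neq t$ is a randomized post-processing of that one message together with the frozen messages of the remaining users, none of which depend on $(\cD_t, y_t)$. With all other data held fixed, $\cA(S)$ restricted to $\cS_{-t}$ equals $g_t((\cD_t, y_t))$ and $\cA(S')$ restricted to $\cS_{-t}$ equals $g_t((\cD'_t, y'_t))$.

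Next I would invoke local JDP: Definition~\ref{def:ldp} guarantees that $\cA$ is $(\alpha, \beta)$-locally DP with respect to $(\cD_t, y_t)$ in the joint view formed by all positions $\tau \neq t$, so $g_t$ is itself $(\alpha, \beta)$-locally DP. Applying its local DP inequality with $\x = (\cD_t, y_t)$, $\x' = (\cD'_t, y'_t)$ and event $\cS_{-t}$ gives $\bbP(\cA(S) \in \cS_{-t}) \leq e^\alpha \bbP(\cA(S') \in \cS_{-t}) + \beta$, and since closure of differential privacy under post-processing preserves the parameters, no budget is lost. As $t, S, S'$ and $\cS_{-t}$ were arbitrary, this is exactly $(\alpha, \beta)$-JDP.

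The step I would be most careful about is the quantifier matching: Definition~\ref{def:ldp} phrases the guarantee as local privacy with respect to each individual datum $(\cD_\tau, y_\tau)$, whereas JDP changes the single datum at position $t$ and observes the \emph{joint} transcript at all positions $\tau \neq t$. One must check that the ``simultaneously'' clause in Definition~\ref{def:ldp} yields local DP of this joint view under one $(\alpha, \beta)$ budget, rather than a per-coordinate guarantee that would require a union bound degrading $\alpha$ and $\beta$, so that a single application of the local DP factor covers all of $\cS_{-t}$. Granting this identification, the implication holds with the same $(\alpha, \beta)$ and the argument is otherwise purely definitional.
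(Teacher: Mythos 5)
Your proof is correct and follows essentially the same route as the paper's: both isolate the single channel through which $(\cD_t, y_t)$ can influence the actions at positions $\tau \neq t$ (the one locally privatized message contributed by user $t$), apply the local DP guarantee once there, and treat everything downstream as post-processing --- the paper merely writes this as a telescoping product of conditional probabilities whose factors cancel except at the first round after $t$. Your explicit post-processing framing is, if anything, a cleaner statement of the same argument, and you correctly flag the quantifier-matching issue in Definition~\ref{def:ldp} that both proofs must implicitly resolve.
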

\begin{proof}[Proof (Sketch)]
For any $t \in [T]$, any two $t$-neighboring sequences $S$ and $S'$ only differ in the $t^{th}$ entries $(\cD_t, y_t)$ and $(\cD'_t, y'_t)$. Since $\cA$ is $(\alpha, \beta)-$locally JDP, $\bbP(a_{t'}(\cD_t, y_t)) \in \cS_{t'}) \leq e^\alpha \bbP(a_{t'}(\cD'_t, y'_t)) \in \cS_{t'}) + \beta$ for all $t' \neq t$, from which the result follows.
\end{proof}
Since a trusted entity does not exist, learning is done by sending the parameters directly to the users (ref. clients). We outline a server-client protocol and associated algorithm for $(\alpha, \beta)$-local JDP Gaussian Process bandit optimization in Algorithm~\ref{alg:ldp_algo}. This algorithm requires noise added individually to $(\Phi(\x_t), y_t)$ (instead of $(\bS_t, \bu_t)$). We achieve this by perturbing $\bS_t$ and $\bu_t$ separately with $\bN_t \in \bbR^{m \times m}$ where $\bN_t(i, j) \sim \cN(0, \sigma^2_X)$ for $i \geq j$ and $\bN_t(i, j) = \bN_t(j, i)$ otherwise and $\bn_t \in \bbR^m$ is such that $\bn_t(i) \sim \cN(0, \sigma^2_u)$. The variances $\sigma^2_X$ and $\sigma^2_u$ are chosen to ensure $(\alpha/2, \beta/2)$ respectively, securing $(\alpha, \beta)$-Local JDP.
\begin{lemma}[Noise for Local JDP]
\label{lem:ldp_noise}
Algorithm~\ref{alg:ldp_algo} is $(\alpha, \beta)-$locally JDP whenever,
\begin{align*}
    \sigma^2_X \geq \frac{8}{\alpha^2}\ln\frac{5}{2\beta},\ 
    \sigma^2_u \geq \frac{8}{\alpha^2}\left(B^2 + 2\ln\frac{8m}{\delta}\right)\ln\frac{5}{\beta}.
\end{align*}
\end{lemma}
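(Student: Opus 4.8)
The plan is to reduce the Local JDP guarantee to a per-user local-randomizer statement and then verify that randomizer through two independent Gaussian mechanisms. Observe that in Algorithm~\ref{alg:ldp_algo} each user $\tau$ touches its raw data $(\cD_\tau, y_\tau)$ only to form the two statistics $\Phi(\x_\tau)\Phi(\x_\tau)^\top + \bN_\tau$ and $y_\tau\Phi(\x_\tau) + \bn_\tau$, which are the only messages leaving the user; everything the server subsequently computes (in particular every output for rounds $\tau' \neq \tau$) is a deterministic post-processing of these messages. Hence it suffices to show that the map $(\cD_\tau, y_\tau) \mapsto \big(\Phi(\x_\tau)\Phi(\x_\tau)^\top + \bN_\tau,\ y_\tau\Phi(\x_\tau) + \bn_\tau\big)$ is $(\alpha,\beta)$-LDP for each $\tau$; the closure of differential privacy under post-processing, together with the neighboring structure used in Lemma~\ref{lem:local_implies_jdp}, then upgrades this to $(\alpha,\beta)$-Local JDP. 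I would therefore fix a user and certify each of the two perturbations separately, then combine via basic composition of the $(\alpha/2,\cdot)$ budgets.

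For the covariance release I would treat the symmetric perturbation $\bN_\tau$ as a Gaussian mechanism applied to the vectorized upper triangle of $\Phi(\x_\tau)\Phi(\x_\tau)^\top$, since the independent entries $\{\bN_\tau(i,j)\}_{i \geq j} \sim \cN(0,\sigma_X^2)$ exactly perturb those free coordinates. Using $\sup_{\x}\lVert \Phi(\x)\rVert \leq 1$, the $L_2$-sensitivity of this query is controlled by $\lVert \Phi(\x)\Phi(\x)^\top\rVert_F \leq \lVert \Phi(\x)\rVert^2 \leq 1$, so the sensitivity is at most $1$. The standard Gaussian-mechanism guarantee~\citep{dwork2014algorithmic} then requires $\sigma_X \geq \sqrt{2\ln(2.5/\beta)}/(\alpha/2)$ to obtain $(\alpha/2,\beta/2)$-DP, and squaring this is exactly $\sigma_X^2 \geq (8/\alpha^2)\ln(5/2\beta)$, matching the stated threshold.

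For the reward release the $L_2$-sensitivity of $y_\tau\Phi(\x_\tau)$ is $|y_\tau|\,\lVert\Phi(\x_\tau)\rVert \leq |y_\tau|$, so I first need a high-probability bound on $|y_\tau|$. Writing $y_\tau = f(\x_\tau)+\varepsilon_\tau$ with $|f(\x_\tau)| \leq \lVert f\rVert_k\sqrt{k(\x_\tau,\x_\tau)} \leq B$ (using $\lVert f\rVert_k \leq B$ and $k(\cdot,\cdot)\leq 1$) and $\varepsilon_\tau$ sub-Gaussian, a Chernoff tail bound yields $\lVert y_\tau\Phi(\x_\tau)\rVert^2 \leq B^2 + 2\ln(8m/\delta)$ outside a failure event of probability at most a $\beta/4$ slice of the budget (the $8m/\delta$ factor coming from the union bound in the concentration step). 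Conditioned on this bound, the Gaussian mechanism gives $(\alpha/2,\beta/4)$-DP whenever $\sigma_u^2 \geq (8/\alpha^2)(B^2 + 2\ln(8m/\delta))\ln(5/\beta)$; folding in the $\beta/4$ failure event makes the reward release $(\alpha/2,\beta/2)$-DP. Basic composition of the two $(\alpha/2,\beta/2)$ mechanisms then certifies that the per-user randomizer is $(\alpha,\beta)$-LDP, completing the reduction.

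The main obstacle I anticipate is the sensitivity bookkeeping for the symmetric matrix release: one must verify that the correlated off-diagonal noise in $\bN_\tau$ genuinely corresponds to a Gaussian mechanism on the independent upper-triangular coordinates, and that the rank-one structure of $\Phi(\x_\tau)\Phi(\x_\tau)^\top$ keeps the sensitivity at a dimension-free constant rather than incurring $\sqrt{m}$ factors. A secondary subtlety is the clean handling of the high-probability reward bound: the failure event must be absorbed into the additive slack so that the composed budget is exactly $(\alpha,\beta)$ and not inflated, which is why the reward mechanism is tuned to $(\alpha/2,\beta/4)$ rather than the full $\beta/2$.
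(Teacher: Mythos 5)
Your proposal is correct and follows essentially the same route as the paper: bound the $L_2$-sensitivity of the two per-user statistics (using $\lVert\Phi(\x)\rVert \le 1$ for the covariance term and a high-probability sub-Gaussian bound on $|y_t|$ for the reward term), apply the Gaussian mechanism to each at level $(\alpha/2,\beta/2)$, and compose. Your explicit reduction to a per-user randomizer via post-processing and your $\beta/4 + \beta/4$ split for the probabilistic sensitivity event are just slightly more careful renderings of what the paper does implicitly (cf.\ its probabilistic Gaussian-mechanism lemma in the appendix).
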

\begin{proof}
We first note that the $L_2-$sensitivity of each element within $\Phi(\x_t)^\top\Phi(\x_t)$ is 1 by the fact that $\lVert \Phi(\x) \rVert \leq 1$. Next, note that the $L_2-$ sensitivity of each element of $y_t\Phi(\x_t)$ is with probability at least $1-\beta/4$ at most $B + \rho\sqrt{2\log\frac{8m}{\delta}}$ ($y_t$ is Gaussian with mean at most $B$). Now, by the Gaussian mechanism for local DP~\citep{dwork2014algorithmic}, we have that for $\sigma^2_x \geq \frac{8\ln(2.5/\beta)}{\alpha^2}$ and $\sigma_u \geq \frac{8(B^2 + 2\log\frac{8m}{\delta})\ln(5/\beta)}{\alpha}$, both $ \Phi(\x_t)^\top\Phi(\x_t) + \bN_t$ and $y_t\Phi(\x_t) + \bn_t$ are $(\sfrac{\alpha}{2}, \sfrac{\beta}{2})-$locally DP.
\end{proof}
\begin{algorithm}[t]
\caption{\texttt{GP-UCB} with Local JDP}
\small
\label{alg:ldp_algo}
\textsc{Server:}
\begin{algorithmic} 
\STATE \textbf{Initialize}: $\bS_1 = {\bf 0}, \bu_1 = {\bm 0}$.
\FOR{round $t=1, 2, ..., T$}
\STATE Send $\widetilde\bS_t, \widetilde\bu_t \rightarrow$\textsc{Client}$(t)$.
\STATE Receive updated $\widetilde\bS_{t+1}, \widetilde\bu_{t+1} \leftarrow$\textsc{Client}$(t)$.
\ENDFOR
\end{algorithmic}
\textsc{Client$(t)$:}
\begin{algorithmic}
\STATE Initialize $\sigma^2_X$ and $\sigma^2_u$ according to Lemma~\ref{lem:ldp_noise}.
\STATE Receive $\cD_t$ from environment.
\STATE Receive $\widetilde{\bS}_t, \widetilde\bu_t \leftarrow$\textsc{Server}.
\STATE Set $\bV_t \leftarrow \widetilde{\bS}_t + \lambda\bI, \widetilde\btheta_t \leftarrow \bV_t^{-1}\widetilde\bu_t$.
\STATE Compute $\beta_t$ based on Theorem~\ref{thm:beta}.
\STATE Select $\x_t \leftarrow \argmax_{\x \in \cD_t} \langle\widetilde\btheta_t, \Phi(\x)\rangle + \beta_t\lVert \Phi(\x) \rVert_{\bV_t^{-1}}$.
\STATE Play arm $\x_t$ and obtain $y_t$.
\STATE Sample $\bN_t, \bn_t$ using $\sigma^2_X, \sigma^2_u$.
\STATE Send $\widetilde\bS_{t+1} \rightarrow \bS_t+\Phi(\x_t)\Phi(\x_t)^\top + \bN_t \rightarrow $\textsc{Server}.
\STATE Send $\widetilde\bu_{t+1} \rightarrow \bu_t+y_t\Phi(\x_t)+ \bn_t \rightarrow $\textsc{Server}.
\end{algorithmic}
\end{algorithm}
It remains to bound the spectral parameters ($\lambda_{\min}, \lambda_{\max}$ and $\kappa$) in order to obtain regret bounds.
\begin{lemma}(Spectrum for Local JDP)
\label{lem:ldp_spectrum}
For any $\zeta > 0$, fix $\Lambda = \sqrt{T}(4\sqrt{m} + 2\ln(2T/\zeta))$. When $P_{m+1}$ is selected according to Lemma~\ref{lem:ldp_noise} and $\bH_t, \bh_t$ are constructed according to Alg.~\ref{alg:ldp_algo}, the following are $(\zeta/2T)-$accurate:
\begin{align*}
    \lambda_{\min} = \sigma_x\Lambda, \lambda_{\max} = 3\sigma_x\Lambda \text{ and,} \kappa = \sigma_{u}\sqrt{mT\Lambda^{-1}}.
\end{align*}
\end{lemma}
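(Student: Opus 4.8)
The plan is to follow the template of Proposition~11 of \citet{shariff2018differentially} (the same argument invoked for Lemma~\ref{lem:jdp_spectrum}), adapted to the \emph{local} noise model of Alg.~\ref{alg:ldp_algo} in which fresh Gaussian noise is injected at \emph{every} one of the $T$ rounds rather than only at the $\cO(\log T)$ internal nodes of the tree. First I would make the accumulated noise explicit: by the client update in Alg.~\ref{alg:ldp_algo}, the round-$t$ estimate is $\widetilde\bS_t = \bS_t + \bH_t$ with $\bH_t = \sum_{\tau<t}\bN_\tau$, and $\widetilde\bu_t = \bu_t + \bh_t$ with $\bh_t = \sum_{\tau<t}\bn_\tau$. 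Since the $\bN_\tau$ are independent symmetric matrices with upper-triangular entries $\cN(0,\sigma_X^2)$ and the $\bn_\tau$ are independent $\cN({\bf 0},\sigma_u^2\bI)$, the sum $\bH_t$ is symmetric with independent entries of variance $(t-1)\sigma_X^2 \le T\sigma_X^2$, and $\bh_t \sim \cN({\bf 0},(t-1)\sigma_u^2\bI)$ with $(t-1)\le T$. This accumulation over all $T$ rounds is the one structural difference from the JDP case and is exactly what produces the $\sqrt T$ factor in $\Lambda$ (in place of the $\sqrt{2n}=\sqrt{\cO(\log T)}$ there).

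The core step is to control the operator norm of the symmetric Gaussian matrix $\bH_t$. Writing $\bH_t = \sigma_X\sqrt{T}\,{\bf W}_t$ for a symmetric matrix ${\bf W}_t$ with (sub-)unit-variance entries, I would invoke the standard concentration for the spectral norm of a Wigner matrix: $\bbE\lVert{\bf W}_t\rVert \le 2\sqrt m$, and since ${\bf W}\mapsto\lVert{\bf W}\rVert$ is $1$-Lipschitz in Frobenius norm, Gaussian concentration gives a sub-Gaussian tail around $2\sqrt m$. Selecting the deviation so that the per-round failure probability is at most $\zeta/2T^2$ and absorbing constants yields, with probability $1-\zeta/2T$ simultaneously over $t\in[T]$ (union bound over the $T$ rounds), $\lVert\bH_t\rVert \le \sigma_X\sqrt T\,(4\sqrt m + 2\ln(2T/\zeta)) = \sigma_X\Lambda$.

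Because a symmetric Gaussian matrix is not PSD, I would then apply the shift of \citet{shariff2018differentially}: the mechanism actually regularizes with $\bH_t + 2\sigma_X\Lambda\,\bI$, so that on the above event every eigenvalue lies in $[\,2\sigma_X\Lambda-\sigma_X\Lambda,\ 2\sigma_X\Lambda+\sigma_X\Lambda\,] = [\sigma_X\Lambda,\,3\sigma_X\Lambda]$. This simultaneously certifies $\bH_t \succcurlyeq 0$, the upper bound $\lVert\bH_t\rVert \le 3\sigma_X\Lambda = \lambda_{\max}$, and the inverse bound $\lVert\bH_t^{-1}\rVert \le (\sigma_X\Lambda)^{-1} = 1/\lambda_{\min}$, discharging two of the three conditions of Definition~\ref{def:accurate}.

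For the last condition I would bound $\lVert\bh_t\rVert_{\bH_t^{-1}}^2 = \bh_t^\top\bH_t^{-1}\bh_t \le \lambda_{\min}^{-1}\lVert\bh_t\rVert^2$ and then control $\lVert\bh_t\rVert^2$, which is $\sigma_u^2(t-1)$ times a $\chi^2_m$ variable: a standard $\chi^2$ tail gives $\lVert\bh_t\rVert^2 = \cO(mT\sigma_u^2)$ with probability $1-\zeta/2T$ for all $t$. Combining with $\lambda_{\min}=\sigma_X\Lambda$ gives $\lVert\bh_t\rVert_{\bH_t^{-1}} = \cO(\sigma_u\sqrt{mT/\Lambda})$, matching the stated $\kappa$, and a final union bound over the matrix event, the vector event, and the $L_2$-sensitivity event of Lemma~\ref{lem:ldp_noise} yields $(\zeta/2T)$-accuracy. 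I expect the main obstacle to be precisely the spectral-norm concentration together with the PSD shift: one must check that shifting by exactly $2\sigma_X\Lambda$ is large enough to force positivity on the high-probability event yet small enough to keep $\lambda_{\max}/\lambda_{\min}$ a constant, because the $\sqrt T$ growth of the local noise (rather than the $\sqrt{\log T}$ of the tree mechanism) is what ultimately forces the extra $\cO(T^{\sfrac{1}{4}})$ in the local-JDP regret.
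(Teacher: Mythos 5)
Your argument follows the paper's own route exactly: the paper's proof is a two-line sketch that reduces to Lemma~\ref{lem:jdp_spectrum} (itself deferring to Proposition~11 of \citet{shariff2018differentially}) after observing that $\bH_t$ and $\bh_t$ are now sums of $t \le T$ fresh Gaussian perturbations with total variance $T\sigma_X^2$ and $T\sigma_u^2$, giving $\lVert\bH_t\rVert \le \sigma_X\sqrt{T}(4\sqrt{m}+2\ln(2T/\zeta))$ and $\lVert\bh_t\rVert \le \sigma_u\sqrt{mT}$, and your proposal simply fills in the Wigner-type spectral concentration, the PSD shift, and the $\chi^2$ tail that this sketch implicitly relies on. The only discrepancy (inherited from the paper's own loose bookkeeping, not introduced by you) is the placement of $\sigma_X$ inside $\kappa$, since $\lVert\bh_t\rVert/\sqrt{\lambda_{\min}}$ naturally yields $\sigma_u\sqrt{mT/(\sigma_X\Lambda)}$ rather than $\sigma_u\sqrt{mT\Lambda^{-1}}$.
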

\begin{proof}[Proof (Sketch)]
The proof is identical to Lemma~\ref{lem:jdp_spectrum} except critically that in this case, $\bH_t$ (resp. $\bh_t$) is the sum of $t$ matrices $\bN_t$ (resp. $\bn_t$), with total variance $t\sigma^2_X$ (resp. $t\sigma^2_u$). Therefore, we can bound $\lVert \bH_t \rVert_2 \leq \sigma_X\sqrt{T}(4\sqrt{m} + 2\ln(2T/\zeta))$ and $\lVert \bh_t \rVert_2 \leq \sigma_u\sqrt{mT}$, which gives the result identical to Lemma~\ref{lem:jdp_spectrum}.
\end{proof}
\begin{corollary}[$(\alpha, \beta)-$Local JDP Regret Bound]
\label{cor:regret_bound_ldp}
Fix $m = (6\log T)^d$ and let $k$ be any kernel that obeys Assumption~\ref{ass:decomposability}. Algorithm~\ref{alg:main_algo} run with NQFF and noise $\bH_t, \bh_t$ that maintains $(\alpha, \beta)-$local JDP obtains with probability at least $1-\zeta$, cumulative pseudoregret:
\begin{equation*}
\fR(T) = \cO\left(T^{\frac{3}{4}}(\ln T)^{\frac{d+2}{4}}\sqrt{\frac{\gamma_T}{\alpha}\ln\frac{1}{\beta}\ln\frac{1}{\zeta}} \right).
\end{equation*}
\end{corollary}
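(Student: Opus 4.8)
The plan is to follow verbatim the template that proved the JDP case (Corollary~\ref{cor:regret_bound_jdp}): start from the generic bound of Corollary~\ref{cor:regret_bound} and substitute into it the Local-JDP spectral quantities of Lemma~\ref{lem:ldp_spectrum} together with the noise scales of Lemma~\ref{lem:ldp_noise}. Corollary~\ref{cor:regret_bound} has already fixed $m = \Theta((6\log T)^d)$ and the QFF error $\epsilon = \cO(T^{-6})$ via Theorem~\ref{thm:qff_approx_error}, so the approximation-error contributions $\tfrac{2T^3\sqrt{\beta_T\epsilon}}{3\rho} + 2TB\epsilon$ of Theorem~\ref{thm:main_regret_bound} are negligible and the regret collapses to $\cO(\sqrt{T\gamma_T}\,\beta_T^{1/2})$, where $\beta_T^{1/2}$ is driven by the three summands $\tfrac{B\sqrt{\lambda_{\max}}}{\rho}$, $\sqrt{\log(1/\zeta)+(\log T^6)^{d+1}}$, and $\tfrac{\kappa}{\rho}$. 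The whole proof is then the evaluation of these three terms under the local noise model, after which I collect the polylogarithmic and privacy factors.

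The decisive structural difference from JDP sits inside Lemma~\ref{lem:ldp_spectrum}. Because Algorithm~\ref{alg:ldp_algo} injects fresh noise $(\bN_t,\bn_t)$ \emph{once per round} rather than through the $\cO(\log T)$ nodes of the binary-tree mechanism, the accumulated perturbation $\bH_t$ is a sum of $t\le T$ i.i.d.\ Gaussian matrices, whence its spectral norm scales as $\Lambda = \sqrt{T}\,(4\sqrt{m}+2\ln(2T/\zeta))$, a factor $\sqrt{T}$ larger than in the JDP setting. First I would record $\lambda_{\max}=3\sigma_x\Lambda$ and $\lambda_{\min}=\sigma_x\Lambda$, then insert $\sigma_x=\Theta(\tfrac1\alpha\sqrt{\ln(1/\beta)})$ and $\sigma_u=\Theta(\tfrac1\alpha\sqrt{(B^2+\ln(m/\zeta))\ln(1/\beta)})$ from Lemma~\ref{lem:ldp_noise}. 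With $m=(6\log T)^d$ the dominant piece of $\Lambda$ is $\sqrt{T}\,(6\log T)^{d/2}$, so $\sqrt{\lambda_{\max}}=\Theta(\sqrt{\sigma_x}\,T^{1/4}(\log T)^{d/4})$, and for the ellipsoid term I use $\kappa=\lVert\bh_t\rVert_{\bH_t^{-1}}\le \lVert\bh_t\rVert_2/\sqrt{\lambda_{\min}}$ with $\lVert\bh_t\rVert_2=\Theta(\sigma_u\sqrt{mT})$ and $\sqrt{\lambda_{\min}}=\Theta(\sqrt{\sigma_x}\,T^{1/4}(\log T)^{d/4})$, giving $\kappa=\Theta\big(\tfrac{\sigma_u}{\sqrt{\sigma_x}}\,T^{1/4}(\log T)^{d/4}\big)$.

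Multiplying each of the two surviving terms $\sqrt{\lambda_{\max}}$ and $\kappa$ by the $\sqrt{T\gamma_T}$ prefactor of Corollary~\ref{cor:regret_bound} converts their common $T^{1/4}$ into the advertised $T^{3/4}$, while the middle $\log\det$ term contributes only $\sqrt{T\gamma_T}\,(\log T)^{(d+1)/2}$, which is $\Theta(\sqrt T)$ in $T$ and hence dominated. The $\alpha$-dependence is the one point needing care: although $\sigma_u=\Theta(1/\alpha)$, the ellipsoid norm divides by $\sqrt{\lambda_{\min}}$, and the ratio $\sigma_u/\sqrt{\sigma_x}=\Theta(1/\sqrt{\alpha})$ is precisely what produces the $\sqrt{1/\alpha}$ in the final bound rather than a spurious $1/\alpha$; the same $\sqrt{\sigma_x}=\Theta(1/\sqrt\alpha)$ enters through $\sqrt{\lambda_{\max}}$. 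Collecting $\sqrt{\ln(1/\beta)}$ from $\sigma_x,\sigma_u$, $\sqrt{\ln(1/\zeta)}$ from the $\ln(m/\zeta)$ inside $\sigma_u$, and folding the $\log T$ factors from $\Lambda$ and the dominated $\log\det$ term into a single $(\ln T)^{(d+2)/4}$ envelope then yields $\fR(T)=\cO\big(T^{3/4}(\ln T)^{(d+2)/4}\sqrt{\tfrac{\gamma_T}{\alpha}\ln\tfrac1\beta\ln\tfrac1\zeta}\big)$.

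I expect the main obstacle to be exactly this last bookkeeping step: verifying that the two $T^{1/4}$-carrying terms dominate the $\Theta(\sqrt T)$ information-gain term uniformly in $d$, and threading the privacy parameters through the $\bH_t^{-1}$-weighted norm $\kappa$ so that the $1/\sqrt\alpha$ scaling (and not $1/\alpha$) correctly emerges from the $\sigma_u/\sqrt{\sigma_x}$ cancellation. Everything upstream is the same mechanical substitution that proved Corollary~\ref{cor:regret_bound_jdp}, the only genuinely new input being the $\sqrt{T}$ growth of $\Lambda$ supplied by Lemma~\ref{lem:ldp_spectrum}.
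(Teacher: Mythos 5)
Your proof is correct and takes essentially the same route as the paper, which proves this corollary by a one-line substitution of Lemma~\ref{lem:ldp_spectrum} into Corollary~\ref{cor:regret_bound}; your explicit bookkeeping of the two $T^{1/4}$-carrying terms and the dominated $(\log T)^{(d+1)/2}$ information-gain term fills in exactly what the paper leaves implicit. Your normalization of $\kappa$ by $\sqrt{\lambda_{\min}}=\sqrt{\sigma_x\Lambda}$ (rather than $\sqrt{\Lambda}$ alone, as Lemma~\ref{lem:ldp_spectrum} literally prints) is the step that makes the $\sigma_u/\sqrt{\sigma_x}=\Theta(1/\sqrt{\alpha})$ cancellation, and hence the advertised $\sqrt{1/\alpha}$ dependence, come out consistently with the stated bound.
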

The proof for Corollary~\ref{cor:regret_bound_ldp} follows directly by substituting the results from Lemma~\ref{lem:ldp_spectrum} into Corollary~\ref{cor:regret_bound}.
\begin{remark}[JDP vs. Locally JDP Regret]
\label{rem:suboptimal_ldp_regret}
Our algorithm for the locally JDP setting obtains $\widetilde\cO(T^{\sfrac{3}{4}})$ regret in contrast to the JDP regret, which is close to the minimax optimal rate of $\widetilde\Omega(\sqrt{T})$ for squared-exponential and Mat\'ern kernels~\citep{scarlett2017lower}. It is evident that this suboptimality is introduced by the $\widetilde\cO(T)$ noise added via $\bH_t$. However, we conjecture that in the absence of any known structure between the chosen actions $\x_1, ..., \x_{t-1}$, it is impossible to add correlated noise samples (i.e., such that the overall variance is $o(T)$) while maintaing local DP, as typically the environment selects $\cD_t$ independently of $\cD_{t-1}$.
\end{remark}
\section{Experiments}
We conduct experiments primarily around the noisy Quadrature features for GP optimization, and consider the Joint DP setting. For more experimental results on the approximation guarantees of QFF, please refer to the appendix and experimental section of \citet{mutny2018efficient}, that analyse the efficiency of quadrature features in approximating stationary kernels.

We conduct experiments with input dimensionality $d = 2$ and selecting the squared-exponential kernel with variance $1$, i.e., $k(\x, \by) = \exp(-\lVert \x - \by \rVert_2^2/2)$ for simplicity. This choice was made as we essentially wish to demonstrate that the algorithms are private in practice for toy experiments, as larger dimensionalities $(d > 5)$ are rarely seen in practice~\citep{mutny2018efficient} and would require additive assumptions for efficient inference~\citep{munkhoeva2018quadrature}.

\subsection{Experimental Setup}
We construct $f$ by randomly sampling a set of points $\cI$ from $\cB_d(2)$ such that $|\cI| = 4$ and randomly generate $\balpha$ from the unit $L_1$ ball $\cB_d(1)$ (therefore, we assume $B = 1$). For any input point $\x$, $f(\x)$ can then be denoted as $f(\x) = \sum_{i=1}^{|\cI|} \alpha_i k(\x_i, \x)$, where $\x_1, ..., \x_{|\cI|}$ belong to $\cI$. We consider $\cD$ to be a random sample of size $n$ drawn from $\cB_d(2)$ ($n$ may be variable, but is specified prior to each experiment). We draw $\cD_t$ such that at least 1 sample $\x$ from $\cD_t$ satisfies $f(\x) \geq 0.8$ and the others satisfy $f(\x) \leq 0.6$, ensuring a suboptimality gap of at least $0.2$ (this is implemented somewhat crudely by iterative sampling). At each round $t$, the agent is presented with a random $\cD_t$ and it obtains a reward $y_t$ drawn from the distribution $\text{Ber}(f(\x))$ and hence $|\varepsilon_t | \leq 1$ and $\bbE[y_t] = f(\x)$. Additionally, we see that the variance $\rho^2 = f(\x)(1-f(\x))$ for this case, but that is bounded from above by $1/4$. For simplicity, we restrict ourselves to Bernoulli rewards. This model, while ensuring sub-Gaussianity, also ensures that the rewards are bounded, and hence removes an additional logarithmic factor from the sensitivity analysis for the JDP setting. This can be observed by directly applying $L_2$-sensitivity to the JDP noise (Lemma~\ref{lem:jdp_noise}), and ignoring the probabilistic argument.

\textbf{Effect of $\alpha$}. We first examine the effect of adjusting the privacy level $\alpha$. We fix $n=25$, $\beta = 0.1$ and set $T=1024$ (similar to~\citet{mutny2018efficient}). We run 20 trials and compare the performance at $\alpha = 0.1, 0.5, 1, 10$ (averaged over 20 trials). The regret scales as predicted with decreasing $\alpha$ (Figure~\ref{fig:alpha_regret}).
 \begin{figure}[t!]
 \centering
  \includegraphics[width=0.8\linewidth]{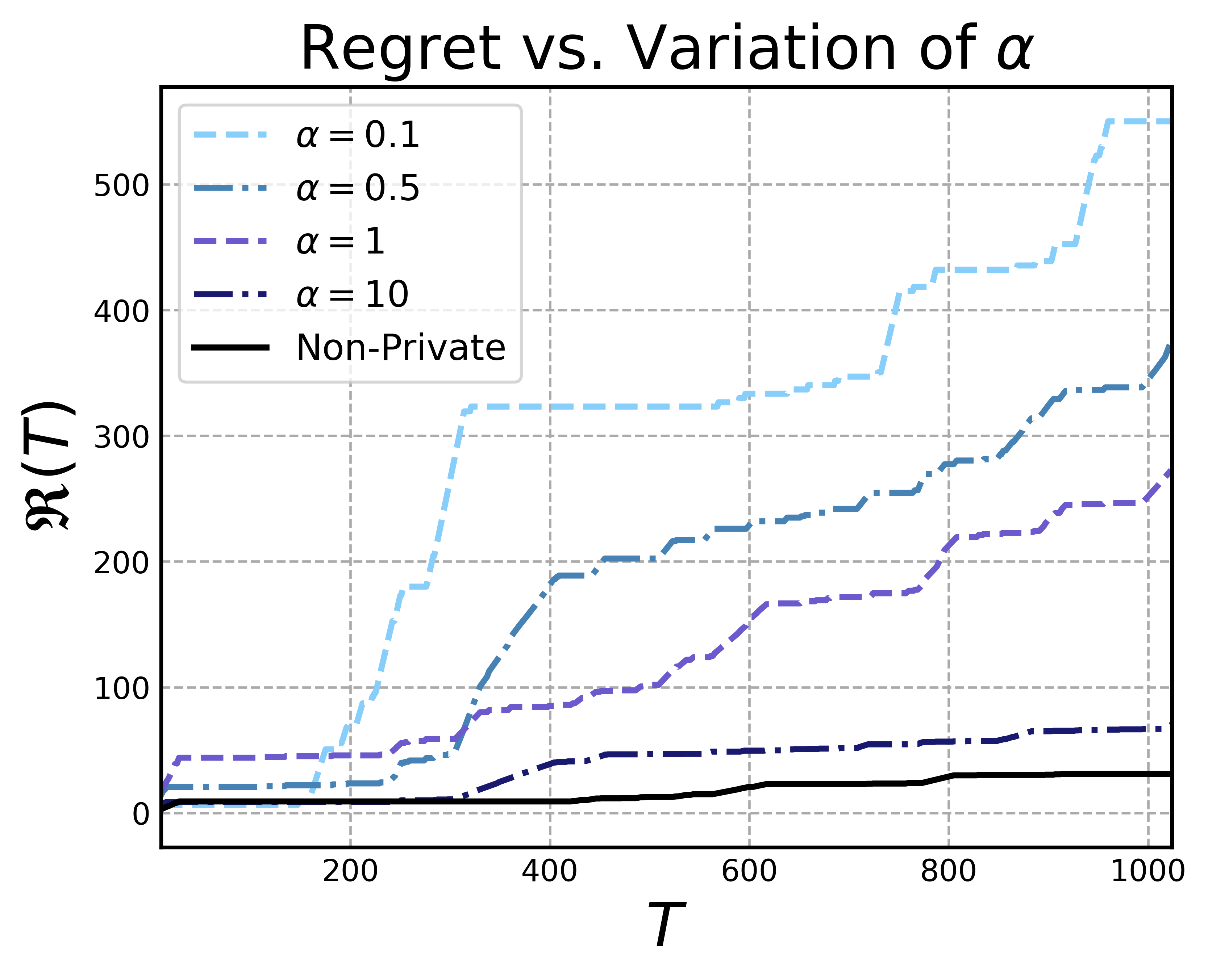}
  \caption{An experimental comparison of approximate \texttt{GP-UCB} for various values of privacy budget $\alpha$.}
  \label{fig:alpha_regret}
\end{figure}

\textbf{Effect of $\beta$}. Next, we examine the effect of adjusting the privacy failure probability $\beta$. We fix $n=25$, $\alpha = 1$ and set $T=1024$ (similar to~\citet{mutny2018efficient}). We run 20 trials and compare the performance at $\beta = 0.01, 0.1, 0.5, 0.99$ (averaged over 20 trials). The regret increases with decreasing $\beta$, summarized in Figure~\ref{fig:beta_regret}.
 \begin{figure}[t!]
 \centering
  \includegraphics[width=0.8\linewidth]{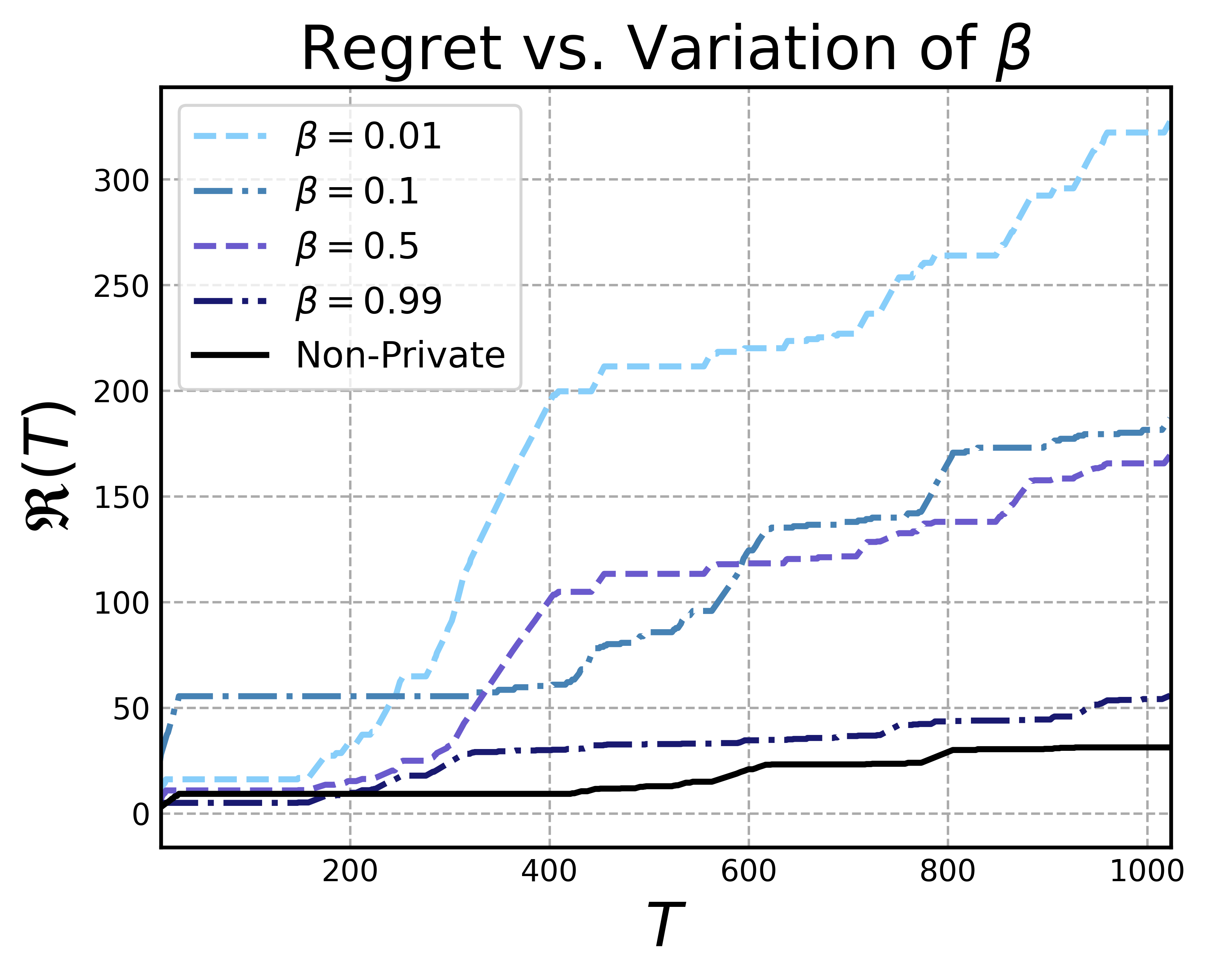}
  \caption{An experimental comparison of approximate \texttt{GP-UCB} for various values of privacy failure probability $\beta$.}
  \label{fig:beta_regret}
\end{figure}
\begin{table}[t]
\centering
\begin{tabular}{c|ccc}
\hline
Alg. & \texttt{camel} & \texttt{styb}  & \texttt{mw}  \\ \hline
Non-Private & 519 & 885 & 901 \\
$\alpha=10$ & 775 & 1667 & 1558 \\
$\alpha=1$ & 1029 & 2680 & 2883 \\
$\alpha=0.1$ & 3324 & 4493 & 5002 \\ \hline
\end{tabular}
\caption{Cumulative regret at $T=10K$ averaged over 10 trials on functions from~\citet{mutny2018efficient}.}
\label{tab:main}
\end{table}
\subsection{Additional Benchmarks}
In addition to the environment proposed earlier, we additionally evaluate the JDP algorithm on previous benchmark environments. We consider the functional environments for the Camelback (\texttt{camel}), Stybtang-20 (\texttt{styb}) and Michalewicz-10 (\texttt{mw}) benchmarks from~\citep{mutny2018efficient}. We observe a consistent increase in regret as the privacy budget $(\alpha)$ is reduced (Table~\ref{tab:main}). While the bound predicts a $\alpha^{-\frac{1}{2}}$ deterioration, we observe a larger effect, which suggests that stronger analyses can close the gap.

\section{Discussion and Concluding Remarks}
In this paper, we presented the first \textit{no-regret} algorithmic framework for differentially-private Gaussian Process bandit optimization for a class of stationary kernels in both the joint DP and local DP settings, extending the literature on private bandit estimation beyond multi-armed~\citep{mishra2015nearly} and linear~\citep{shariff2018differentially} problems. We rigorously analyse the proposed algorithms and demonstrate their provable efficiency in terms of regret, computation and privacy. Our work additionally introduces several new avenues for further research - while the dependence of the achieved pseudoregret on $T$ is near-optimal in the JDP setting, the local JDP setting introduces an additional $\cO(T^{\sfrac{1}{4}})$ which we conjecture is necessary owing to the nested estimation problems involved (Remark~\ref{rem:suboptimal_ldp_regret}). Additionally, developing lower bounds on private GP regret and efficient kernel approximations for non-stationary kernels are valuable pursuits of inquiry.

\section*{Acknowledgements}
We would like to thank Dr. Alex Pentland for his helpful comments, and the anonymous reviewers for their feedback and suggestions. This work was supported by the MIT Trust::Data Consortium.
\bibliographystyle{icml2021}
\bibliography{refs}
\ifx\doappendix\undefined
undefed
\else
  \if\doappendix1
\newpage
\onecolumn
\appendix
\section{Appendix}
\subsection{Preliminary Results}
\begin{lemma}[Chernoff with Maximum Mean Bound]
\label{lem:subgaussian_max_mean}
Let $X$ be any $\sigma$-sub-Gaussian random variable with mean $\mu \leq \mu^*$ for some constant $\mu^*$. Then, with probability at least $1-\delta$,
\begin{align*}
    |X| \leq \mu^* + \sigma\sqrt{2\ln\left(\frac{2}{\delta}\right)}.
\end{align*}
\end{lemma}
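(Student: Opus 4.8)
The plan is to derive this as a standard two-sided sub-Gaussian tail bound, followed by a conversion from a centered concentration statement to a bound on $|X|$ via the mean constraint. First I would recall that $X$ being $\sigma$-sub-Gaussian means $\bbE[\exp(s(X-\mu))] \leq \exp(\sigma^2 s^2/2)$ for all $s \in \bbR$. Applying Markov's inequality to $\exp(s(X-\mu))$ for $s > 0$ and optimizing the free parameter by setting $s = r/\sigma^2$ yields the upper-tail bound $\bbP(X - \mu \geq r) \leq \exp(-r^2/(2\sigma^2))$.

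Next I would obtain the matching lower tail. Since $-X$ is also $\sigma$-sub-Gaussian with mean $-\mu$, the identical argument gives $\bbP(X - \mu \leq -r) \leq \exp(-r^2/(2\sigma^2))$. A union bound over the two events then produces the two-sided estimate $\bbP(|X - \mu| \geq r) \leq 2\exp(-r^2/(2\sigma^2))$. To turn this into a high-probability statement, I would set the right-hand side equal to $\delta$ and solve for the radius, giving $r = \sigma\sqrt{2\ln(2/\delta)}$. Thus, with probability at least $1 - \delta$, we have $|X - \mu| \leq \sigma\sqrt{2\ln(2/\delta)}$, and in particular $X \leq \mu + \sigma\sqrt{2\ln(2/\delta)}$.

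The final, and only delicate, step is to pass from the centered bound on $|X - \mu|$ to the claimed bound on $|X|$ using the hypothesis on the mean. On the good event, the upper side follows immediately from $\mu \leq \mu^*$, giving $X \leq \mu^* + \sigma\sqrt{2\ln(2/\delta)}$; the lower side, $-X \leq \mu^* + \sigma\sqrt{2\ln(2/\delta)}$, follows from the lower tail together with the (implicit, and satisfied in every application, e.g. $\mu = f(\x)$ with $|f(\x)| \leq B = \mu^*$) bound $-\mu^* \leq \mu$. Combining the two sides yields $|X| \leq \mu^* + \sigma\sqrt{2\ln(2/\delta)}$.

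I expect no genuine obstacle here beyond the bookkeeping of the two tails and the inversion of the exponential into the $\sqrt{2\ln(2/\delta)}$ factor. The one subtlety worth flagging is precisely that bounding the absolute value $|X|$ (rather than just $X$) requires controlling $\mu$ from below as well as above; the hypothesis $\mu \leq \mu^*$ alone governs the upper tail, and the natural reading that makes the two-sided statement correct is $|\mu| \leq \mu^*$, which is what holds in the uses of this lemma (bounded rewards / $\|f\|_k \leq B$).
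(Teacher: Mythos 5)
Your proof is correct and follows essentially the same route as the paper's: a Chernoff/sub-Gaussian tail bound on each side, a union bound over the two tails, and inversion of the exponential to obtain the $\sigma\sqrt{2\ln(2/\delta)}$ radius. Your flag about the lower tail is well taken --- the paper's own proof only gestures at ``the other tail,'' and indeed the stated hypothesis $\mu \leq \mu^*$ controls only the upper side of $|X|$; the two-sided conclusion as written really requires $|\mu| \leq \mu^*$, which, as you observe, is satisfied in every application of the lemma (e.g.\ $\mu = f(\x)$ with $\lVert f \rVert_k \leq B$).
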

\begin{proof}
$X$ is sub-Gaussian with variance, therefore by a Chernoff bound,
\begin{align}
    \bbP\left(X-\mu > t\right) &\leq \exp\left(-\frac{t^2}{2\sigma^2}\right) \\
    \implies \bbP\left(X> t+\mu\right) &\leq \exp\left(-\frac{t^2}{2\sigma^2}\right) \\ \intertext{Subsitituing $t' = t+\mu$,}
    \implies \bbP\left(X> t'\right) &\leq \exp\left(-\frac{(t'-\mu)^2}{2\sigma^2}\right) \\
    \implies X &\leq \mu + \sigma\sqrt{2\ln\left(\frac{1}{\delta}\right)} \tag{With probability at least $1-\delta$}.
\end{align}
The same can be derived for the other tail. By combining both statements with a union bound we get the result.
\end{proof}

\begin{lemma}[DP with probabilistic $L_2$ sensitivity]
\label{lem:prob_gaussian_mechanism}
Let $f : \bbR^{|\cX|} \rightarrow \bbR^m$ be an arbitrary $d$-dimensional real-valued function with $L_2-$sensitivity $\Delta$ with probability at least $1-\delta'$, and $\varepsilon \in (0, 1)$ be arbitrary. For $c^2 > 2\ln(1.25/\delta)$, the Gaussian Mechanism with parameter $\sigma \geq c\Delta/\varepsilon$ is $(\varepsilon, \delta+\delta')-$differentially private.
\end{lemma}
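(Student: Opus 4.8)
The plan is to reduce the statement to the classical Gaussian mechanism guarantee (Theorem A.1 of \citet{dwork2014algorithmic}) and to absorb the \emph{probabilistic} nature of the sensitivity bound into the additive failure term $\delta'$. Fix two neighboring inputs $X, X' \in \bbR^{|\cX|}$ and an arbitrary measurable set $\cS \subseteq \bbR^m$. The mechanism is $\cA(X) = f(X) + Z$ with $Z \sim \cN(0, \sigma^2 \bI)$, where the noise $Z$ is drawn independently of the randomness that realizes $f$. Let $G$ be the event that the realized sensitivity satisfies $\lVert f(X) - f(X')\rVert_2 \leq \Delta$; by hypothesis $\bbP(\bar G) \leq \delta'$, where $\bar G$ is the complement.

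First I would condition on the randomness $\omega$ that determines the realized values $f(X)$ and $f(X')$ (in our instantiations, the reward noise driving $y_t$), which is independent of the injected Gaussian $Z$. Decomposing over $G$ and $\bar G$ and discarding the output constraint on $\bar G$,
\[
\bbP(\cA(X) \in \cS) \leq \bbP(\{\cA(X) \in \cS\} \cap G) + \bbP(\bar G) \leq \bbP(\{\cA(X) \in \cS\} \cap G) + \delta'.
\]
For every realization $\omega \in G$ the fixed pair $f(X), f(X')$ has $L_2$ distance at most $\Delta$, so the classical Gaussian-mechanism bound applies pointwise over the noise $Z$, giving $\bbP_Z(f(X) + Z \in \cS) \leq e^{\varepsilon}\,\bbP_Z(f(X') + Z \in \cS) + \delta$. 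Taking expectation over $\omega \in G$ and using $\bbE_\omega[\bone_G \cdot \bbP_Z(f(X')+Z\in\cS)] \leq \bbE_\omega[\bbP_Z(f(X')+Z\in\cS)] = \bbP(\cA(X')\in\cS)$ on the neighbor's side yields $\bbP(\{\cA(X) \in \cS\} \cap G) \leq e^{\varepsilon}\,\bbP(\cA(X') \in \cS) + \delta$. Chaining the two displayed inequalities produces exactly $\bbP(\cA(X) \in \cS) \leq e^{\varepsilon}\,\bbP(\cA(X') \in \cS) + (\delta + \delta')$, the claimed $(\varepsilon, \delta+\delta')$-guarantee.

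For completeness I would recall the classical argument on the good event. Writing the privacy-loss variable $L(z) = \ln\!\big(p_{\cN(f(X),\sigma^2\bI)}(z)/p_{\cN(f(X'),\sigma^2\bI)}(z)\big)$, a direct computation gives $L(z) = \langle z - f(X),\, f(X)-f(X')\rangle/\sigma^2 + \lVert f(X)-f(X')\rVert_2^2/(2\sigma^2)$; under $z \sim \cN(f(X),\sigma^2\bI)$ this is a Gaussian whose mean and variance are controlled by $\lVert f(X)-f(X')\rVert_2 \leq \Delta$. The choice $\sigma \geq c\Delta/\varepsilon$ with $c^2 > 2\ln(1.25/\delta)$ forces $\bbP(|L| > \varepsilon) \leq \delta$, and partitioning $\cS$ into the region where the loss is bounded and its complement recovers the $(\varepsilon,\delta)$ inequality used above.

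The delicate point, and the step I would verify most carefully, is the interchange between the data randomness $\omega$ and the injected noise $Z$: one must ensure that the failure event $\bar G$ of the sensitivity bound is independent of $Z$, so that conditioning on $\omega$, applying the mechanism pointwise, and then re-integrating is valid without incurring extra slack beyond the additive $\delta'$. In every instantiation in this paper the noise is sampled afresh by the \textsc{Privatizer} (or client) after the data is fixed, so this independence holds and no conditioning ever appears on the neighbor's side of the inequality; this is what keeps the direction of the differential-privacy inequality intact and confines the cost of the probabilistic sensitivity to the single additive term $\delta'$.
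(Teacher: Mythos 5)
Your proposal is correct and follows essentially the same route as the paper's own proof: decompose over the event that the realized sensitivity bound holds, charge its complement to the additive term $\delta'$, apply the classical Gaussian-mechanism guarantee on the good event, and drop the conditioning on the neighbor's side to recover the unconditional $(\varepsilon,\delta+\delta')$ inequality. Your explicit treatment of the independence between the data randomness and the injected noise makes rigorous a step the paper performs implicitly, but it is the same argument.
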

\begin{proof}
Denote two adjacent samples in $\cX$ as $\x, \x'$. We release $\by = f(\x) + \eta$ and $\by' = f(\x') + \eta$, where $\eta$ is sampled from the corresponding Gaussian. For any arbitrary subset $S$ of $\bbR^m$, 
\begin{align}
    \bbP(\by \in S) &= \bbP(\by \in S ; \text{ sensitivity of $f$ is $\Delta$}) + \bbP(\by \in S ; \text{sensitivity of $f$ is not $\Delta$}) \\
    &= \bbP(\by \in S | \text{ sensitivity of $f$ is $\Delta$})\bbP(\text{sensitivity of $f$ is $\Delta$}) + \delta'\bbP(\by \in S | \text{ sensitivity of $f$ is not $\Delta$}) \\
    &\leq \bbP(\by \in S | \text{ sensitivity of $f$ is $\Delta$})\bbP(\text{sensitivity of $f$ is $\Delta$}) + \delta' \\
    &\leq e^{\varepsilon}\left[\bbP(\by' \in S | \text{ sensitivity of $f$ is $\Delta$}) +\delta\right]\bbP(\text{sensitivity of $f$ is $\Delta$})+\delta' \\
    &= e^{\varepsilon}\bbP(\by' \in S | \text{ sensitivity of $f$ is $\Delta$})\bbP(\text{sensitivity of $f$ is $\Delta$}) +\delta\bbP(\text{sensitivity of $f$ is $\Delta$})+\delta' \\
    &\leq e^{\varepsilon}\bbP(\by' \in S ; \text{ sensitivity of $f$ is $\Delta$}) +\delta+\delta' \\
    &\leq e^{\varepsilon}\bbP(\by' \in S) +\delta+\delta'.
\end{align}
The second inequality is obtained by the Gaussian Mechanism (Theorem A.1 of Dwork and Roth~\citep{dwork2014algorithmic}).
\end{proof}
\begin{lemma}[Existence of Proximal Space~(Lemma 4 of \citep{mutny2018efficient})]
\label{lem:existence_proximal_space_app}
Let $k$ be a kernel defining $\cH_k$ and $f \in \cH_k$, its RKHS, such that the spectral characteristic function is bounded by $B$. Assuming that the defining points of $f$ come from the set $\cD$, let $\cF_m$ be an approximating space with a mapping $\Phi$ such that this mapping is an $\epsilon$-approximation to the kernel $k$. Then there exists $\widehat\mu \in \cF_m$ (with corresponding feature $\widehat\btheta$ such that $\widehat\mu(\x) = \langle \widehat\btheta, \Phi(\x) \rangle$), such that $\sup_{\x \in \cD} \left| \widehat\mu(\x) - f(\x) \right| \leq B\epsilon$. 

Assuming the spectral characteristic function for $f$ is given by $\balpha(\omega) = \sum_{j \in \cI} \alpha_j\exp(i\omega^\top \x_j)$, then $\widehat\mu(\x) = \sum_{j \in \cI} \alpha_j\Phi(\x)^\top\Phi(\x_j)$ and the corresponding $\widehat\btheta = \sum_{j \in \cI}\alpha_j\phi(\x_j)$ for the index set $\cI$ defining $f$.
\end{lemma}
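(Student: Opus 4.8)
The plan is to prove the statement \emph{constructively}: I will exhibit the claimed $\widehat\mu \in \cF_m$ explicitly and then verify the uniform bound directly, so that the displayed formulas for $\widehat\mu$ and $\widehat\btheta$ emerge as byproducts of the construction rather than needing separate justification. The starting point is the representer form of $f$ induced by its spectral characteristic function. By assumption the defining points $\{\x_j\}_{j \in \cI}$ of $f$ lie in $\cD$, and its spectral characteristic function is $\balpha(\bomega) = \sum_{j \in \cI} \alpha_j \exp(i\bomega^\top \x_j)$; feeding this through Bochner's integral representation of the stationary kernel $k$ collapses the inverse transform back onto the kernel sections, yielding the finite representation $f(\x) = \sum_{j \in \cI} \alpha_j k(\x_j, \x)$. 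This is the only structural fact about $f$ that the argument will use, and carefully establishing it from the spectral data is one place I would be deliberate.

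Given this representation, I define the candidate by replacing each exact kernel evaluation with its finite-dimensional surrogate, $\widehat\mu(\x) \triangleq \sum_{j \in \cI} \alpha_j \Phi(\x)^\top \Phi(\x_j)$. Setting $\widehat\btheta \triangleq \sum_{j \in \cI} \alpha_j \Phi(\x_j) \in \bbR^m$ and pulling $\Phi(\x)$ out of the sum gives $\widehat\mu(\x) = \langle \widehat\btheta, \Phi(\x)\rangle$, so $\widehat\mu$ is a finite linear combination of feature coordinates and therefore lies in $\cF_m(\Phi)$ by definition of the approximating space. This simultaneously furnishes the explicit $\widehat\btheta$ asserted in the statement.

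The uniform error then follows from a single triangle-inequality pass. Subtracting the two representations term by term, $\widehat\mu(\x) - f(\x) = \sum_{j \in \cI} \alpha_j \big(\Phi(\x)^\top\Phi(\x_j) - k(\x_j, \x)\big)$, so that
\begin{align*}
\sup_{\x \in \cD} \big|\widehat\mu(\x) - f(\x)\big| &\leq \sum_{j \in \cI} |\alpha_j| \,\sup_{\x \in \cD} \big|\Phi(\x)^\top\Phi(\x_j) - k(\x_j, \x)\big| \\ &\leq \Big(\sum_{j \in \cI} |\alpha_j|\Big)\, \epsilon,
\end{align*}
where the last step invokes the $\epsilon$-approximation property $\sup_{\x, \x'\in\cD}|k(\x,\x') - \Phi(\x)^\top\Phi(\x')| \le \epsilon$ together with $\x_j \in \cD$.

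The one genuine subtlety — and the step I expect to require the most care — is identifying the coefficient mass $\sum_{j\in\cI}|\alpha_j|$ with the bound $B$. This is precisely where the hypothesis that the spectral characteristic function is bounded by $B$ enters: reading this as a constraint on the $L_1$ mass of the spectral representation gives $\sum_{j\in\cI}|\alpha_j| \le B$. One must resist instead reading $B$ as $\sup_{\bomega}|\balpha(\bomega)|$, since that quantity only lower-bounds $\sum_j|\alpha_j|$ after the phases $\exp(i\bomega^\top\x_j)$ partially cancel, and would leave the bound in the wrong direction. With $\sum_{j\in\cI}|\alpha_j|\le B$ in hand, the display above collapses to $\sup_{\x\in\cD}|\widehat\mu(\x)-f(\x)| \le B\epsilon$, which is the claim. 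I would close by noting that finiteness of $m$ was used only to guarantee $\widehat\btheta \in \bbR^m$ is well-defined, so the construction transfers verbatim to any $\epsilon$-approximating feature map $\Phi$.
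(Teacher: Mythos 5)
Your construction is the right one---it reproduces exactly the $\widehat\mu$ and $\widehat\btheta$ displayed in the lemma's second paragraph---and there is no in-paper proof to compare against line by line: the paper imports this statement verbatim as Lemma 4 of \citet{mutny2018efficient}, so the term-wise computation $\sup_{\x\in\cD}|\widehat\mu(\x)-f(\x)| \leq \epsilon\sum_{j\in\cI}|\alpha_j|$ is the natural route and is correct as far as it goes. The genuine gap is the final identification $\sum_{j\in\cI}|\alpha_j|\leq B$. You justify it by decreeing that ``the spectral characteristic function is bounded by $B$'' must mean an $\ell_1$ bound on the coefficients, explicitly rejecting the reading $\sup_{\bomega}|\balpha(\bomega)|\leq B$. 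But the paper commits to that rejected reading: the proof of Lemma~\ref{lem:norm_bound_proximal_function} uses precisely the chain $\lVert\widehat\btheta\rVert_2^2 \leq \max_{\omega}|\balpha(\omega)|^2\leq B^2$, and in the main text the same $B$ serves as the RKHS norm bound $\lVert f\rVert_k\leq B$ under which Theorem~\ref{thm:beta} invokes the lemma. Neither condition implies $\sum_j|\alpha_j|\leq B$, and the quantities genuinely differ: in $d=1$ with $\x_j=j$ and $(\alpha_0,\alpha_1,\alpha_2)=(1,1,-1)$ one computes $|\balpha(\omega)|^2=|1+e^{i\omega}-e^{2i\omega}|^2=3-2\cos 2\omega\leq 5$, so $\sup_\omega|\balpha(\omega)|=\sqrt{5}<3=\sum_j|\alpha_j|$; against $\lVert f\rVert_k\leq B$ the discrepancy can grow like $\sqrt{|\cI|}$ (many well-separated points with equal coefficients). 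So as written you have proved the lemma under a strictly stronger hypothesis than the one stated and used downstream.

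The gap is not cosmetic, because under the sup-norm hypothesis the term-wise triangle inequality is irrecoverably lossy: the per-section errors $k(\x_j,\x)-\Phi(\x_j)^\top\Phi(\x)$ may have conspiring signs, and $\epsilon$-uniform closeness of $\Phi(\cdot)^\top\Phi(\cdot)$ to $k$ alone gives you no handle beyond $\epsilon\sum_j|\alpha_j|$. A proof of the stated bound must exploit the quadrature structure of the specific $\Phi$ (as in the derivation of Theorem~\ref{thm:qff_approx_error}): summing the per-section quadrature remainders coherently, $f(\x)-\widehat\mu(\x)$ becomes the single quadrature remainder for the integrand $\balpha(\bomega)e^{-i\bomega^\top\x}p(\bomega)$, and since $\balpha$ is an entire function of exponential type at most $\sup_{j}\lVert\x_j\rVert$ that is bounded by $B$ on the reals, Bernstein-type inequalities control all of its derivatives by $B$ as well, so the same remainder estimate that yields the kernel error $\epsilon$ yields $B\epsilon$ here. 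This also falsifies your closing remark that the argument ``transfers verbatim to any $\epsilon$-approximating feature map'': with the sup-norm reading of the hypothesis, the conclusion leans on the Fourier/quadrature structure of $\Phi$, not merely on uniform approximation. To repair your write-up you should either (i) carry out the coherent-remainder argument, or (ii) state explicitly that you prove the lemma under the $\ell_1$ hypothesis and flag that this is stronger than what Lemma~\ref{lem:norm_bound_proximal_function} and Theorem~\ref{thm:beta} assume.
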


\begin{lemma}[Norm Bound for Proximal Function]
Let $\widehat\mu \in \cF_m$ denote the $\epsilon$-approximation of $f$ given by Lemma~\ref{lem:existence_proximal_space_app} and $\widehat{\btheta}$ denote the corresponding feature representation. Then $\lVert \widehat\btheta \rVert_2 \leq B$.
\label{lem:norm_bound_proximal_function}
\end{lemma}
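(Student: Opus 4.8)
The plan is to exploit the explicit construction of $\widehat\btheta$ furnished by Lemma~\ref{lem:existence_proximal_space_app} rather than argue abstractly. Recall from that lemma that $\widehat\btheta = \sum_{j \in \cI} \alpha_j \Phi(\x_j)$, where $\cI$ is the index set defining $f$ and $\balpha(\omega) = \sum_{j \in \cI} \alpha_j \exp(i\omega^\top \x_j)$ is the spectral characteristic function, which is bounded by $B$. First I would write $\lVert \widehat\btheta \rVert_2^2 = \langle \widehat\btheta, \widehat\btheta \rangle = \sum_{j, j' \in \cI} \alpha_j \alpha_{j'} \langle \Phi(\x_j), \Phi(\x_{j'}) \rangle = \sum_{j, j' \in \cI} \alpha_j \alpha_{j'} \Phi(\x_j)^\top \Phi(\x_{j'})$, turning the norm into a quadratic form in the coefficients $\alpha_j$ against the approximate Gram matrix with entries $\Phi(\x_j)^\top \Phi(\x_{j'})$.

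The key step is to relate this approximate quadratic form back to the true RKHS norm. By the definition of the spectral characteristic function and the fact that $f \in \cH_k$ with $\lVert f \rVert_k \leq B$, the analogous quadratic form against the \emph{true} kernel, namely $\sum_{j, j' \in \cI} \alpha_j \alpha_{j'} k(\x_j, \x_{j'})$, equals $\lVert f \rVert_k^2 \leq B^2$ (this is the standard reproducing-kernel identity for a function of the form $f = \sum_j \alpha_j k(\x_j, \cdot)$). The cleanest route is therefore to observe that the same coefficients $\alpha_j$ generate both $\widehat\btheta$ in the approximating space and $f$ in $\cH_k$, so the two quadratic forms differ only by the substitution of $\Phi(\x_j)^\top\Phi(\x_{j'})$ for $k(\x_j, \x_{j'})$, each entry differing by at most $\epsilon$ by the uniform approximation property.

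The main obstacle I anticipate is controlling the accumulated approximation error: the naive bound $\lVert \widehat\btheta \rVert_2^2 \leq B^2 + \epsilon \bigl(\sum_{j \in \cI} |\alpha_j|\bigr)^2$ introduces a dependence on $\lVert \balpha \rVert_1$ that I must either absorb into the stated $B$ or argue away. I expect the intended resolution is that the spectral characteristic function being ``bounded by $B$'' is exactly the hypothesis that makes $\sum_{j,j'} \alpha_j \alpha_{j'} \Phi(\x_j)^\top\Phi(\x_{j'}) \leq B^2$ hold directly — since $\widehat\mu = \widehat\btheta^\top \Phi(\cdot)$ is itself a function in $\cF_m$ whose representation mirrors $f$, and the approximating kernel $\Phi(\x)^\top\Phi(\x') \leq 1$ is a valid (bounded) kernel, so the same spectral bound $B$ that governs $f$ governs $\widehat\mu$ in $\cF_m$. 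If instead the discretization error genuinely appears, I would fall back on bounding it via Theorem~\ref{thm:qff_approx_error} with the finite cardinality $|\cI|$, noting that $\epsilon$ is chosen negligibly small (of order $T^{-6}$), which keeps $\lVert \widehat\btheta \rVert_2 \leq B$ up to lower-order terms. The final line records $\lVert \widehat\btheta \rVert_2 \leq B$ by taking square roots.
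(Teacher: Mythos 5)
Your starting point coincides with the paper's: expand $\lVert\widehat\btheta\rVert_2^2=\sum_{j,j'\in\cI}\alpha_j\alpha_{j'}\,\Phi(\x_j)^\top\Phi(\x_{j'})$. But the main route you then pursue --- comparing this quadratic form to the true one $\sum_{j,j'}\alpha_j\alpha_{j'}k(\x_j,\x_{j'})\leq B^2$ and charging the difference to the $\epsilon$-uniform approximation --- does not close, as you yourself observe: it leaves an error term of order $\epsilon\bigl(\sum_{j}|\alpha_j|\bigr)^2$ with no a priori control on $\lVert\balpha\rVert_1$ (the index set $\cI$ may even be infinite), and at best yields $\lVert\widehat\btheta\rVert_2\leq B$ only ``up to lower-order terms,'' which is not the stated clean bound. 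The fallback via Theorem~\ref{thm:qff_approx_error} does not repair this: $\epsilon=\cO(T^{-6})$ is a property of one particular feature choice made elsewhere, whereas the lemma is stated (and used downstream, e.g.\ in bounding $\lVert\widehat\btheta\rVert_{\bH_t+\lambda\bI}$) as an exact inequality with constant $B$ and no slack.

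The paper's proof never touches the true kernel and is precisely the argument you only gesture at in your penultimate sentence. Since $\lVert\Phi(\x)\rVert_2\leq 1$, each Gram entry satisfies $\Phi(\x_j)^\top\Phi(\x_{j'})\leq 1$, so the quadratic form is bounded by $\sum_{j,j'}\alpha_j\alpha_{j'}=\bigl(\sum_j\alpha_j\bigr)^2=|\balpha(0)|^2\leq\sup_\omega|\balpha(\omega)|^2\leq B^2$, invoking the hypothesis that the spectral characteristic function is bounded by $B$; taking square roots finishes, and no $\epsilon$ appears anywhere. To make your proposal a proof you would need to execute this step rather than leave it as a conjecture about ``the intended resolution.'' (A caveat that applies equally to the paper's own write-up: replacing $\Phi(\x_j)^\top\Phi(\x_{j'})$ by $1$ termwise tacitly assumes the products $\alpha_j\alpha_{j'}$ are nonnegative; for signed coefficients one needs an $L_1$-type bound on $\balpha$, e.g.\ $\lVert\widehat\btheta\rVert_2\leq\sum_j|\alpha_j|\,\lVert\Phi(\x_j)\rVert_2\leq\sum_j|\alpha_j|$, which is how $B$ is effectively instantiated in the paper's experiments.)
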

\begin{proof}
Recall that by the Representer Theorem, $\widehat\btheta = \sum_{i \in \cI}\alpha_i\Phi(\x_i)$ for some (possibly infinite) index set $\cI \subseteq \cD$. Then, we can write $\lVert \widehat\btheta \rVert_2^2 = \left\langle \widehat\btheta, \widehat\btheta \right\rangle_{\cF_m} = \sum_{i, j \in \cI^2} \alpha_i\alpha_j\left(\Phi(\x_i)^\top\Phi(\x_j)\right)$. Then, we can utilize the property that $\widehat\btheta$ is an $\epsilon$-approximation of $\mu_t \in \cH_k$:
\begin{align}
    \lVert \widehat\btheta \rVert_2^2 &= \sum_{i, j \in \cI^2} \alpha_i\alpha_j\left(\Phi(\x_i)^\top\Phi(\x_j)\right) \\
    &\leq \sum_{i, j \in \cI^2} \alpha_i\alpha_j \tag{$\lVert \Phi(\x) \rVert \leq 1$} \\
    &\leq \max_{\omega}| \balpha(\omega) |^2 \tag{Lemma 4 of~\citep{mutny2018efficient}}\\
    &\leq B^2.
\end{align}
Taking the square root gives us the final form.
\end{proof}
\subsection{Regret Bounds}
\begin{theorem}[$\beta_t$ concentration bound, Theorem~\ref{thm:beta} from the main paper]
Let $\lambda_{\min}, \lambda_{\max}$ and $\gamma$ be $(\alpha/2T)$
-accurate and regularizers $\bH_t \succcurlyeq 0 \ \forall t \in [T]$ are PSD. Let $\widehat\mu$ be a function in the RKHS $\cF_m$ that $\epsilon$-approximates $f \in \cH_k$ (Lemma~\ref{lem:existence_proximal_space}). Then, with probability at least $1-\alpha/2$, for all $\x \in \cD$ we have for all $t \in [T]$ simultaneously,
\begin{align}
    \left|\widehat\mu(\x) - \widetilde\mu_t(\x)\right| \leq \widetilde\sigma_t(\x)\underbrace{\left(\frac{B}{\rho}\sqrt{(\lambda_{\max} + \rho^2)} +\sqrt{2\log\frac{2}{\alpha} + \log\frac{\det\left(\bS_t + \rho^2 + \lambda_{\min}\bI\right)}{\det\left(\rho^2 + \lambda_{\min}\bI\right)}} + \frac{tB\epsilon}{\rho\sqrt{\lambda_{\min}}} + \frac{\kappa}{\rho}\right)}_{\beta^{1/2}_t}.
\end{align}
\label{thm:beta_app}
\end{theorem}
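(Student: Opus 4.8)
The plan is to reduce the claim to a bound on the weighted norm $\lVert \widehat\btheta - \widetilde\btheta_t \rVert_{\bV_t}$, exactly as in the self-normalized analysis of linear bandits. Writing $\widehat\mu(\x) - \widetilde\mu_t(\x) = \langle \widehat\btheta - \widetilde\btheta_t, \Phi(\x)\rangle$ and applying Cauchy--Schwarz in the $\bV_t$-geometry gives $|\widehat\mu(\x) - \widetilde\mu_t(\x)| \le \lVert \widehat\btheta - \widetilde\btheta_t\rVert_{\bV_t}\,\lVert \Phi(\x)\rVert_{\bV_t^{-1}}$, and since $\widetilde\sigma_t(\x) = \rho\lVert\Phi(\x)\rVert_{\bV_t^{-1}}$ the factor $\lVert\Phi(\x)\rVert_{\bV_t^{-1}} = \widetilde\sigma_t(\x)/\rho$ is precisely the $\widetilde\sigma_t(\x)$ appearing in the statement (and this step is uniform over all $\x \in \cD$). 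It therefore remains to show that $\tfrac{1}{\rho}\lVert \widehat\btheta - \widetilde\btheta_t\rVert_{\bV_t}$ is bounded by the parenthesised expression, which I then identify with $\beta_t^{\sfrac12}$.

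For the core decomposition I would substitute $\widetilde\btheta_t = \bV_t^{-1}\widetilde\bu_t$, $\widetilde\bu_t = \Phi(\bX_t)^\top\by_t + \bh_t$, and $\bV_t = \bS_t + \bH_t + \lambda\bI$ with $\bS_t = \Phi(\bX_t)^\top\Phi(\bX_t)$. The key modelling step is to express the observations through the proximal function: by Lemma~\ref{lem:existence_proximal_space} we write $\by_t = \Phi(\bX_t)\widehat\btheta + \bm\delta_t + \bepsilon_t$, where $\lvert (\bm\delta_t)_\tau\rvert \le B\epsilon$ entrywise encodes the uniform approximation error and $\bepsilon_t$ is the $\rho$-sub-Gaussian noise. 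Using $\bV_t^{-1}\bS_t = \bI - \bV_t^{-1}(\bH_t + \lambda\bI)$ this yields the four-term identity
\begin{equation*}
\widehat\btheta - \widetilde\btheta_t = \bV_t^{-1}(\bH_t + \lambda\bI)\widehat\btheta \;-\; \bV_t^{-1}\Phi(\bX_t)^\top\bepsilon_t \;-\; \bV_t^{-1}\Phi(\bX_t)^\top\bm\delta_t \;-\; \bV_t^{-1}\bh_t,
\end{equation*}
and I would bound the $\bV_t$-norm of each summand, using $\lVert\bV_t^{-1}\bm a\rVert_{\bV_t} = \lVert\bm a\rVert_{\bV_t^{-1}}$.

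The four terms match the four pieces of the bound. \textbf{(i)} For the bias term, $\bS_t \succcurlyeq 0$ gives $\bV_t \succcurlyeq \bH_t + \lambda\bI$, so $(\bH_t+\lambda\bI)\bV_t^{-1}(\bH_t+\lambda\bI) \preccurlyeq \bH_t + \lambda\bI \preccurlyeq (\lambda_{\max}+\lambda)\bI$; combined with $\lVert\widehat\btheta\rVert_2 \le B$ (Lemma~\ref{lem:norm_bound_proximal_function}) this term is at most $B\sqrt{\lambda_{\max}+\lambda}$, i.e.\ $\tfrac{B}{\rho}\sqrt{\lambda_{\max}+\rho^2}$ after dividing by $\rho$ (taking $\lambda = \rho^2$). \textbf{(iii)} For the approximation term, $\lVert\Phi(\bX_t)^\top\bm\delta_t\rVert_2 \le \sum_\tau \lvert(\bm\delta_t)_\tau\rvert\,\lVert\Phi(\x_\tau)\rVert_2 \le tB\epsilon$ together with $\lVert\bV_t^{-1}\rVert \le 1/\lambda_{\min}$ gives $\tfrac{tB\epsilon}{\rho\sqrt{\lambda_{\min}}}$. \textbf{(iv)} For the privacy term, $\bV_t \succcurlyeq \bH_t$ implies $\lVert\bh_t\rVert_{\bV_t^{-1}} \le \lVert\bh_t\rVert_{\bH_t^{-1}} \le \kappa$, giving $\kappa/\rho$. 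These three use only the PSD-ness of the $\bH_t$ and the $(\alpha/2T)$-accuracy of $\lambda_{\min},\lambda_{\max},\kappa$, which hold simultaneously for all $t \in [T]$ after a union bound over rounds.

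The main obstacle is term \textbf{(ii)}, the self-normalized noise term $\tfrac1\rho\lVert\Phi(\bX_t)^\top\bepsilon_t\rVert_{\bV_t^{-1}}$. Here I would invoke the time-uniform self-normalized martingale inequality (the method of mixtures). Since $\bH_t \succcurlyeq \lambda_{\min}\bI$ gives $\bV_t \succcurlyeq \bS_t + (\rho^2+\lambda_{\min})\bI$, monotonicity of the weighted norm in the inverse matrix lets me dominate the random $\bV_t^{-1}$ by the deterministically-regularised $(\bS_t + (\rho^2+\lambda_{\min})\bI)^{-1}$, and then apply the inequality with the \emph{fixed} regulariser $(\rho^2+\lambda_{\min})\bI$ at confidence $\alpha/2$. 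This yields, uniformly in $t$, $\tfrac1\rho\lVert\Phi(\bX_t)^\top\bepsilon_t\rVert_{\bV_t^{-1}} \le \sqrt{2\log\tfrac2\alpha + \log\tfrac{\det(\bS_t + (\rho^2+\lambda_{\min})\bI)}{\det((\rho^2+\lambda_{\min})\bI)}}$, matching the logarithmic term. The delicate points are verifying that $(\bepsilon_\tau)$ is a conditionally $\rho$-sub-Gaussian martingale-difference sequence adapted to the filtration generated by the (privatized) past, and that the deterministic-regulariser domination is legitimate inside the mixture argument. Collecting (i)--(iv), dividing by $\rho$, and taking a union bound over the accuracy event and the self-normalized event gives the stated high-probability bound, and the sum of the four bracketed terms is the definition of $\beta_t^{\sfrac12}$.
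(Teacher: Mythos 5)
Your proposal is correct and follows essentially the same route as the paper's proof: the same Cauchy--Schwarz reduction to $\lVert\widehat\btheta-\widetilde\btheta_t\rVert_{\bV_t}$, the same four-way decomposition into bias, self-normalized noise, approximation error, and privacy-perturbation terms (the paper merely groups the noise term inside its term $\Circled{A}$ before splitting), and the same key ingredients, namely the L\"owner orderings $\bV_t\succcurlyeq\bH_t+\lambda\bI\succcurlyeq\lambda_{\min}\bI$, the norm bound $\lVert\widehat\btheta\rVert_2\le B$, and the Abbasi-Yadkori self-normalized martingale inequality with the fixed regularizer $(\lambda+\lambda_{\min})\bI$.
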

\begin{proof}
We wish to bound $\widehat\mu(\x) - \widetilde\mu_t(\x) = \left\langle \widehat\btheta - \widetilde\btheta_t, \Phi(\x)\right\rangle$. First, we bound this inner product by a suitable matrix norm:
\begin{align}
    \left\langle \widehat\btheta - \widetilde\btheta_t, \Phi(\x)\right\rangle &\leq \left\lVert \widehat\btheta - \widetilde\btheta_t \right\rVert_{\bV_t} \left\lVert \Phi(\x) \right\rVert_{\bV_t^{-1}} \\
    &= \frac{\widetilde\sigma_t(\x)}{\rho^2}\left\lVert \widehat\btheta - \widetilde\btheta_t \right\rVert_{\bV_t} \\
    &= \frac{\widetilde\sigma_t(\x)}{\rho^2}\left\lVert \widehat\btheta - \bV_t^{-1}\Phi(\bX_t)^\top\by_t - \bV_t^{-1}\bh_t \right\rVert_{\bV_t} \\
    &\leq \frac{\widetilde\sigma_t(\x)}{\rho^2}\left(\left\lVert \widehat\btheta - \bV_t^{-1}\Phi(\bX_t)^\top\by_t \right\rVert_{\bV_t} + \left\lVert \bh_t \right\rVert_{\bV_t^{-1}}\right). 
\end{align}
Now, let $\bz_t = (\langle \widehat\btheta, \phi(\x_\tau)\rangle + \varepsilon_\tau)_{\tau<=t}$. By Lemma~\ref{lem:existence_proximal_space}, we know that for each $|z_t^{i} - y_t^{i}| \leq B\epsilon$, and therefore $\lVert\bz_t - \by_t \rVert_2 \leq B\epsilon\sqrt{t}$. Using this fact:
\begin{align}
    \left\langle \widehat\btheta - \widetilde\btheta_t, \Phi(\x)\right\rangle & \leq \frac{\widetilde\sigma_t(\x)}{\rho^2}\left(\underbrace{\left\lVert \widehat\btheta - \bV_t^{-1}\Phi(\bX_t)^\top\bz_t \right\rVert_{\bV_t}}_{\Circled{A}} + \underbrace{\left\lVert \Phi(\bX_t)^\top\left(\bz_t - \by_t\right) \right\rVert_{\bV_t^{-1}}}_{\Circled{B}} + \underbrace{\left\lVert \bh_t \right\rVert_{\bV_t^{-1}}}_{\Circled{C}}\right).
\end{align}
Controlling $\Circled{A}$: Note that $\bz_t = \Phi(\bX_t)\widehat\btheta + \bvarepsilon_t$, and therefore $\Phi(\bX_t)^\top\bz_t = \Phi(\bX_t)^\top\Phi(\bX_t)\widehat\btheta + \Phi(\bX_t)^\top\bvarepsilon_t = \bV_t\widehat\btheta - \left(\bH_t + \lambda\bI\right)\widehat\btheta + \Phi(\bX_t)^\top\bvarepsilon_t$. Replacing this in $\Circled{A}$,
\begin{align}
    \left\lVert \widehat\btheta - \bV_t^{-1}\Phi(\bX_t)^\top\bz_t \right\rVert_{\bV_t} &= \left\lVert \widehat\btheta - \bV_t^{-1}\left(\bV_t\widehat\btheta - \left(\bH_t + \lambda\bI\right)\widehat\btheta + \Phi(\bX_t)^\top\bvarepsilon_t\right) \right\rVert_{\bV_t} \\
    &= \left\lVert  \left(\bH_t + \lambda\bI\right)\widehat\btheta + \Phi(\bX_t)^\top\bvarepsilon_t \right\rVert_{\bV_t^{-1}} \\
    &\leq \left\lVert  \left(\bH_t + \lambda\bI\right)\widehat\btheta\right\rVert_{\bV_t^{-1}} + \left\lVert \Phi(\bX_t)^\top\bvarepsilon_t \right\rVert_{\bV_t^{-1}} \\
    &\leq \left\lVert  \left(\bH_t + \lambda\bI\right)\widehat\btheta\right\rVert_{\left(\bH_t + \lambda\bI\right)^{-1}} + \left\lVert \Phi(\bX_t)^\top\bvarepsilon_t \right\rVert_{\bV_t^{-1}} \tag{$\bV_t \succcurlyeq \bH_t + \lambda\bI$}\\
    &\leq \left\lVert \widehat\btheta\right\rVert_{\bH_t + \lambda\bI} + \left\lVert \Phi(\bX_t)^\top\bvarepsilon_t \right\rVert_{\bV_t^{-1}} \tag{$\bV_t \succcurlyeq \bH_t + \lambda\bI$}\\
    &\leq \lVert \widehat\btheta \rVert_2\sqrt{\lambda_{\max} + \rho^2} + \left\lVert \Phi(\bX_t)^\top\bvarepsilon_t \right\rVert_{\bV_t^{-1}} \tag{$\bH_t \preccurlyeq \lambda_{\max}\bI$, union bound $\forall t \in [T]$ w. p. $1-\zeta/2$}\\
    &\leq B\sqrt{(\lambda_{\max} + \rho^2)} + \left\lVert \Phi(\bX_t)^\top\bvarepsilon_t \right\rVert_{\bV_t^{-1}} \tag{Lemma~\ref{lem:norm_bound_proximal_function}}\\
    &\leq B\sqrt{(\lambda_{\max} + \rho^2)} + \left\lVert \Phi(\bX_t)^\top\bvarepsilon_t \right\rVert_{\left(\bS_t + (\lambda + \lambda_{\min})\bI\right)^{-1}} \tag{$\bH_t \succcurlyeq \lambda_{\min}\bI$}
\end{align}
To bound the second term on the RHS, we use the ``self-normalized bound for vector-valued martingales'' of Abbasi-Yadkori~\etal\citep{abbasi2011improved} (Theorem 1), which gives us that with probability $1-\alpha/2$ for all $t \in [T]$ simultaneously,
\begin{align}
    \left\lVert \Phi(\bX_t)^\top\bvarepsilon_t \right\rVert_{\left(\bS_t + (\lambda + \lambda_{\min})\bI\right)^{-1}} \leq \rho\sqrt{2\log\frac{2}{\alpha} + \log\frac{\det\left(\bS_t + \lambda + \lambda_{\min}\bI\right)}{\det\left(\lambda + \lambda_{\min}\bI\right)}}.
\end{align}
Controlling $\Circled{B}$:
\begin{align}
    \lVert \Phi(\bX_t)^\top(\bz_t - \by_t) \rVert_{\bV_t^{-1}} &\leq \left\lVert \Phi(\bX_t)^\top\left(\bz_t - \by_t\right) \right\rVert_{\bH_t^{-1}} \tag{$\bV_t \succcurlyeq \bH_t$} \\
    &\leq \frac{\lVert \Phi(\bX_t)^\top(\bz_t - \by_t) \rVert_2}{\sqrt{\lambda_{\min}}} \tag{$\bH_t \succcurlyeq \lambda_{\min}\bI$} \\
    &\leq \frac{\left\lVert \Phi(\bX_t) \right\rVert_2\left\lVert\bz_t - \by_t\right\rVert_2}{\sqrt{\lambda_{\min}}} \tag{Cauchy-Schwarz} \\
    &\leq \frac{tB\epsilon}{\sqrt{\lambda_{\min}}}. \tag{$\lVert \Phi(\x) \rVert_2 \leq 1$ and definition of $\bz_t$}
\end{align}
Controlling $\Circled{C}$: We can see that $\lVert \bh_t \rVert_{\bV_t^{-1}} \leq \lVert \bh_t \rVert_{\bH_t^{-1}}$ and with probability $1-\zeta/2$ (from the control of $\Circled{A}$), for all rounds this is bounded by $\kappa$. 

Combining all three, we obtain that with probability at least $1-\zeta/2$, for any $\x \in \cD$, and simultaneously for all $t \in [T]$:
\begin{align}
    \widehat\mu(\x) - \widetilde\mu_t(\x) \leq \widetilde\sigma_t(\x)\underbrace{\left(\frac{B}{\rho}\sqrt{(\lambda_{\max} + \rho^2)} +\sqrt{2\log\frac{2}{\alpha} + \log\frac{\det\left(\widetilde\bS_t + \lambda_{\min}\bI\right)}{\det\left((\lambda + \lambda_{\min})\bI\right)}} + \frac{tB\epsilon}{\rho\sqrt{\lambda_{\min}}} + \frac{\kappa}{\rho}\right)}_{\beta^{1/2}_t}.
\end{align}
\end{proof}
\begin{lemma}[Variance Approximation]
\label{lem:sigma_bound}
Let $\sigma_t(\x) = \bk_t(\x)^\top(\bK_t + (\lambda + \lambda_{\min})\bI)^{-1}\bk_t(\x)$ where the quantities in $\bk_t$ and $\bK_t$ are determined by $k$ via Equation~\ref{eqn:original_update}. Then for all $t$ and $\x \in \cD$, we have that,
\begin{equation*}
    \widetilde\sigma_t(\x) \leq \sigma_t(\x) + \frac{2t^2\sqrt{\epsilon}}{\rho}.
\end{equation*}
\end{lemma}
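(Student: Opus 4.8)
The plan is to move from the noisy, feature-space quantity $\widetilde\sigma_t$ to the true-kernel quantity $\sigma_t$ (the posterior standard deviation of $k$ with the shifted regularizer $\mu \triangleq \lambda + \lambda_{\min}$, as in Equation~\ref{eqn:original_update}) in three stages: first strip the privacy noise by a L\"owner-order comparison, then convert the feature-space expression into a posterior variance of the \emph{approximating} kernel $\tilde k(\x, \x') \triangleq \Phi(\x)^\top\Phi(\x')$ via the matrix push-through identity, and finally replace $\tilde k$ by the true kernel $k$ using the uniform approximation guarantee $\sup_{\x,\x' \in \cD}|k(\x,\x') - \tilde k(\x,\x')| \le \epsilon$.

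First I would write $\widetilde\sigma_t^2(\x) = \rho^2\Phi(\x)^\top\bV_t^{-1}\Phi(\x)$ with $\bV_t = \bS_t + \bH_t + \lambda\bI$. Since $\bH_t \succcurlyeq \lambda_{\min}\bI$ by accuracy (Definition~\ref{def:accurate}), we have $\bV_t \succcurlyeq \bS_t + \mu\bI$, so by operator monotonicity of the inverse $\bV_t^{-1} \preccurlyeq (\bS_t + \mu\bI)^{-1}$, giving $\widetilde\sigma_t^2(\x) \le \rho^2\Phi(\x)^\top(\bS_t + \mu\bI)^{-1}\Phi(\x)$. This eliminates the data-dependent noise $\bH_t$ and leaves a purely feature-space quadratic form with the shifted regularizer $\mu$. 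Next I would apply the push-through identity $\mu(\bS_t + \mu\bI)^{-1} = \bI - \Phi(\bX_t)^\top(\tilde\bK_t + \mu\bI)^{-1}\Phi(\bX_t)$, where $\tilde\bK_t = \Phi(\bX_t)\Phi(\bX_t)^\top$ is the Gram matrix of $\tilde k$ on $\bX_t$; contracting with $\Phi(\x)$ on both sides rewrites the feature quadratic form exactly as the $\tilde k$-posterior variance $\tilde k(\x,\x) - \tilde\bk_t(\x)^\top(\tilde\bK_t + \mu\bI)^{-1}\tilde\bk_t(\x)$, up to the factor $1/\mu$.

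It then remains to compare this $\tilde k$-posterior variance with the analogous expression built from the true kernel $k$, and this is the main obstacle. I would control three perturbations driven by $\lVert k - \tilde k \rVert_\infty \le \epsilon$: (i) the diagonal term $|k(\x,\x) - \tilde k(\x,\x)| \le \epsilon$; (ii) the vector, with $\lVert \tilde\bk_t(\x) - \bk_t(\x) \rVert_2 \le \sqrt{t}\,\epsilon$; and (iii) the matrix, with $\lVert \tilde\bK_t - \bK_t \rVert_2 \le \lVert \tilde\bK_t - \bK_t \rVert_F \le t\epsilon$, propagated through the resolvent via $(\tilde\bK_t + \mu\bI)^{-1} - (\bK_t + \mu\bI)^{-1} = (\tilde\bK_t + \mu\bI)^{-1}(\bK_t - \tilde\bK_t)(\bK_t + \mu\bI)^{-1}$ together with $\lVert (\cdot + \mu\bI)^{-1} \rVert \le 1/\mu$ and $\lVert \bk_t(\x) \rVert_2 \le \sqrt{t}$. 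Combining these bounds the difference of the two posterior-variance expressions by a quantity polynomial in $t$ and linear in $\epsilon$; the resolvent step in (iii) is what fixes the $t$-dependence, so keeping it tight (Frobenius rather than a cruder entrywise-times-dimension estimate) is the delicate point.

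Finally, having shown $\widetilde\sigma_t^2(\x) \le \sigma_t(\x)^2 + C(t)\epsilon$ for the resulting polynomial $C(t)$ (after accounting for the $\rho^2/\mu$ factor from the identity), I would take square roots and use subadditivity $\sqrt{a+b} \le \sqrt{a} + \sqrt{b}$ to obtain $\widetilde\sigma_t(\x) \le \sigma_t(\x) + \sqrt{C(t)\epsilon}$; it is precisely this square-root conversion that turns the $\epsilon$-error in variance into the $\sqrt{\epsilon}$-scaling of the statement, and crudely bounding $C(t)$ yields the claimed $2t^2\sqrt{\epsilon}/\rho$. The reason the analysis must terminate at the true $k$ rather than $\tilde k$ is that the downstream regret argument expresses $\sum_t \widetilde\sigma_t^2$ through the information gain $\gamma_T$ of the \emph{true} kernel (Definition~\ref{def:mig}), so every appearance of $\tilde k$ must be removed here.
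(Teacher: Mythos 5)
Your plan follows essentially the same route as the paper's proof: strip the perturbation $\bH_t$ by a L\"owner-order comparison using $\bH_t \succcurlyeq \lambda_{\min}\bI$, rewrite the resulting feature-space quadratic form as the posterior variance of the approximating kernel $\tilde k$ via the push-through identity, and then bound the gap between the $\tilde k$- and $k$-posterior variances through the Frobenius norm $\lVert \widetilde\bK_t - \bK_t\rVert_F \leq t\epsilon$ and a resolvent identity --- exactly the perturbation argument the paper outsources to Proposition~1 of \citet{mutny2018efficient}. Your explicit handling of the square-root conversion (variance error $C(t)\epsilon$ becoming $\sqrt{C(t)\epsilon}$ in standard deviation) is if anything cleaner than the paper's, which is notationally loose about $\widetilde\sigma_t$ versus $\widetilde\sigma_t^2$, but the substance is the same.
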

\begin{proof}
First note that $\widetilde\sigma_t(\x) = \rho\lVert \Phi(\x) \rVert_{\bV_t^{-1}} \leq \rho\lVert \Phi(\x) \rVert_{(\bG_t + (\lambda +\lambda_{\min})\bI)^{-1}} = 1 - \widetilde\bk_t(\x)^\top(\widetilde\bK_t + (\lambda + \lambda_{\min})\bI)^{-1}\widetilde\bk_t(\x)$. Now, we will bound the second quantity on the RHS by $\sigma_t^2(\x)$.
\begin{align}
\widetilde\sigma_t(\x) &\leq \sigma_t(\x) +  \bk_t(\x)^\top(\bK_t + (\lambda + \lambda_{\min})\bI)^{-1}\bk_t(\x) -  \widetilde\bk_t(\x)^\top(\widetilde\bK_t + (\lambda + \lambda_{\min})\bI)^{-1}\widetilde\bk_t(\x).
\end{align}
Following identically the steps in Proposition 1 (by approximating the difference in terms of the Frobenius norm of $\widetilde\bK_t - \bK_t$ in terms of $\epsilon$) of Mutny \etal\citep{mutny2018efficient} (appendix), we obtain the remainder of the proof.
\end{proof}
\begin{theorem}[Regret Bound, Theorem~\ref{thm:main_regret_bound} from main paper]
Let $k$ be a stationary kernel with the associated RKHS $\cH_k$, and $\cF_m$ be an RKHS with feature $\Phi(\cdot)$ of dimensionality $m$, that $\epsilon$-uniformly approximates every $f \in \cH_k$ when $\lVert f \rVert \leq B$. Furthermore, assume $\lambda_{\min}, \lambda_{\max}$ and $\kappa$ such that they are $(\alpha/2T)$
-accurate and all regularizers $\bH_t \succcurlyeq 0 \ \forall t \in [T]$ are PSD. Then when the noise has variance $\rho^2$, \texttt{GP-UCB} with noisy quadrature Fourier features (NQFF) obtains the following cumulative regret with probability at least $1-\alpha$:
\begin{align*}
    \fR(T) \leq 2\left(B\sqrt{(\frac{\lambda_{\max}}{\rho^2} + 1)} +\sqrt{2\log\frac{2}{\alpha} + m\log(1+\frac{T}{\lambda+\lambda_{\min}})} + \frac{TB\epsilon}{\rho\sqrt{\lambda_{\min}}} + \frac{\kappa}{\rho}\right)\left( \sqrt{T\gamma_T} + \frac{T^3\sqrt{\epsilon}}{3\rho}\right) + 2TB\epsilon. 
\end{align*}
\end{theorem}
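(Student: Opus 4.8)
The plan is to run the classical \texttt{GP-UCB} regret decomposition, but to carry it out against the surrogate function $\widehat\mu \in \cF_m$ rather than against $f$ directly, absorbing the mismatch through the $\epsilon$-approximation terms. Write $\x_t^\star = \argmax_{\x \in \cD_t} f(\x)$ for the per-round optimum and $r_t = f(\x_t^\star) - f(\x_t)$ for the instantaneous regret. First I would transfer the regret from $f$ to $\widehat\mu$: Lemma~\ref{lem:existence_proximal_space} gives $\sup_{\x \in \cD}|\widehat\mu(\x) - f(\x)| \le B\epsilon$, so $r_t \le \widehat\mu(\x_t^\star) - \widehat\mu(\x_t) + 2B\epsilon$, and this $2B\epsilon$ per round becomes the additive $2TB\epsilon$ term after summing.

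Next I would perform the confidence-ellipsoid sandwich. On the high-probability event of Theorem~\ref{thm:beta} (which holds uniformly over $\cD$ and over $t$), applying the bound at both $\x_t^\star$ and $\x_t$ gives $\widehat\mu(\x_t^\star) \le \widetilde\mu_t(\x_t^\star) + \beta_t^{1/2}\widetilde\sigma_t(\x_t^\star)$ and $\widetilde\mu_t(\x_t) \le \widehat\mu(\x_t) + \beta_t^{1/2}\widetilde\sigma_t(\x_t)$. Since $\x_t$ maximizes the acquisition function $\widetilde\mu_t(\cdot) + \beta_t^{1/2}\widetilde\sigma_t(\cdot)$, its upper confidence value dominates that at $\x_t^\star$; chaining the three inequalities yields the usual $\widehat\mu(\x_t^\star) - \widehat\mu(\x_t) \le 2\beta_t^{1/2}\widetilde\sigma_t(\x_t)$. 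Summing over $t$ and using that $\beta_t^{1/2}$ is nondecreasing (both its log-determinant term and its $tB\epsilon$ term grow with $t$) gives $\fR(T) \le 2\beta_T^{1/2}\sum_{t=1}^T \widetilde\sigma_t(\x_t) + 2TB\epsilon$.

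The third step bounds $\sum_t \widetilde\sigma_t(\x_t)$ by the true information gain. Here I would invoke the variance-approximation Lemma~\ref{lem:sigma_bound}, $\widetilde\sigma_t(\x) \le \sigma_t(\x) + 2t^2\sqrt\epsilon/\rho$, which trades the approximate variance for the true-kernel variance $\sigma_t$ at the cost of $\frac{2\sqrt\epsilon}{\rho}\sum_{t=1}^T t^2 = \cO(T^3\sqrt\epsilon/\rho)$, matching the $\frac{T^3\sqrt\epsilon}{3\rho}$ factor inside the parenthesis. For $\sum_t \sigma_t(\x_t)$ I would use Cauchy--Schwarz, $\sum_t \sigma_t(\x_t) \le \sqrt{T\sum_t \sigma_t^2(\x_t)}$, and then the standard identity linking the cumulative predictive variance of the true kernel to $\gamma_T$ (Definition~\ref{def:mig}), i.e.\ $\sum_t \sigma_t^2(\x_t) = \cO(\gamma_T)$, which yields the $\sqrt{T\gamma_T}$ term. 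It then remains to make $\beta_T^{1/2}$ explicit: the only non-closed-form quantity in Theorem~\ref{thm:beta} is the log-determinant $\log\frac{\det(\widetilde\bS_t + \lambda_{\min}\bI)}{\det((\lambda+\lambda_{\min})\bI)}$, which I would write as $\sum_{i=1}^m \log(1 + \mu_i/(\lambda+\lambda_{\min}))$ over the eigenvalues $\mu_i$ of $\bS_t$ and bound using $\mathrm{tr}(\bS_t)\le t\le T$ and $\sup_\x\lVert\Phi(\x)\rVert\le 1$ (so $\mu_i \le T$) by $m\log(1 + T/(\lambda+\lambda_{\min}))$, recovering exactly the first factor in the statement. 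Substituting both bounds back into the displayed inequality for $\fR(T)$ finishes the argument.

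I expect the main obstacle to be the third step: the information-gain identity must be applied to the \emph{true} kernel's predictive variance $\sigma_t$ rather than the approximate one $\widetilde\sigma_t$, which is precisely why the detour through Lemma~\ref{lem:sigma_bound} is necessary, and one must check that the regularization level $(\lambda+\lambda_{\min})$ appearing in $\sigma_t$ is compatible with the noise normalization used in Definition~\ref{def:mig} so that the constant folded into $\sum_t \sigma_t^2(\x_t) = \cO(\gamma_T)$ is legitimate. The remaining steps are routine \texttt{GP-UCB} bookkeeping.
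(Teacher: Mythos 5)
Your proposal follows essentially the same route as the paper's own proof: the same transfer from $f$ to $\widehat\mu$ via Lemma~\ref{lem:existence_proximal_space} (yielding the $2TB\epsilon$ term), the same two-sided application of Theorem~\ref{thm:beta} combined with the acquisition rule to get $2\beta_t^{1/2}\widetilde\sigma_t(\x_t)$, the same detour through Lemma~\ref{lem:sigma_bound} to swap $\widetilde\sigma_t$ for the true-kernel $\sigma_t$ at cost $\cO(T^3\sqrt\epsilon/\rho)$, Cauchy--Schwarz plus the information-gain identity for $\sqrt{T\gamma_T}$, and a Hadamard/eigenvalue bound on the log-determinant to obtain $m\log(1+T/(\lambda+\lambda_{\min}))$. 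The argument is correct and matches the paper step for step.
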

\begin{proof}
We first bound the instantaneous regret $r_t$ at any instant $t$.
\begin{align}
r_t &= f(\x_*) - f(\x_t) \\
&\leq \widehat\mu(\x_*) - \widehat\mu(\x_t) + 2B\epsilon \tag{Lemma~\ref{lem:existence_proximal_space}} \\
&\leq \beta_t\widetilde\sigma_t(\x_*) +\widetilde\mu_t(\x_*) - \widehat\mu(\x_t) + 2B\epsilon \tag{Theorem~\ref{thm:beta} ($\forall t \in [T]$ w.p. $\geq 1-\alpha/2$)} \\
&\leq \beta_t\widetilde\sigma_t(\x_t) +\widetilde\mu_t(\x_t) - \widehat\mu(\x_t) + 2B\epsilon \tag{Algorithm} \\
&\leq 2\beta_t\widetilde\sigma_t(\x_t) + 2B\epsilon \tag{Theorem~\ref{thm:beta} ($\forall t \in [T]$ w.p. $\geq 1-\alpha/2$)} \\
&\leq 2\beta_t\sigma_t(\x_t) + 2B\epsilon + \frac{2t^2\beta_t\sqrt{\epsilon}}{\rho}. \tag{Lemma~\ref{lem:sigma_bound}}
\end{align}
Now, we can sum over all rounds $t \in [T]$ to obtain the overall regret:
\begin{align}
    \fR(T) &= \sum_{t=1}^T r_t \leq 2\sum_{t=1}^T \left(\beta_t\sigma_t(\x_t) + B\epsilon + \frac{t^2\beta_t\sqrt{\epsilon}}{\rho}\right) \\
    &\leq 2\beta_T\left( \sum_{t=1}^T \sigma_t(\x_t) + \sqrt{\epsilon}\sum_{t=1}^T \frac{t^2}{\rho}\right) + 2TB\epsilon \\
    &\leq 2\beta_T\left( \sum_{t=1}^T \sigma_t(\x_t)\right) + \beta_T\frac{T^3\sqrt{\epsilon}}{3\rho} + 2TB\epsilon \\
    &\leq 2\beta_T\left( \sqrt{T\left(\sum_{t=1}^T \sigma^2_t(\x_t)\right)} + \frac{T^3\sqrt{\epsilon}}{3\rho}\right) + 2TB\epsilon \\
    &\leq 2\beta_T\left( \sqrt{T\log\left(\bI + (\lambda + \lambda_{\min})^{-1}\bK_T\right)} + \frac{T^3\sqrt{\epsilon}}{3\rho}\right) + 2TB\epsilon \tag{Lemma 5.4 of Srinivas~\etal\citep{srinivas2010gaussian} }\\
    &\leq 2\beta_T\left( \sqrt{T\gamma_T} + \frac{T^3\sqrt{\epsilon}}{3\rho}\right) + 2TB\epsilon \tag{Lemma 5.4 of Srinivas~\etal\citep{srinivas2010gaussian}}\\
    &= 2\left(\frac{B}{\rho}\sqrt{(\lambda_{\max} + \rho^2)} +\sqrt{2\log\frac{2}{\alpha} + \log\frac{\det\left(\bS_T + \lambda + \lambda_{\min}\bI\right)}{\det\left(\lambda+ \lambda_{\min}\bI\right)}} + \frac{TB\epsilon}{\rho\sqrt{\lambda_{\min}}} + \frac{\kappa}{\rho}\right)\left( \sqrt{T\gamma_T} + \frac{T^3\sqrt{\epsilon}}{3\rho}\right) + 2TB\epsilon
\end{align}
Further simplifying:
\begin{multline}
    = 2\left(\frac{B}{\rho}\sqrt{(\lambda_{\max} + \rho^2)} +\sqrt{2\log\frac{2}{\alpha} + \log\frac{\det\left(\Phi(\bX_T)^\top\Phi(\bX_T) + \lambda + \lambda_{\min}\bI\right)}{\det\left(\lambda + \lambda_{\min}\bI\right)}} + \frac{TB\epsilon}{\rho\sqrt{\lambda_{\min}}} + \frac{\kappa}{\rho}\right)\left( \sqrt{T\gamma_T} + \frac{T^3\sqrt{\epsilon}}{3\rho}\right) \\ + 2TB\epsilon 
\end{multline}
By the Hadamard inequality,
\begin{multline}
    \leq 2\left(\frac{B}{\rho}\sqrt{(\lambda_{\max} + \rho\textbf{}^2)} +\sqrt{2\log\frac{2}{\alpha} + \log\det\left(\frac{\text{diag}(\Phi(\bX_T)^\top\Phi(\bX_T))}{\lambda + \lambda_{\min}} + \bI\right)} + \frac{TB\epsilon}{\rho\sqrt{\lambda_{\min}}} + \frac{\kappa}{\rho}\right)\left( \sqrt{T\gamma_T} + \frac{T^3\sqrt{\epsilon}}{3\rho}\right) \\ + 2TB\epsilon 
\end{multline}
\begin{multline}
\label{eqn:regret_bound_final}
    \leq 2\left(B\sqrt{(\frac{\lambda_{\max}}{\rho^2} + 1)} +\sqrt{2\log\frac{2}{\alpha} + m\log(1+\frac{T}{\rho+\lambda_{\min}})} + \frac{TB\epsilon}{\rho\sqrt{\lambda_{\min}}} + \frac{\kappa}{\rho}\right)\left( \sqrt{T\gamma_T} + \frac{T^3\sqrt{\epsilon}}{3\rho}\right) + 2TB\epsilon. 
\end{multline}
\end{proof}
\begin{corollary}[Corollary~\ref{cor:regret_bound} from the main paper]
\label{cor:regret_bound_app}
Fix $m = (6\log T)^d$ and let $k$ be any kernel that obeys Assumption~\ref{ass:decomposability}. Algorithm~\ref{alg:main_algo} run with $m$-dimensional NQFF and noise $\bH_t, \bh_t$ that are $\alpha/2T$-accurate with constants $\lambda_{\max}, \lambda_{\min}$ and $\kappa$ obtains with probability at least $1-\zeta$, cumulative pseudoregret:
\begin{equation*}
\fR(T) = \cO\left(\sqrt{T\gamma_T}\left(\frac{B\sqrt{\lambda_{\max}}}{\rho} + \sqrt{\log\frac{1}{\zeta} + (\log T^6)^{d+1}} + \frac{\kappa}{\rho}\right)\right).
\end{equation*}
\end{corollary}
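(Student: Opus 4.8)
The plan is to instantiate the generic regret bound of Theorem~\ref{thm:main_regret_bound} with the quadrature Fourier feature error of Theorem~\ref{thm:qff_approx_error}, and then verify that the chosen number of features $m = (6\log T)^d$ drives every $\epsilon$-dependent contribution down to lower order. First I would fix $\bar m = 6\log T$, so that $m = \bar m^d = (6\log T)^d = (\log T^6)^d$, and feed this into Theorem~\ref{thm:qff_approx_error}. Writing the error bound as $d2^{d-1}\sqrt{\pi/2}\,\bigl(e/(4\nu^2\bar m)\bigr)^{\bar m}$, I would observe that once $T$ is large enough that $\bar m > e/(4\nu^2)$ (which simultaneously secures the phase-transition condition $m > \nu^{-2}$), the base of the exponent drops strictly below one and the quantity decays super-exponentially. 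Taking logarithms and substituting $\bar m = 6\log T$ gives $\log\epsilon \le 6\log T\,\bigl(1 - \log(4\nu^2) - \log(6\log T)\bigr)$, whose bracket tends to $-\infty$, so in particular $\epsilon = \cO(T^{-6})$.

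I expect this verification to be the main obstacle, since it is the only genuinely non-mechanical step: the super-exponential factor $\bar m^{-\bar m}$ must be controlled carefully, and the conclusion is asymptotic, with the suppressed constants depending on $d$ and $\nu$ and the bound requiring $T$ above a kernel-dependent threshold. Only a \emph{coarse} estimate is needed, however, as the regret expression tolerates any $\epsilon = \cO(T^{-6})$.

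With $\epsilon = \cO(T^{-6})$ in hand, I would substitute into the final regret expression (Equation~\ref{eqn:regret_bound_final}) and dispose of each $\epsilon$-term. The additive term $2TB\epsilon = \cO(T^{-5})$ and the confidence-width contribution $TB\epsilon/(\rho\sqrt{\lambda_{\min}}) = \cO(T^{-5})$ both vanish, while the surrogate-to-true variance correction $T^3\sqrt\epsilon/(3\rho) = \cO(1/\rho)$ is dominated by the surviving $\sqrt{T\gamma_T}$ factor. For the remaining confidence width I would use the log-determinant bound $m\log\bigl(1 + T/(\rho+\lambda_{\min})\bigr)$; since $m = (\log T^6)^d$ and $\log\bigl(1 + T/(\rho+\lambda_{\min})\bigr) = \cO(\log T)$, this is $\cO\bigl((\log T^6)^{d+1}\bigr)$, producing the $\sqrt{\log(1/\zeta) + (\log T^6)^{d+1}}$ term, while the leading $B\sqrt{\lambda_{\max}/\rho^2 + 1}$ is absorbed into $\cO(B\sqrt{\lambda_{\max}}/\rho)$. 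Collecting terms, the width $\beta_T^{1/2}$ reduces to $\cO\bigl(B\sqrt{\lambda_{\max}}/\rho + \sqrt{\log(1/\zeta) + (\log T^6)^{d+1}} + \kappa/\rho\bigr)$, and multiplying by $\sqrt{T\gamma_T}$ yields the stated bound, with the probability $1-\zeta$ inherited directly from Theorem~\ref{thm:main_regret_bound}.
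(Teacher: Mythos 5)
Your proposal follows exactly the paper's own route: set $\bar m = 6\log T$ in Theorem~\ref{thm:qff_approx_error} to get $\epsilon = \cO(T^{-6})$, substitute into the final regret expression of Theorem~\ref{thm:main_regret_bound}, check that every $\epsilon$-dependent term (including $T^3\sqrt{\epsilon}/\rho = \cO(1)$) is lower order, and bound the log-determinant by $m\log(1+T/(\lambda+\lambda_{\min})) = \cO((\log T^6)^{d+1})$. You in fact supply more detail than the paper does on why $\bar m^{-\bar m}(e/(4\nu^2))^{\bar m}$ falls below $T^{-6}$ for large $T$, which the paper asserts in one line; the argument is correct.
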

\begin{proof}
From Theorem~\ref{thm:qff_approx_error}, when $\bar{m} = 6 \log T$, and $\nu^2 < m$ we have that for any $\x, \by \in \cD,  \underset{\x, \by \in \cD}{\sup} |k(\x, \by) - \Phi(\x)^\top\Phi(\by)| \leq  \frac{C_1}{T^6} = \epsilon$ for some constant $C_1$. Replacing this in Equation~\ref{eqn:regret_bound_final} we see that the terms dependent on $\epsilon$ are $o(1)$, giving us the final result.
\end{proof}
\subsection{Privacy Bounds}
\begin{lemma}[Lemma~\ref{lem:jdp_noise} of the main paper]
Let $P_{m+1}$ be an $m-$dimensional multivariate Gaussian distribution with identity covariance $\sigma^2_{\alpha, \beta}\bI$. If $\sigma_{\alpha, \beta} > 16(1+\ceil{\log_2 T})(1+B^2+\rho^2(\ln 4T/\beta))\ln(4/\beta)^2\alpha^2$, then Alg.~\ref{alg:main_algo} with \textsc{Privatizer} following Alg.~\ref{alg:jdp_algo} is $(\alpha, \beta)-$jointly differentially private.
\end{lemma}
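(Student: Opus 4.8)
The plan is to reduce the joint differential privacy of Definition~\ref{def:jdp} to an ordinary DP statement about the sequence of privatized statistics produced by the tree, and then to calibrate the Gaussian noise via a sensitivity analysis combined with a composition argument over the tree depth. Fix an index $t$ and two $t$-neighboring sequences $S, S'$ agreeing everywhere except in $(\cD_t, y_t)$. For every $t' \neq t$ the action $\x_{t'}$ output by Algorithm~\ref{alg:main_algo} is a \emph{deterministic} function of the common decision set $\cD_{t'}$ (identical under $S$ and $S'$) and of the privatized pair $(\widetilde{\bS}_{t'}, \widetilde{\bu}_{t'})$, both of which are read off the top $m\times m$ block and last column of the privatized version of $\bN_{t'}$ maintained by the tree. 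By the post-processing property of DP, it therefore suffices to show that the entire collection of noisy tree nodes (equivalently, the joint law of $(\widetilde{\bS}_{t'}, \widetilde{\bu}_{t'})_{t'\neq t}$) is $(\alpha,\beta)$-DP with respect to replacing the single leaf $\bn_t$; JDP for the induced actions then follows by a billboard-style argument as in~\citet{shariff2018differentially}.

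The second step bounds the sensitivity of one leaf. Writing $\bv_t = [\Phi(\x_t)^\top, y_t]^\top \in \bbR^{m+1}$, the leaf is the rank-one matrix $\bn_t = \bv_t \bv_t^\top$, so its Frobenius norm equals $\lVert \bv_t\rVert_2^2 = \lVert \Phi(\x_t)\rVert_2^2 + y_t^2 \leq 1 + y_t^2$, using the normalization $\sup_{\x}\lVert \Phi(\x)\rVert_2 \leq 1$. Since $f \in \cH_k$ with $\lVert f\rVert_k \leq B$ and $k(\x,\x)\leq 1$, the reward $y_t = f(\x_t)+\varepsilon_t$ has sub-Gaussian noise and mean $|f(\x_t)| \leq B$; applying Lemma~\ref{lem:subgaussian_max_mean} at confidence $1-\beta/(4T)$ and taking a union bound over the $T$ rounds yields, with probability at least $1-\beta/4$, the simultaneous bound $y_t^2 \leq B^2 + 2\rho^2\ln(8T/\beta)$ and hence $\lVert \bn_t\rVert_F \leq 1 + B^2 + 2\rho^2\ln(8T/\beta) =: \Delta$. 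This event controls the replacement sensitivity of every leaf; note the symmetrization $\tfrac{1}{\sqrt2}(\bupsilon_t + \bupsilon_t^\top)$ in Algorithm~\ref{alg:jdp_algo} preserves this scale while forcing the perturbation to be a symmetric matrix.

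The final step calibrates $\sigma_{\alpha,\beta}$. Replacing one leaf alters the stored value of at most $n = 1 + \ceil{\log_2 T}$ nodes on the root-to-leaf path, each perturbed by an independent draw from $P_{m+1}$, so the mechanism is a composition of $n$ Gaussian mechanisms of sensitivity $\Delta$. I would phrase the composition through zero-concentrated DP~\citep{bun2016concentrated}: a single node is $\Delta^2/(2\sigma_{\alpha,\beta}^2)$-zCDP, the $n$-fold composition is $n\Delta^2/(2\sigma_{\alpha,\beta}^2)$-zCDP, and converting back to $(\alpha,\beta)$-DP (equivalently, requiring each node to satisfy $(\alpha/\sqrt{8n\ln(2/\beta)}, \beta/2)$-DP under advanced composition) yields a constraint of the form $\sigma_{\alpha,\beta}^2 \gtrsim n\Delta^2\ln(1/\beta)/\alpha^2$. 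Because $\Delta$ holds only on the high-probability event from the previous step, I would invoke the probabilistic-sensitivity Gaussian mechanism of Lemma~\ref{lem:prob_gaussian_mechanism} to fold the residual $\beta/4$ failure probability into the overall $\beta$ budget. Substituting $\Delta = 1 + B^2 + 2\rho^2\ln(8T/\beta)$ and solving recovers the stated scaling on $\sigma_{\alpha,\beta}$ (up to the bookkeeping and parametrization differences between the main-paper and appendix noise statements).

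The main obstacle I anticipate is the JDP reduction itself rather than the arithmetic: one must argue carefully that the feedback loop does not leak information beyond the tree. The selected action $\x_t$ enters only through its own leaf $\bn_t$, which is exactly the leaf being changed, so it never contaminates the canonical nodes answering queries $t' \leq t$ (those subtrees lie within $[1,t'-1]$ and exclude leaf $t$), and for $t' > t$ the changed leaf enters solely through the $n$ perturbed path nodes already accounted for in the composition. Making precise that, conditioned on the realized $\x_{t'}$ for $t' \neq t$, the dependence on $(\cD_t, y_t)$ factors entirely through the DP-protected tree is the delicate part, and is where the post-processing/billboard structure of JDP — as opposed to full DP, which would be destroyed by releasing $\x_t$ — does the essential work.
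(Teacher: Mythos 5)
Your proposal is correct and follows essentially the same route as the paper's proof: bound $|y_t|^2$ via the sub-Gaussian tail bound (Lemma~\ref{lem:subgaussian_max_mean}) on a $1-\beta/4$ event, obtain the per-leaf $L_2$ sensitivity $1+B^2+2\rho^2\ln(8T/\beta)$, compose over the $n = 1+\ceil{\log_2 T}$ path nodes via zCDP so that each node needs $(\alpha/\sqrt{8n\ln(2/\beta)},\beta/2)$-DP, and absorb the sensitivity failure probability with the probabilistic Gaussian mechanism (Lemma~\ref{lem:prob_gaussian_mechanism}). Your added care in spelling out the post-processing/billboard reduction from JDP to DP of the tree outputs is a point the paper leaves implicit, but it is elaboration of the same argument rather than a different one.
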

\begin{proof} 
First note that since $y_t$ is sub-Gaussian with mean at most $B$ (since $\lVert f \rVert_k \leq B$), we have from Lemma~\ref{lem:subgaussian_max_mean}, that with probability at least $1-\beta/4$, for each $(y_\tau)_{\tau \in [T]}$ simultaneously, 
\begin{align}
    |y_t|^2 \leq B^2 + 2\rho^2\log\frac{8T}{\beta}.
\end{align} 
The overall sensitivity $\Delta$ of each datum is then given by $\lVert \Phi(\x_t) \rVert_2 + |y_t|_2$, therefore $\Delta^2 \leq 1 + B^2 + 2\rho^2\log\frac{8T}{\beta}$ with probability at least $1-\beta/4$. Note now that we have the sum of at most $n = 1+ \ceil{\log T}$ noise variables. Therefore, to ensure $(\alpha, \beta)-$joint DP, we must ensure each noise variable preserves $(\alpha/\sqrt{8n\ln(2/\beta)}, \beta/2)$ privacy (based on zero-Concentrated DP~\citep{bun2016concentrated}). 

If $\sigma^2_{\alpha, \beta} > 16n(1 + B^2 + 2\rho^2\log\frac{8T}{\beta})^2\ln(\frac{10}{\beta})^2$ then we have by Lemma~\ref{lem:prob_gaussian_mechanism} that each noise term preserves $(\alpha/\sqrt{8n\ln(2/\beta)}, \beta/2)$-DP, proving the result.
\end{proof}

\begin{lemma}[Local JDP implies JDP, Lemma~\ref{lem:local_implies_jdp} from the main paper]
Any $(\alpha, \beta)-$local JDP algorithm $\cA$ protects $(\alpha,\beta)$-JDP for each $t \in [T]$.
\end{lemma}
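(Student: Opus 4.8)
The plan is to reduce the JDP guarantee to the post-processing invariance of local differential privacy. Fix an index $t$ and two $t$-neighboring sequences $S = (\cD_i, y_i)_{i=1}^T$ and $S' = (\cD'_i, y'_i)_{i=1}^T$ that agree on every coordinate except the $t$th. I must control the distribution of the released actions $\cA(S) \in \cS_{-t}$, i.e. the collection $(\x_\tau)_{\tau \neq t}$, as the single datum $(\cD_t, y_t)$ is perturbed to $(\cD'_t, y'_t)$.

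First I would make explicit the local structure that the Local JDP model imposes (cf. Algorithm~\ref{alg:ldp_algo}): the raw pair $(\cD_t, y_t)$ enters the global computation only through a local randomizer $g_t$ — in our case the perturbed update $(\widetilde\bS_{t+1}, \widetilde\bu_{t+1})$ returned to the \textsc{Server} — and by the definition of Local JDP this message is $(\alpha, \beta)$-locally DP. Since the algorithm is sequential, the actions $\x_\tau$ with $\tau < t$ are fixed before round $t$ and hence do not depend on $(\cD_t, y_t)$ at all; the action $\x_t$ is excluded from $\cS_{-t}$ and may legitimately use the raw datum; and every action $\x_\tau$ with $\tau > t$ depends on $(\cD_t, y_t)$ only through the local message $g_t(\cD_t, y_t)$ together with the data at indices $\neq t$, which coincide under $S$ and $S'$.

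The key step is then to view the entire joint output $(\x_\tau)_{\tau \neq t}$ as one (randomized) post-processing map applied to the single local message $g_t(\cD_t, y_t)$, holding the remaining inputs fixed. Because $g_t$ is $(\alpha, \beta)$-locally DP and $(\alpha,\beta)$-DP is closed under post-processing, the induced law of $(\x_\tau)_{\tau\neq t}$ satisfies $\bbP(\cA(S) \in \cS_{-t}) \leq e^{\alpha}\bbP(\cA(S') \in \cS_{-t}) + \beta$ for every measurable $\cS_{-t}$, which is exactly the $(\alpha, \beta)$-JDP condition at index $t$. Quantifying over all $t \in [T]$ completes the argument.

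The main obstacle to watch for is that JDP is a statement about the \emph{joint} law of all non-$t$ actions, whereas the naive reading of Local JDP only yields a per-coordinate bound $\bbP(\x_\tau \in \cS_\tau) \leq e^\alpha \bbP(\x'_\tau \in \cS_\tau) + \beta$. Invoking composition across the $\tau \neq t$ coordinates would degrade the parameters $(\alpha, \beta)$ by a factor of roughly $T$. The resolution — and the crux of the proof — is that all of these coordinates are downstream of the \emph{same} local message $g_t(\cD_t, y_t)$, so the correct tool is post-processing (no composition), which preserves $(\alpha, \beta)$ exactly. I would therefore take care to phrase the argument in terms of a single post-processing of $g_t$ rather than as a union over coordinates.
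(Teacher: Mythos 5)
Your proof is correct, and it takes a somewhat different (and in my view cleaner) route than the paper's. The paper factorizes the joint probability of $(\x_\tau)_{\tau\neq t}$ into a product of per-round conditionals, cancels the $\tau<t$ terms (which genuinely do not depend on the round-$t$ datum), and then collapses the entire product over $\tau>t$ to the single $\tau=t+1$ factor before bounding that ratio; the justification for discarding the $\tau>t+1$ factors is left implicit, since those actions still depend on $(\cD_t,y_t)$ through the cumulative statistics $\widetilde\bS_\tau,\widetilde\bu_\tau$ unless one conditions on the privatized message $(\widetilde\bS_{t+1},\widetilde\bu_{t+1})$ rather than merely on the realized actions. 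Your argument makes exactly that missing point explicit: the round-$t$ datum enters the transcript only through the single locally randomized message $g_t(\cD_t,y_t)$, and everything downstream --- the entire joint law of $(\x_\tau)_{\tau>t}$, not just $\x_{t+1}$ --- is a post-processing of that one message together with data that coincides under $S$ and $S'$, so the $(\alpha,\beta)$ guarantee transfers without composition. Your closing remark about why composition across coordinates is the wrong tool is precisely the crux, and your formulation also sidesteps the paper's slightly loose final step of bounding a ratio of probabilities by $e^\alpha+\beta$ rather than stating the additive inequality $\bbP(\cA(S)\in\cS_{-t})\leq e^\alpha\bbP(\cA(S')\in\cS_{-t})+\beta$ directly. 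The one thing to be careful about is that your argument relies on the mechanism having the asserted local structure (data touched only via $g_t$), which is a property of Algorithm~\ref{alg:ldp_algo} rather than a literal consequence of Definition~\ref{def:ldp} as written; it is worth stating that assumption explicitly, as you do.
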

\begin{proof}
Note that the output of the algorithm at any instant $t$ is merely $\x_t$, and the input data at any instant $t$ is $(\x_\tau, y_\tau)_{\tau < t}$. Therefore, we need to bound the ratio of probabilities for any two $t$-neighboring sequences $S$ and $S'$, for all $\tau \neq t$ and subset $\cS_{-t} = \cS_1 \times \cS_2 \times ... \times \cS_{t-1} \times \cS_{t+1} \times ... \times \cS_T \subset \cD_1 \times \cD_2 \times ... \times \cD_{t-1} \times \cD_{t+1} \times ... \times \cD_T$. Consider the actions taken under $S$ as $\x_1, ..., \x_{t-1}, \x_{t+1}, ..., \x_T$ and under $S'$ as $\x'_1, ..., \x'_{t-1}, \x'_{t+1}, ..., \x'_T$. Then, we have,
\begin{align}
    \frac{\bbP(\x_1, ..., \x_{t-1}, \x_{t+1}, ..., \x_T \in \cS_{-t})}{\bbP(\x'_1, ..., \x'_{t-1}, \x'_{t+1}, ..., \x'_T \in \cS_{-t})} &= \frac{\prod_{\tau = 1, \tau \neq t}^T \bbP(\x_\tau \in \cS_\tau | (\x_i, y_i)_{i=1}^\tau \in \cS_{<t})}{\prod_{\tau = 1, \tau \neq t}^T \bbP(\x'_\tau \in \cS_\tau | (\x'_i, y'_i)_{i=1}^\tau \in \cS_{<t})} \\ \intertext{Since $S$ and $S'$ only differ in $\cD_t$ and for identical subsequences up to instant $t$, $\cA$ is not stochastic. Therefore,}
    &= \frac{\prod_{\tau>t} \bbP(\x_\tau \in \cS_\tau | (\x_i, y_i)_{i=1}^\tau \in \cS_{<\tau})}{\prod_{\tau>t} \bbP(\x'_\tau \in \cS_\tau | (\x'_i, y'_i)_{i=1}^\tau \in \cS_{<\tau})}\\
    &= \frac{ \bbP(\x_{t+1} \in \cS_{t+1} | (\x_i, y_i)_{i=1}^{t+1} \in \cS_{<t+1})}{ \bbP(\x'_{t+1}\in \cS_{t+1} | (\x'_i, y'_i)_{i=1}^{t+1} \in \cS_{<t+1})} \\
    &\leq e^\alpha + \beta
\end{align}
Here, the last inequality follows from the fact that $S$ and $S'$ differ only in $\cD_t$ and that $\cA$ is $(\alpha, \beta)-$local JDP for all $t$.
\end{proof}
  \else
  \fi
\fi
\end{document}